\documentclass{article} %

\usepackage[accepted]{icml2026}
\makeatletter
\renewcommand{\ICML@preprint}{}
\makeatother
   
\usepackage{hyperref} 
\usepackage{url}  
\usepackage[utf8]{inputenc} %
\usepackage[T1]{fontenc}    %
\usepackage{hyperref}       %
\usepackage{url}            %
\usepackage{placeins}
\usepackage{needspace} 
\usepackage{booktabs}       %
\usepackage{amsfonts}       %
\usepackage{amsmath}  
\usepackage{amssymb}        %
\usepackage{subcaption}

\makeatletter
\@ifundefined{ifbuildbibliography}{\newif\ifbuildbibliography\buildbibliographytrue}{}
\@ifundefined{ifbuildappendix}{\newif\ifbuildappendix\buildappendixtrue}{}
\makeatother

\newlength{\tblw}
\newlength{\cellh}
\newlength{\hsep}
\newlength{\vpad}
\usepackage{tcolorbox} %
\usepackage{xcolor} %
\usepackage{nicefrac}       %
\usepackage[nopatch=eqnum]{microtype}      %
\usepackage{amsthm}
\usepackage{thmtools}
\usepackage{thm-restate}
\theoremstyle{definition} 
\usepackage[inline]{enumitem} 
\usepackage{asymptote}  
\usepackage{wrapfig}
\usepackage[
  sort&compress
]{cleveref}

\crefname{assumption}{Assumption}{Assumptions}
\Crefname{assumption}{Assumption}{Assumptions}
\crefname{figure}{Fig.}{Figs.}
\Crefname{figure}{Fig.}{Figs.}
\crefname{table}{Tab.}{Tabs.}
\Crefname{table}{Tab.}{Tabs.}
\crefformat{equation}{#2(#1)#3}
\Crefformat{equation}{#2(#1)#3}
\crefrangeformat{equation}{#3(#1)#4--#5(#2)#6}
\crefmultiformat{equation}{(#2#1#3)}{ and (#2#1#3)}{, (#2#1#3)}{ and (#2#1#3)}
\usepackage{blkarray}
\usepackage{transparent}
\usepackage{float}
\usepackage{stfloats}  %
\usepackage{graphicx} 
\definecolor{niceblue}{HTML}{245CCB} %
\hypersetup{
  colorlinks=true,
  linkcolor=niceblue,
  citecolor=niceblue,
  urlcolor=niceblue,
  filecolor=niceblue,
  breaklinks=true
}
\usepackage{tcolorbox}
\newtcolorbox{takeawaybox}[1][]{colback=orange!7, colframe=orange!30!black!20!, 
  boxrule=0.1pt, arc=1pt, left=2pt, right=2pt, top=2pt, bottom=2pt, fontupper=\small}
\usepackage{etoc}
\usepackage{titletoc}
\newtheoremstyle{mydefinition}
  {5pt}%
  {2pt}%
  {\normalfont}%
  {}%
  {\bfseries}%
  {.}%
  {.5em}%
  {}%

\newcommand{\Nij}[3]{N_{#2,#3}(#1)}

\newcommand{\R}{\mathbb{R}}

\theoremstyle{mydefinition}
\newtheorem{definition}{Definition} 
  
\theoremstyle{definition}
\newtheorem{theorem}{Theorem} 
\newtheorem{lemma}{Lemma}
\newtheorem{proposition}{Proposition}
\newtheorem{corollary}{Corollary}

\newtheorem{remark}{Remark}
\newtheorem{example}{Example}  

\newtheorem{desideratum}{Desideratum}
\crefname{theorem}{Theorem}{Theorems}
\crefname{lemma}{Lemma}{Lemmas}
\crefname{proposition}{Proposition}{Propositions}
\crefname{corollary}{Corollary}{Corollaries}
\crefname{assumption}{Assumption}{Assumptions}
\crefname{claim}{Claim}{Claims}
\crefname{axiom}{Axiom}{Axioms}
\crefname{definition}{Definition}{Definitions}
\crefname{example}{Example}{Examples}
\crefname{desideratum}{Desideratum}{Desiderata}

\crefname{remark}{Remark}{Remarks}

\newtcolorbox{problembox}[2][]{%
  colback=gray!5,
  colframe=gray!50,
  fontupper=\normalsize,
  fonttitle=\bfseries,
  coltitle=black,
  title=Problem #2,
  boxsep=2pt,
  left=5pt,
  right=5pt, 
  top=3pt,
  bottom=3pt,
  boxrule=0.5pt,
  #1
}  
          
\newcounter{problemcounter}

\newcommand{\deleted}[1]{} 

\usepackage{tikz}
\usetikzlibrary{positioning,calc,shapes.geometric,arrows.meta,matrix}

\usepackage{mwe} %
\usetikzlibrary{arrows.meta,positioning,calc,shapes.geometric,matrix,fit,backgrounds}

\DeclareRobustCommand{\sm}[1]{\begingroup\small\shortstack[k]{#1}\endgroup}
\DeclareRobustCommand{\ss}[1]{\begingroup\scriptsize\shortstack[k]{#1}\endgroup}

\usepackage[utf8]{inputenc}
\usepackage{graphicx} 
\usepackage{bm} %
\usepackage{xspace}
\usepackage{color}
\usepackage{soul}

\usepackage{enumitem} %
\usepackage{caption} %
\usepackage{tabularx} %
\usepackage{booktabs} %
\usepackage{fancyhdr} %

\usepackage{float}
\usepackage{booktabs}
\usepackage{multirow}
\usepackage{enumitem}
\usepackage{listings}
\usepackage{todonotes}

\newcommand{\ba}{\bm{a}}\newcommand{\bA}{\bm{A}}
\newcommand{\bb}{\bm{b}}
\newcommand{\bc}{\bm{c}}
\newcommand{\bd}{\bm{d}}
\newcommand{\be}{\bm{e}}

\newcommand{\bP}{\bm{P}}
\newcommand{\bq}{\bm{q}}
\newcommand{\br}{\bm{r}}
\newcommand{\bs}{\bm{s}}

\newcommand{\bu}{\bm{u}}\newcommand{\bU}{\bm{U}}
\newcommand{\bv}{\bm{v}}
\newcommand{\bw}{\bm{w}}\newcommand{\bW}{\bm{W}}
\newcommand{\bx}{\bm{x}}\newcommand{\bX}{\bm{X}}
\newcommand{\by}{\bm{y}}\newcommand{\bY}{\bm{Y}}
\newcommand{\bz}{\bm{z}}\newcommand{\bZ}{\bm{Z}}

\newcommand{\bepsilon}{\bm{\epsilon}}

\newcommand{\blambda}{\bm{\lambda}}

\newcommand{\cC}{\mathcal{C}}
\newcommand{\cD}{\mathcal{D}}

\newcommand{\cN}{\mathcal{N}}

\newcommand{\cR}{\mathcal{R}}
\newcommand{\cS}{\mathcal{S}}
\newcommand{\cT}{\mathcal{T}}

\newcommand{\cX}{\mathcal{X}}
\newcommand{\cY}{\mathcal{Y}}
\newcommand{\cZ}{\mathcal{Z}}

\DeclareMathOperator*{\argmax}{argmax}

\definecolor{darkred}{rgb}{0.8,0,0}
\definecolor{darkgreen}{rgb}{0,0.5,0}
\definecolor{darkblue}{rgb}{0,0,0.7}
\definecolor{darkpurple}{rgb}{0.4,0,0.6}
\definecolor{lightgray}{rgb}{0.92,0.92,0.92}
\definecolor{lightpink}{rgb}{1.00,0.90,0.90}

\usepackage{enumitem}

\newenvironment{nalign}{
    \begin{equation}
    \begin{aligned}
}{
    \end{aligned}
    \end{equation}
    \ignorespacesafterend
}

\newcount\colveccount
\newcommand*\colvec[1]{
        \global\colveccount#1
        \begin{pmatrix}
        \colvecnext
}
\def\colvecnext#1{
        #1
        \global\advance\colveccount-1
        \ifnum\colveccount>0
                \\
                \expandafter\colvecnext
        \else
                \end{pmatrix}
        \fi
}

\usepackage{xifthen}

\usepackage{graphicx}{}

\definecolor{myorange}{RGB}{255,165,0}

\newtcolorbox{highlightbox}[2][]{%
  colback=myorange!10, %
  colframe=myorange!75!black, %
  fonttitle=\bfseries,
  coltitle=black,
  title=#2,
  sharp corners, %
  boxsep=5pt, %
  left=10pt,   %
  right=10pt,  %
  top=5pt,     %
  bottom=5pt,  %
  #1
}

\icmltitlerunning{Compositional Generalization Requires Linear, Orthogonal Representations in Vision Embedding Models}

\begin{document} 

\twocolumn[
  \icmltitle{Compositional Generalization Requires Linear, Orthogonal Representations in Vision Embedding Models}

  \begin{icmlauthorlist}
    \icmlauthor{Arnas Uselis}{tuebingen}
    \icmlauthor{Andrea Dittadi}{helmholtz,tum}
    \icmlauthor{Seong Joon Oh}{tuebingen}
  \end{icmlauthorlist}
  
  \icmlaffiliation{tuebingen}{T{\"u}bingen AI Center, University of T{\"u}bingen}
  \icmlaffiliation{helmholtz}{Helmholtz Munich}
  \icmlaffiliation{tum}{Technical University of Munich}
  
  \icmlcorrespondingauthor{Arnas Uselis}{arnas.uselis@uni-tuebingen.de}
  
  \icmlkeywords{Compositional Generalization, Representation Learning, CLIP}
  
  \vskip 0.3in
]
\printAffiliationsAndNotice{} %
  
\begin{abstract} 
  \looseness=-1
  Compositional generalization, the ability to recognize familiar parts in novel contexts, is a defining property of intelligent systems, yet modern models, despite massive training sets, see only a tiny fraction of the combinatorial input space. We ask what structure representations {must} have to support generalization to unseen combinations. We formalize three desiderata (divisibility, transferability, stability) and show they impose necessary geometric constraints under standard training: representations must decompose linearly into per-concept components, orthogonal across concepts. This grounds the Linear Representation Hypothesis as a necessary consequence of compositional generalization, and yields dimension bounds linking the number of composable concepts to embedding geometry. Empirically, across CLIP, SigLIP, and DINO, we find partial linear factorization with low-rank near-orthogonal per-concept factors, and the degree of this structure correlates with compositional generalization on unseen combinations. As models continue to scale, these conditions predict the geometry they may converge to. Code: \url{https://github.com/oshapio/necessary-compositionality}.

\end{abstract}

\vspace{-2.5em} 
\section{Introduction} 
\begin{figure}[t]
  \centering  
  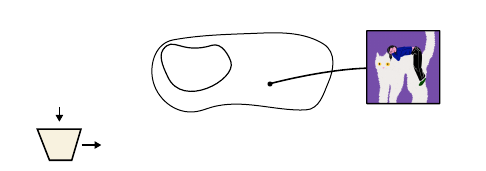
  \caption{\small    
  \textbf{What enables compositional generalization in vision-language embedding models?} 
Training data contain common configurations (left: a cat on a person) but lack rare ones (right: a person on a cat).
Yet the same text-based queries, e.g. ``A photo of a person'', must work on both, even when the latter was never seen during training.
We investigate what properties encoder $f$ must satisfy for such transfer to succeed.
} 
  \label{fig:assumptions}           
    \vspace{-1.3em}
\end{figure}     
Modern vision systems are trained on a tiny, biased subset of a combinatorial space of visual concepts, like objects, attributes, relations in different contexts. Despite this, we expect them to perform well in the wild on novel recombinations of familiar concepts, an expectation tied to the view that systematic generalization, the ability to recombine learned constituents, is a hallmark of intelligence \citep{fodor-pylyshyn-1988}. Yet a large body of empirical work shows that even high-performing neural models often struggle with systematicity when train/test combinations mismatch \citep{lake-baroni-2018,keysers-2020-cfq,hupkes-2020-survey,uselis2025scaling}. At the same time, large vision-language models such as CLIP \citep{radford2021clip} and its variants are trained on web-scale datasets (e.g., LAION-400M \citep{schuhmann2021laion}) and achieve impressive zero-shot transfer on many tasks \citep{radford2021clip,zhai2022lit}. However, these systems often fail when test images contain unusual combinations of familiar concepts \citep{xu2022promptinglargepretrainedvisionlanguage, bao2024promptinglanguageinformeddistributioncompositional, thrush2022winogroundprobingvisionlanguage, abbasi2024decipheringrolerepresentationdisentanglement, yuksekgonul2023visionlanguagemodelsbehavelike, ma2023crepevisionlanguagefoundationmodels}. 
\Cref{fig:assumptions} illustrates this tension for CLIP-like architectures: an image encoder $f$ produces embeddings on which linear classifiers (e.g.\ from the text encoder) classify concepts, but the training data $\mathcal{X}_{\text{train}}$ covers only common compositions from the full data space $\cX$, while models must answer the same queries correctly on rare compositions from $\cX \setminus \mathcal{X}_{\text{train}}$. Given how rarely, if at all, such compositions appear in training, we ask: \textit{assuming that compositional generalization succeeds, what properties must the representations have to accommodate it?}

We argue for \emph{non-negotiable, model-agnostic} properties that any neural-network-based system claiming compositional generalization must satisfy. We state three desiderata: \emph{divisibility}, \emph{transferability}, and \emph{stability}. These desiderata formalize that (i) all parts of an input should be accessible to a simple readout; (ii) readouts trained on a tiny but diverse subset should transfer to unseen combinations; and (iii) training on any valid subset should yield robust generalization. Our scope is the common setting where predictions are linear in the embedding $f$: CLIP-style zero-shot classifiers, linear probing, and cases where a fixed non-linear head is folded into the encoder.

Our key finding is that these desiderata \emph{necessitate} a specific geometry: \emph{linear factorization} with \emph{near-orthogonal} concept directions. This establishes what any model \emph{must} achieve to compositionally generalize under standard training, providing a concrete target for future design. Moreover, it offers theoretical grounding for the \emph{Linear Representation Hypothesis} \cite{mikolov2013efficientestimationwordrepresentations, elhage2022superposition} -- the linear structure widely observed in neural representations is a \emph{necessary consequence} of compositional generalization.

Our contributions are: 
(1) \textbf{Defining desiderata.} We define three desiderata: \emph{divisibility}, \emph{transferability}, \emph{stability}, and formalize compositional generalization in their terms. 
(2) \textbf{Structural necessity.} Under gradient descent with the cross-entropy loss, these desiderata imply \emph{linear factorization}: embeddings decompose into per-concept sums with orthogonal difference directions.
(3) \textbf{Empirical grounding.} Across CLIP and SigLIP families, we find strong evidence of factorization, near-orthogonality, low-rank per-concept geometry, and correlation with compositional generalization accuracy, suggesting that the current state-of-the-art vision models are converging to the necessary geometric properties.

\begin{figure*}[!t]
  \centering
  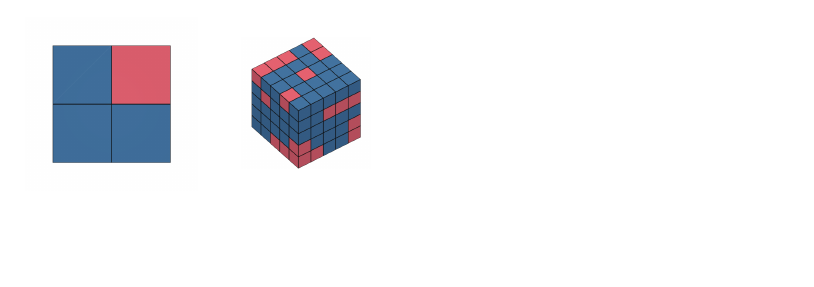
  \vspace{-1em}
  \caption{\small \textbf{Interpreting previous works' sampling designs $T$ and validity rules $R$.} Each panel shows a concept space as a grid (2D or 3D), with blue cells denoting training combinations and red cells denoting held-out test combinations.}
  \label{fig:related_work_samplings}
  \vspace{-1.5em}
\end{figure*}

\vspace{-0.5em}
\section{Related work}

\textbf{Compositional generalization.}
Prior work establishes \emph{sufficient} conditions for compositional generalization under specific assumptions on data-generating processes or representations \citep{wiedemer2023firstprinciples, mahajan2025crm, uselis2025scaling}.
In contrast, we study \emph{necessary} conditions: we ask what must be true of a model's embeddings \emph{if} it transfers from a restricted subset to the full space under our desiderata.

\textbf{Geometry of learned representations.}
Empirical studies document linear subspaces in VLMs \citep{trager2023linear_spaces}, the Linear Representation Hypothesis in LLMs \citep{park2023lrh}, and near-orthogonal feature encodings \citep{elhage2022superposition}. In parallel to our work, \citep{fel2025rabbithulltaskrelevantconcepts} map task-relevant concept families in DINOv2 and propose a Minkowski-sum hypothesis for token space, highlighting that strong geometric structure may arise from architecture as well as from downstream demands. In contrast, we show that under practice-driven desiderata and standard training, linearity and orthogonality are \emph{necessary}, not merely observed.

\textbf{Disentangled and object-centric representations.}
This line of work proposes desiderata and training schemes, with mixed evidence linking these to compositional generalization \citep{eastwood2018a, montero2021the, dittadi2022generalization}.
We ask the complementary question: if a model \emph{does} generalize compositionally, what must its embeddings satisfy?
See \Cref{sec:related-work-ext} for a detailed discussion.

\vspace{-0.5em}
\section{Setup: A framework for compositionality}

We begin by detailing key desiderata for embedding models that contend to be compositionally generalizing, motivated from a practical perspective: models must distinguish any concept combination, transfer from a subset of the concept space to the full space, and do so robustly across valid subsets.

\vspace{-0.5em}
\subsection{Setup: concept spaces and data collection process} \label{sec:defining_compositionality}

We interpret the world as a product of concepts: any input $\bx_{\bc} \in \cX$ (e.g., an image) has an associated tuple of concepts $\bc \in \cC$, describing its constituent parts and properties.   This is a reasonable way to describe a large portion of the world. For example, current large-scale datasets (e.g., image-caption pairs) provide noisy natural-language descriptions that can be decomposed into \textit{discrete} concept values. Clearly, a single concept tuple cannot capture all aspects of the world, e.g. how attributes bind to objects or how different objects relate spatially. Still, an intelligent system should at least be able to tell apart basic concepts (such as objects and their attributes), even without modeling their relations. In other words, concept spaces may not capture the full compositional structure of the world, but any model of the world must involve them in some form.  Importantly, we do not assume \textit{how} the concept values are distributed (e.g. being independent), only \textit{what} they represent.

\begin{definition}[Concept space] \label{def:concept_space}
  Suppose we have $k$ concepts, where concept $i$ takes $n_i$ possible values. Let $\cC_i=[n_i]$ for $i\in[k]$. The \emph{concept space} is the Cartesian product
  \begin{equation}\label{eq:concept_space} \small
      \cC = \cC_1 \times \cC_2 \times \cdots \times \cC_k. \vspace{-0.5em}
  \end{equation} 
  In most of the paper we use $n_i=n$ for all $i$, so $\cC=[n]^k$ and $|\cC|=n^k$. We index inputs by concept tuples: for each $\bc\in\cC$ we assume an associated $\bx_{\bc}\in\cX$ (e.g., a natural image) realizing $\bc$.
\end{definition}

Data-related components for compositional generalization involve three notions: (1) the total variation of the data, (2) the concepts we aim to learn and expect the model to capture, and (3) the data that is actually collectible. We capture (1) by the concept space $\cC$ (\Cref{def:concept_space}). For (2), the targets that we aim to capture are all concepts and their values in $\cC$, reflecting that foundation models attempt to align with all present concepts.\footnote{More generally, the world may involve additional factors beyond the concepts represented in $\cC$. Our framework does not require $\cC$ to be exhaustive: $\cC$ can be viewed as the subset of concepts we model explicitly, with any remaining variation treated as nuisance.} For (3), we formalize collectability constraints through a validity class that specifies which training supports are valid, indicating which concept combinations may appear in training--taking into account cases that many combinations are not collectible (e.g. ``a pink cat on the moon'' isn't common). We formalize this below.

\textbf{Considering data collection.} We are interested in models that support efficient compositional generalization from a subset of the concept space. To formalize this notion, we specify a validity class $\mathcal{T} \subseteq 2^{\mathcal{C}}$ of valid training sets, where $2^{\mathcal C}$ denotes the power set of $\mathcal C$, and a validity rule $R : 2^{\mathcal{C}} \to \{0,1\}$ that specifies whether a given training set is valid. This setup captures the natural question of which training sets we use and for which we expect generalization.

\begin{definition}[Training support, validity class, and training dataset]\label{def:validity_class}
  Let $\mathcal{C}$ be the concept space. 
  A \emph{training support} is any subset $T \subseteq \mathcal{C}$.
  \emph{Validity class} is a collection $\mathcal{T} \subseteq 2^{\mathcal{C}}$ whose members are
  called \emph{valid training sets}. 
  The class $\mathcal{T}$ specifies which training sets are observable.  
  Validity class $\mathcal{T}$ is specified by a \textit{validity rule} 
  $R:2^{\mathcal{C}}\!\to\!\{0,1\}$ through $
  \mathcal{T} = \{ T \subseteq \mathcal{C} : R(T)=1 \}$.
  A \textit{training dataset} for a training set $T$ is $D_T = \{(\bx_{\bc}, \bc) : \bc \in T\}$.
  \end{definition}
We note that there are many validity rules used in practice. 
For example, if we can collect any subset of size $N < |\cC|$, 
then $R(T) = 1$ whenever $|T| = N$. 
\Cref{fig:related_work_samplings} illustrates common choices: 
\citet{mahajan2025crm} use training supports that cover every concept value; 
\citet{schott2022same_domain} use random samples covering $70\%$ of all combinations; 
\citet{kempf2025clip_when} specify a small set of allowed supports; 
and \citet{uselis2025scaling} use supports whose joint marginals cover at least two values per concept. 
Note that these validity rules apply to concept supports rather than individual datapoints.

\vspace{-0.5em}
\subsection{Compositional representations and models}

Given the concept space and the training supports, we now make precise how we expect models to learn. 

\textbf{Scope of models.} We study embedding models: these cover modern foundation models like CLIP and SigLIP \citep{tschannen2025siglip2multilingualvisionlanguage,zhai2023sigmoidlosslanguageimage}, supervised-learning models, self-supervised models like DINO \citep{caron2021emergingpropertiesselfsupervisedvision, simeoni2025dinov3}. At inference the models we study are \emph{non‑contextual}: the representation of an input depends only on that input (no dependence on other test examples, prompts, or the batch). Formally, the encoder is a map $f:\cX\to\cZ$, with $\bz=f(\bx)$ (optionally $\ell_2$‑normalized). 

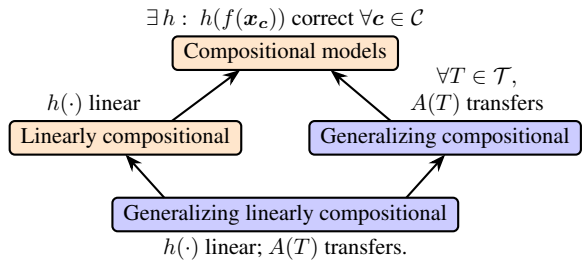
\begin{figure}[t]
  \centering
  \begin{tikzpicture}[scale=0.85, every node/.style={transform shape},
    >=Stealth,
    box/.style={
      draw=black, fill=orange!20,
      rounded corners=2pt, thick,
      inner xsep=5pt, inner ysep=3pt, outer sep=0pt,
       align=center
    },
    gbox/.style={
      draw=black, fill=blue!20,
      rounded corners=2pt, thick,
      inner xsep=5pt, inner ysep=3pt, outer sep=0pt,
       align=center
    },
    edge/.style={->, thick, draw=black},
    every label/.style={ fill=white, rounded corners=1pt, inner sep=1pt},
    label distance=0.5mm
  ]
  \node[box,
        label={[align=center]above:$\exists\,h:\ h(f(\bx_{\bc}))\ \text{correct }\forall \bc \in\mathcal C$}
  ] (C) {Compositional models};
  
  \node[box, anchor=north,
        label={[align=center, xshift=-5mm]above:$h(\cdot)$ linear}
  ] (L) at ($(C.south)+(-25mm,-8mm)$) {Linearly compositional};
  
  \node[gbox, anchor=north,
    label={[align=center, xshift=5mm]above:{$\forall T\in\mathcal T,$\\ $A(T)$ transfers}}
  ] (GC) at ($(C.south)+(25mm,-8mm)$) {Generalizing compositional};
  
  \node[gbox, anchor=north,
        label={[align=center]below:$h(\cdot)$ linear;\ $A(T)$ transfers.}
  ] (GL) at ($(C.south)+(0,-20mm)$) {Generalizing linearly compositional};
  
  \draw[edge] ($(L.north west)!0.70!(L.north east)$)  -- ($(C.south west)!0.30!(C.south east)$);
  \draw[edge] ($(GC.north west)!0.30!(GC.north east)$) -- ($(C.south west)!0.70!(C.south east)$);
  \draw[edge] ($(GL.north west)!0.15!(GL.north east)$) -- (L.south);
  \draw[edge] ($(GL.north west)!0.85!(GL.north east)$) -- (GC.south);
  
  \end{tikzpicture}
  \caption{\small \textbf{Relationship between (generalizing) compositional models.} Divisibility (orange) and transferability (blue) requirements.}
  \label{fig:comparison-models-classes}
  \vspace{-2em}
\end{figure}

\textbf{Readout class (linear vs.\ non‑linear).}
Usually, encoders $f$ are associated with either a downstream or readout model $h$ that takes $\bz=f(\bx)$ and outputs per‑concept logits $h(\bz)\in\R^{k\times n}$ using argmax classification rule (see \Cref{def:linearly_compositional_model}). This covers zero‑shot use of text features as linear classifiers, standard linear probing, and the affine last layer in most neural classifiers. If $h$ is non‑linear in a neural network, we absorb the layers preceding the linear layer $g$ into the encoder ($\tilde f=g\circ f$) and analyze the resulting affine layer. 
{The definition below keeps the readout $h$ general to allow future extensions beyond linear heads, but all results in this paper consider the linear case, without such restrictions, a high-capacity readout could make any injective encoder appear compositional by memorization.}
\begin{definition}[(Linearly) compositional model]\label{def:linearly_compositional_model}
An encoder $f:\cX\to\cZ$ is \emph{compositional w.r.t.\ $\cC$} if there exists $h:\cZ\to\R^{k\times n}$ such that, for all $\bc\in\cC$ and all $i\in[k]$,
\begin{equation} \small
c_i = \argmax_{j\in[n]} h\!\left(f(\bx_{\bc})\right)_{i,j}.
\end{equation}
It is \emph{linearly compositional} if $h$ can be taken affine $h(\bz)=\bW \bz+\bb$. We refer to $h$ as the \emph{readout}.
\end{definition}

\vspace{-0.5em}
\subsection{Compositional generalization and desiderata} \label{sec:desiderata_compgen}

Given the ingredients (concept space $\cC$, encoder $f$, and training-support family $\cT$), we now define a learning rule $A$ and state three desiderata for compositional generalization: \emph{divisibility}, \emph{transferability}, and \emph{stability}. {We emphasize that these desiderata are on the NN-based models that exhibit generalization, as defined below, not on the representations, as studied in disentangled representation learning.}

\textbf{Considering training.} We view a learning algorithm as a map
\[ \small
A:\ D_T\ \mapsto\ h_T,\qquad h_T\in\mathcal H\subseteq\{h:\cZ\to\R^{k\times n}\}, \vspace{-0.5em}
\]
from a dataset supported on $T\subseteq\cC$ to a readout in a chosen hypothesis class. In practice, $A$ is typically (stochastic) gradient descent on a cross-entropy or contrastive objective, covering contrastive vision-language encoders (e.g., CLIP, SigLIP), standard supervised classifiers, and linear probes on self-supervised vision encoders like DINO.

\textbf{Desiderata for compositional generalization.} Suppose we train a downstream readout $h_T=A(D_T)$ on some $T\in\cT$. What should $h_T$ satisfy? We argue for three practically-motivated properties.

First, every combination of concept values should be \emph{classifiable} by the readout: for any $\bc\in\cC$, the corresponding region of the representation space of $f$ is nonempty: there exists at least one $\bz$ that $h_T$ assigns the concept values $\bc$. Otherwise, generalization to the full grid is impossible. We refer to this property as \emph{Divisibility}.

\begin{desideratum}[Divisibility]\label{des:axiom_of_divisibility}
A compositional model (\Cref{def:linearly_compositional_model}) can only exist if the readout has sufficient capacity to represent every concept combination. That is, for a readout $h:\cZ \to \R^{k\times n}$, every concept tuple must have a nonempty decision region:
{\small
\begin{equation}
\begin{split}
&\forall\,\bc\in\cC:\ \bigcap_{i=1}^k \cR_{i,c_i}(h)\neq\varnothing,\\
&\text{where }\ \cR_{ij}(h)=\{\bz \in\cZ : \arg\max_{j'\in[n]} h(\bz)_{i,j'}=j\}.
\end{split}
\end{equation}}
\vspace{-2em}
\end{desideratum}
Divisibility is necessary but not sufficient: it guarantees that the space is divisible, but does not imply that the readout will be correct. We therefore ask that, for every training set, the learned readout transfers to the full grid; we call this \emph{Transferability}.

\begin{desideratum}[Transferability]\label{des:axiom_of_transferability}
For every $T\in\cT$, the trained readout $h_T = A(D_T)$ correctly classifies all possible combinations of the concept space:
{\small
\begin{equation}
\forall\,\bc\in\cC,\ \forall\,i\in[k]:\quad
\argmax_{j\in[n]} h_T\!\big(f(\bx_{\bc})\big)_{i,j}=c_i.
\vspace{-1em}
\end{equation}}
\end{desideratum} 
{Note that Transferability implies Divisibility. We state Divisibility explicitly because it highlights a capacity requirement: the embedding space must be able to represent all concept combinations.}

Third, consider readouts learned from different valid supports $T\in\cT$. Divisibility and Transferability do not say anything about the behavior of the classification decisions. Intuitively: if an input depicts a ``cat'', retraining on another valid support should not change the preference towards ``dog''. We refer to this as \emph{Stability}.

\begin{desideratum}[Stability]\label{des:axiom_of_stability}
For any $T,T'\in\cT$, any point $\bx \in \cX$, and any $i\in[k]$, the per-concept posteriors agree across supports:
{\small
\begin{equation}
\begin{split}
p_i^{(T)}(j\mid f(\bx))&=\frac{\exp(h_T(f(\bx))_{i,j})}{\sum_{k=1}^{n}\exp(h_T(f(\bx))_{i,k})},\\
p_i^{(T)}(\cdot\mid f(\bx))&=p_i^{(T')}(\cdot\mid f(\bx)).
\end{split} 
\end{equation}} 
\end{desideratum}
\vspace{-0.5em}
We view Stability as an idealization: once the training support is sufficiently informative, retraining on any other valid support should not change the model's per-concept predictions (and ideally its calibrated posteriors). In practice, this may only hold approximately; relaxing Stability to allow small distributional deviations across supports is a natural direction for future work. We discuss the role of Stability and what can fail without it in \Cref{sec:on-stability}.

\textbf{Defining compositional generalization.}
We now tie the ingredients into a single tuple $\Pi=(f,\mathcal H,A,\mathcal T)$, 
which we use as the object that specifies the entire compositional-generalization setup: the encoder, the readout class, the learning rule, and the family of valid training supports. We specify compositional generalization as a process of learning readouts that generalize over \emph{all} $T\in\cT$ and satisfy \Cref{des:axiom_of_divisibility,des:axiom_of_transferability,des:axiom_of_stability}.

\begin{definition}[Compositional generalization]\label{def:scg}
$\Pi=(f,\mathcal H,A,\mathcal T)$ \emph{exhibits compositional generalization} if, for every $T\in\cT$ with $h_T=A(D_T)$, Divisibility (\Cref{des:axiom_of_divisibility}) and Transferability (\Cref{des:axiom_of_transferability}) hold on the full grid, and the posteriors are Stable across valid retrainings (\Cref{des:axiom_of_stability}) for all pairs $T,T'\in\cT$. We say that $\Pi$ exhibits \textit{linear compositional generalization} when the readout hypothesis class is linear. 
\end{definition}

We illustrate the relationship between (linear) models and their compositional counterparts in \Cref{fig:comparison-models-classes}. In practice one could consider relaxed or average-case variants; however, we here are interested in ``ideal'' representations that support compositional generalization under any data sample.

\vspace{-0.5em}
\subsection{Instantiating the framework with CLIP}\label{sec:instantiating_the_framework}

We instantiate the framework in the dual-encoder, vision-language setting in the style of CLIP models: images and texts are embedded into a shared space and trained to align, with captions acting as noisy descriptions of concept tuples. 

\textbf{Encoders.}
Let $f:\cX\to\cZ$ be the image encoder and $g:\cY\to\cZ$ the text encoder. At inference both are typically $\ell_2$-normalized so that inner products are cosine similarities: $\|f(\bx)\|=\|g(\by)\|=1$.

\textbf{Prompts as linear probes.} Zero-shot classification uses text features as linear classifiers. For each concept $i\in[k]$ and value $j\in[n]$, we can choose a prompt $p_{i,j}$ (e.g., ``a photo of a cat'') and define a probe vector $\bw_{i,j}\ :=\ g(p_{i,j})\ \in\ \cZ$. 
Stacking these gives a readout
\[ \small 
h(\bz)\ =\ \left [\ \,\bw_{i,j}^{\!\intercal}\bz\ \right]_{i,j}\ \in\ \R^{k\times n}.  \vspace{-1em} 
\]

\begin{figure}
  \centering 
  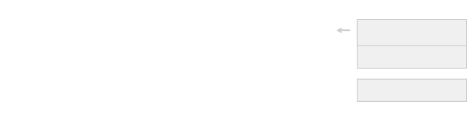 
  \caption{\small \textbf{Instantiating the framework with CLIP-like embedding models for analysis.} }
  \label{fig:clip_example}
\vspace{-1.5em}
\end{figure}

Here $f$ is the representation model, while $h$ is a linear readout whose weights come from the text encoder. Training in CLIP-like models can be viewed as learning a readout model where the \emph{same} set of text-derived probes serves across many images; prompts often mention only parts of an image, so the system is implicitly asked to recognize objects and attributes regardless of which other concepts co-occur. We illustrate this process in \Cref{fig:clip_example}; for a high-level schematic, see \Cref{fig:assumptions} in the introduction.

\textbf{The question we study.} 
Given a concept space $\cC$, what structure must $\bz=f(\bx_{\bc})$ have so that a single set of probes $\{\bw_{i,j}\}$ (whether fixed by $g$ or learned as linear probes) satisfies our desiderata (\Cref{des:axiom_of_divisibility,des:axiom_of_transferability,des:axiom_of_stability}) on the full $\cC$? In other words, what constraints does zero-shot, probe-based classification place on the geometry of image representations in order to support compositional generalization?

\vspace{-0.5em}
\section{Implications of compositionality on representations}\label{sec:implications_compgen}

We now ask what our desiderata imply for representations in common training regimes, both as necessary constraints and as sufficient conditions, and what minimum embedding dimension they require.

\vspace{-0.5em}
\subsection{Geometry of $f$ under common training settings}\label{sec:gdce-geometry}
 
We instantiate $A$ as gradient descent on the binary cross-entropy (logistic) loss. As in \Cref{sec:instantiating_the_framework}, the readout $h$ is linear in the embedding $\bz_{\bc}=f(\bx_{\bc})$ (using either text-encoder-derived probes or learned linear heads). Under these assumptions, the representation space $\cZ$ must exhibit both linearity and cross-concept orthogonality. This conclusion holds under at least two validity regimes: (1) when more than half of all possible combinations are observed, $|T|=2^{k-1}+1$, and (2) when only a small, carefully chosen set of datapoints is observed, $|T|=1+k$. 
  
\begin{restatable}[Compositional generalization implies linear factorization]{proposition}{BinaryFactorization}\label{prop:binary_factorization}
  Let $\Pi=(f,\mathcal H,A,\mathcal T)$ be the tuple instantiated in \Cref{def:scg}, with linear heads $\mathcal H$ and $A$ given by GD+CE. 
  Suppose the validity rule is one of: (i) $R(T)=1$ iff $|T|=2^{k-1}+1$ (random sampling, more than half the grid), or (ii) $R(T)=1$ iff $|T|=1+k$ and $T$ consists of an anchor point together with $k$ points each differing from it in exactly one concept. Assume \Cref{des:axiom_of_divisibility,des:axiom_of_transferability,des:axiom_of_stability} are satisfied.  Then, under the binary grid $\cC_i=\{0,1\}$ with {$\cZ=\{\bz_{\bc}:\bc\in[2]^k\}\subset\R^d$}, there exist $\{\bu_{i,0},\bu_{i,1}\in\R^d\}_{i=1}^k$ such that for every $\bc\in[2]^k$ the following holds:
  \begin{enumerate}   \vspace{-1em}
  \item \textit{(Linearity)} { $\bz_{\bc}$} $= \sum_{i=1}^k \bu_{i,c_i}$.
  \item \textit{(Cross-concept orthogonality)}  $ (\bu_{i, 1} - \bu_{i, 0}) \perp (\bu_{j, 1} - \bu_{j, 0})$ for all $i, j \in [k]$ with $(i \neq j)$.   \vspace{-1em} 
  \end{enumerate}
\end{restatable}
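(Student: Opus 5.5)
The key tool will be the implicit bias of gradient descent on the logistic loss. For linearly separable data, the direction of the iterates converges to the hard-margin SVM solution (Soudry et al.). In the binary grid, concept $i$ corresponds to a single binary classifier. Its learned readout on a support $T$ is then, up to scaling, the max-margin separator $\bw_i^{(T)}$ of $\{\bz_{\bc}:\bc\in T\}$ labelled by $c_i$.

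Stability says the per-concept posteriors $p_i^{(T)}(\cdot\mid \bz)$ agree for all $\bz$ and all $T,T'\in\cT$. For a binary softmax this is equality of the logit difference $\bw_i^{(T)\top}\bz+b_i^{(T)}$ as an affine function on the span of $\cZ$. So I will first argue that every valid $T$ yields the same affine classifier $(\bw_i,b_i)$ for concept $i$; in the asymptotic GD regime I compare normalized directions.

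The plan is to show that this single classifier must be the max-margin separator for many different supports simultaneously, and then to read off geometry from the KKT conditions. In regime (ii), take the anchor $\bc^0=\mathbf 0$ and the $k$ neighbours $\be_j$. Re-anchoring at any grid point $\bc^*$ is also valid, so all such "stars" are valid supports. For concept $i$, the star around $\bc^*$ has exactly one point on the opposite side of the decision boundary, namely $\bc^*\oplus \be_i$. The SVM support vectors are therefore pinned to the pair $(\bz_{\bc^*},\bz_{\bc^*\oplus\be_i})$ plus whichever same-side points are active.

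Transferability forces all grid points to be correctly classified. The common $\bw_i$ must equal, up to scaling, the max-margin solution for every star. I will use this to show that the margin $\bw_i^\top(\bz_{\bc\oplus\be_i}-\bz_{\bc})$ is the same constant $m_i$ for every $\bc$. The argument is that the SVM solution on a star whose only opposite-class point is $\bc^*\oplus\be_i$ puts that point and its nearest same-class point both on the margin. Moreover, the $\bw_i$-direction must be parallel to $\bz_{\bc\oplus\be_i}-\bz_{\bc}$ projected appropriately.

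This step is the main obstacle. I expect to need the claim that in the star the optimal $\bw_i$ is proportional to $\bz_{\bc^*\oplus\be_i}-\bz_{\bc^*}$, which holds when the other $k-1$ star points are inactive. Showing they are inactive, or handling the case where they are, is the delicate part. I would first try the two-point SVM identity $\bw\propto \bz_+-\bz_-$ and check feasibility of the remaining points using transferability on the full grid.

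Granting that step, $\bz_{\bc\oplus\be_i}-\bz_{\bc}=\pm\boldsymbol\delta_i$ for a vector $\boldsymbol\delta_i$ independent of $\bc$, the sign set by $c_i$. Telescoping from $\bz_{\mathbf 0}$ along coordinate flips gives $\bz_{\bc}=\bz_{\mathbf 0}+\sum_i c_i\boldsymbol\delta_i$. Setting $\bu_{i,0}=\bz_{\mathbf 0}/k$ and $\bu_{i,1}=\bu_{i,0}+\boldsymbol\delta_i$ yields linearity.

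For orthogonality, $\bw_i\propto\boldsymbol\delta_i$, and stability plus correct classification require $\bw_i^\top\bz$ to be invariant when concept $j\neq i$ flips. Otherwise the posterior of concept $i$ would depend on $c_j$. Flipping $c_j$ changes the logit by $\bw_i^\top\boldsymbol\delta_j$, and the star for concept $i$ anchored at different $c_j$ gives the same margin only if this change is zero. Hence $\boldsymbol\delta_i^\top\boldsymbol\delta_j=0$.

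For regime (i), the plan is to reduce to (ii). Any support with $2^{k-1}+1$ points contains, by pigeonhole, an edge in every direction $i$, i.e.\ a pair differing only in $c_i$. I would choose valid supports whose SVM support vectors are exactly such edges, then repeat the same pinning argument and conclude identically.
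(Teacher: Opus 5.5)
Your outline follows the same route as the paper: implicit bias to the hard-margin SVM, Stability fixing a common $(\bw_i,b_i)$, one-flip supports, then telescoping and orthogonality. The trouble is that the step you flag as the main obstacle is the heart of the argument, and it remains open. Your suggested route would not close it. Transferability only gives $\operatorname{sign}(\bw_i^\top\bz_{\bc}+b_i)=y_i(\bc)$ for the common classifier. It says nothing about the other star points meeting the margin constraint of the two-point separator of $(\bz_{\bc^*},\bz_{\bc^*\oplus\be_i})$, which is a different classifier until you already know the conclusion.

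Two ideas are missing. The first is invariance. Every grid point $\bc$ is the unique minority point for concept $i$ in the valid star centred at $\bc\oplus\be_i$, hence a tight support vector there. Since the separator is common to all stars, $\bw_i^\top\bz_{\bc}+b_i=y_i(\bc)$ holds exactly on the whole grid, for every $i$. This gives your constant margin immediately. It is also what orthogonality really rests on: Stability compares supports, not inputs, so by itself it does not forbid the concept-$i$ posterior from depending on $c_j$.

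The second is to evaluate the other concepts' classifiers at the SVM closest points. Let $\by_1$ be the point of the majority hull of the star at $\bc$ closest to $\bz_{\bc\oplus\be_i}$, and $\by_2$ the point of the majority hull of the star at $\bc\oplus\be_i$ closest to $\bz_{\bc}$. With the common $\bw_i$, SVM geometry gives $\by_1-\bz_{\bc\oplus\be_i}=\bz_{\bc}-\by_2=y_i(\bc)\tfrac{2}{\|\bw_i\|^2}\bw_i$. Write $\by_1=\lambda_i\bz_{\bc}+\sum_{j\neq i}\lambda_j\bz_{\bc\oplus\be_j}$ and $\by_2=\gamma_i\bz_{\bc\oplus\be_i}+\sum_{j\neq i}\gamma_j\bz_{\bc\oplus\be_i\oplus\be_j}$ as convex combinations, and apply $\bw_j^\top(\cdot)$ for $j\neq i$. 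Invariance for concept $j$ turns the identity into $-2\lambda_j y_j(\bc)=2\gamma_j y_j(\bc)$, so $\lambda_j=\gamma_j=0$ and $\by_1=\bz_{\bc}$. That is exactly the inactivity you needed: $\bz_{\bc}-\bz_{\bc\oplus\be_i}=y_i(\bc)\tfrac{2}{\|\bw_i\|^2}\bw_i$ regardless of the other coordinates. After this, your telescoping step goes through as in the paper. So does orthogonality, via $\bw_i^\top(\bz_{\bc}-\bz_{\bc\oplus\be_j})=0$ with that difference proportional to $\bw_j$.

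Your plan for regime (i) also does not work as stated. For $k\geq 3$ stars are not in $\cT$, so there is nothing to reduce to. Choosing supports by which points end up as their support vectors is circular. The pigeonhole edge count is what the paper uses for sufficiency, not necessity; the paper itself writes the necessity argument only on cross-datasets. The supports that make the argument run are $\{\bc':c'_i=\bar c_i\}\cup\{\bc\}$. These have size $2^{k-1}+1$, and in each of them $\bc$ is the unique minority for concept $i$. With these, invariance follows as before. The $\bw_j$-evaluation becomes $\sum_{\bc'}\lambda_{\bc'}y_j(\bc')+\sum_{\bc'}\gamma_{\bc'}y_j(\bc')=2y_j(\bc)$. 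This forces both convex combinations onto points agreeing with $\bc$ in every coordinate $j\neq i$, again giving $\by_1=\bz_{\bc}$.
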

\textit{Proof sketch.} On linearly separable data (guaranteed by \Cref{def:linearly_compositional_model}), GD on the logistic loss converges in direction to the max-margin solution \citep{soudry2024implicitbiasgradientdescent}. Stability plus max-margin makes each training point a support vector, yielding prediction invariance when other concepts change. Flipping a concept then produces an additive shift, and per-concept weights are proportional to the segment connecting positive and negative classes, from which orthogonality follows. The minimal-dataset case is argued similarly via cross datasets of size $1+k$ containing only pairs differing in one concept.

Intuitively, linear factorization means that a combination space of $n^k$ elements can be explained using only $n \cdot k$ factors. The orthogonality condition says that factors of concept values belonging to different concepts (e.g., ``red'' and ``square'') are orthogonal to each other, but no requirement is placed on the factors of concept values belonging to the same concept (e.g., ``red'' and ``blue''). We illustrate the stable and unstable examples of feature representations in \Cref{fig:stable_vs_unstable_transfer}. 
Additionally, we note that linear factorization in itself is not trivial: the fact that $n^k$ datapoints can be explained using $n \cdot k$ factors does not have to hold for any linearly compositional model. We illustrate this with examples in \Cref{sec:non-triviality-of-linear-factorization}. 
\begin{figure}
  \centering
  \vspace{1em}
  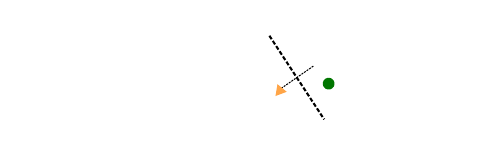
  \caption{\small \textbf{Stable and unstable examples of feature representations.} The left panel shows an unstable configuration, where depending on the sample, the readout either does not transfer or does so unstably. The right panel shows a stable configuration.}
  \vspace{-1em}
  \label{fig:stable_vs_unstable_transfer} 
\end{figure}

The datapoint requirement can be interpreted as operating in either (i) a minimal-learning regime for extrapolating to the whole grid (as in Compositional Risk Minimization framework \cite{mahajan2025crm}), where $|T| = 1 + k(n-1)$ suffices to extrapolate to the whole grid, or (ii) a large-sample regime in which random sampling yields near-complete coverage of the concept space. 

\textbf{Empirical evaluation on synthetic data.} The necessary conditions above concern models that already support compositional generalization; networks trained from scratch may not. Extending \citet{uselis2025scaling} beyond their two-concept setting, our synthetic experiments under standard classification losses show that linearity and orthogonality emerge, especially as the number of concepts grows (\Cref{fig:from_scratch_r2,fig:from_scratch_orthogonality}; setup in \Cref{sec:synthetic_data}). In \Cref{app:example-extreme-clip} we additionally analyze an idealized setting where logits take two values ($h(\bz)\in\{\alpha,\beta\}$), showing strong dimensionality requirements and additive representations that preserve all probe scores (\Cref{prop:additive-fixed}).

\begin{takeawaybox} 
  \textbf{Takeaway \S\ref{sec:gdce-geometry}.}
  Under gradient descent with cross-entropy loss, compositional generalization with stable transfer requires linear factorization and orthogonality of cross-concept factors.
\end{takeawaybox} 

\textbf{Sufficiency.}\label{sec:sufficiency_compgen} The converse of \Cref{prop:binary_factorization} also holds: in the same binary GD+CE regime, linear factorization with cross-concept orthogonality is sufficient for compositional generalization (\Cref{prop:binary_factorization_suff,sec:binary_case_suff}).

\vspace{-0.5em}
\subsection{Packing and minimum dimension}\label{sec:packing-min-dim} 

So far we have established necessary and sufficient conditions for the representation space $\cZ$ to exhibit compositional generalization. However, it is not clear what exactly the capacity constraints are on the representation space to support it. Here, we ask a basic capacity question: what is the minimum embedding dimension $d$ needed to support Divisibility (\Cref{des:axiom_of_divisibility}), i.e. realize all possible $n^k$ combinations? The following result gives a tight lower bound.

\begin{figure}
  \centering
  \begingroup%
  \makeatletter%
  \providecommand\color[2][]{%
    \errmessage{(Inkscape) Color is used for the text in Inkscape, but the package 'color.sty' is not loaded}%
    \renewcommand\color[2][]{}%
  }%
  \providecommand\transparent[1]{%
    \errmessage{(Inkscape) Transparency is used (non-zero) for the text in Inkscape, but the package 'transparent.sty' is not loaded}%
    \renewcommand\transparent[1]{}%
  }%
  \providecommand\rotatebox[2]{#2}%
  \newcommand*\fsize{\dimexpr\f@size pt\relax}%
  \newcommand*\lineheight[1]{\fontsize{\fsize}{#1\fsize}\selectfont}%
  \ifx\svgwidth\undefined%
    \setlength{\unitlength}{177.8604991bp}%
    \ifx\svgscale\undefined%
      \relax%
    \else%
      \setlength{\unitlength}{\unitlength * \real{\svgscale}}%
    \fi%
  \else%
    \setlength{\unitlength}{\svgwidth}%
  \fi%
  \global\let\svgwidth\undefined%
  \global\let\svgscale\undefined%
  \makeatother%
  \begin{picture}(1,0.41084706)%
    \lineheight{1}%
    \setlength\tabcolsep{0pt}%
    \put(0,0){\includegraphics[width=\unitlength,page=1]{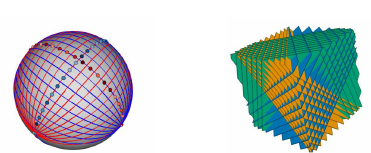}}%
    \put(0.16293789,0.38152795){\color[rgb]{0,0,0}\makebox(0,0)[t]{\lineheight{1.5}\smash{\begin{tabular}[t]{c}$k = 2, n = 20$\end{tabular}}}}%
    \put(0.80297769,0.38117777){\color[rgb]{0,0,0}\makebox(0,0)[t]{\lineheight{1.5}\smash{\begin{tabular}[t]{c}$k = 3, n = 12$\end{tabular}}}}%
  \end{picture}%
\endgroup%

  \caption{\small \textbf{Example geometries of linearly compositional models.} 
  \textbf{Left:} 2 concepts ($n=20$ each) on a sphere. Each colored stripe is the argmax boundary for one concept value; their intersections yield $20^2$ combination cells.
  \textbf{Right:} 3 concepts ($n=12$ each) in 3D. Colored planes show argmax boundaries; their intersections carve out $12^3$ combination cells. Each boundary is colored according to the concept it belongs to. 
  } 
  \label{fig:lower_bound_probes_min}      
  \vspace{-1.3em}   
\end{figure}   
\begin{restatable}[Minimum dimension for linear probes]{proposition}{MinDimProbes}\label{prop:min-dim-probes-maintext}
For $k$ concepts, each with $n$ values, suppose there exist linear probes that correctly classify each concept value for all $n^k$ combinations from embeddings $f(\bx)\in\mathbb R^d$. Then necessarily $d\ge k$.
\end{restatable}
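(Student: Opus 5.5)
The plan is to reduce the statement to a sign-pattern argument on a binary subgrid. I will show that the probes induce an affine map $\R^d\to\R^k$ whose image must meet every open orthant of $\R^k$. I will then show that an affine subspace of dimension less than $k$ cannot do this. Throughout I assume $n\ge 2$ (otherwise there is nothing to classify). I also read the argmax in \Cref{def:linearly_compositional_model} as unique, i.e.\ ties count as misclassification.

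\textbf{Step 1 (reduction to two values per concept).} Write the affine readout as $h(\bz)_{i,j}=\bw_{i,j}^\intercal\bz+b_{i,j}$. For each concept $i$, consider only the values $0$ and $1$, and define
\[
\varphi_i(\bz)=(\bw_{i,1}-\bw_{i,0})^\intercal\bz+(b_{i,1}-b_{i,0}).
\]
Set $\Phi(\bz)=(\varphi_1(\bz),\dots,\varphi_k(\bz))\in\R^k$. Restrict attention to tuples $\bc\in\{0,1\}^k\subseteq\cC$, and let $\bz_{\bc}=f(\bx_{\bc})$. Correct classification means $c_i$ is the unique argmax for concept $i$. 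In particular, $h(\bz_{\bc})_{i,c_i}>h(\bz_{\bc})_{i,1-c_i}$. Hence $\varphi_i(\bz_{\bc})>0$ if $c_i=1$, and $\varphi_i(\bz_{\bc})<0$ if $c_i=0$. So for every sign vector $\bs\in\{\pm1\}^k$, some point of $\Phi(\R^d)$ lies in the open orthant $O_{\bs}=\{\by:\ s_iy_i>0\ \forall i\}$.

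\textbf{Step 2 (dimension of the image).} $\Phi$ is affine, $\Phi(\bz)=M\bz+\bm{\beta}$ with $M\in\R^{k\times d}$. Its image is therefore an affine subspace of $\R^k$ of dimension $\operatorname{rank}M\le d$. Suppose for contradiction that $d<k$. Then the image is a proper affine subspace and lies in some hyperplane $\{\by:\ \ba^\intercal\by=\gamma\}$ with $\ba\neq 0$.

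\textbf{Step 3 (a hyperplane misses an orthant).} Choose $s_i=\operatorname{sign}(a_i)$, taking $s_i=1$ when $a_i=0$. For every $\by\in O_{\bs}$ we have $\ba^\intercal\by=\sum_i |a_i|\,|y_i|>0$, since some $a_i\neq0$. Likewise $\ba^\intercal\by<0$ for every $\by\in O_{-\bs}$. By Step 1 the hyperplane meets both orthants. The first meeting point forces $\gamma>0$ and the second forces $\gamma<0$, a contradiction. Hence $d\ge k$.

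I expect no step to be a real obstacle. The only delicate point is Step 1: one must make sure that correct classification among all $n$ values yields a \emph{strict} inequality between values $0$ and $1$. That is exactly where uniqueness of the argmax (no ties) is used. Optionally, tightness can be noted by taking $d=k$ with one coordinate per concept and $n$ probes arranged in a circle-free 1D partition (thresholds along that axis). This matches the figure, although the statement only asks for the lower bound.
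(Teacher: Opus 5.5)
Your proof is correct, and it departs from the paper's at the key step. Both arguments make the same reduction: fix two values per concept and pass to the $k$ pairwise score differences $\varphi_i$, whose strict sign patterns on the $2^k$ restricted tuples must all be realized. The paper then treats the zero sets of the $\varphi_i$ as an arrangement of $k$ affine hyperplanes in $\R^d$. It invokes Zaslavsky's region count, $\sum_{r=0}^{d}\binom{k}{r}<2^k$ for $d<k$, together with the fact that general position maximizes the number of regions.

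You instead observe that for $d<k$ the $k$ functions are affinely dependent, i.e.\ $\ba^\intercal\Phi(\bz)\equiv\gamma$ for some $\ba\neq 0$. Such a relation cannot hold at points realizing both the sign pattern $\bs=\operatorname{sign}(\ba)$ and its antipode $-\bs$.

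What your route buys:
\begin{itemize}
\item It is fully elementary and self-contained, with no appeal to arrangement theory.
\item It handles degeneracies the paper's write-up passes over. A $\varphi_i$ with zero weight vector does not define a hyperplane, and your argument covers it uniformly.
\item The region count silently needs every grid point to lie strictly off the hyperplanes. This is exactly the no-ties reading of the argmax that you make explicit. Without it, constant probes would ``classify'' everything through ties and the bound would fail.
\end{itemize}

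What the paper's counting route buys: it is quantitative. It bounds how many of the $2^k$ binary combinations can be realized at all when $d<k$; this is the picture behind its $7<8$ regions example for $k=3$, $d=2$. Your argument certifies only that at least one member of an antipodal pair of patterns is missing.

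Your optional tightness remark matches the paper's explicit construction $\bz_{\bc}=\sum_i c_i\be_i$, $\bw_{i,j}=2j\be_i$, $b_{i,j}=-j^2$. That construction realizes an ordered one-dimensional partition along each coordinate axis.
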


\begin{figure*}[!t]
  \centering    
  \includegraphics[width=\textwidth]{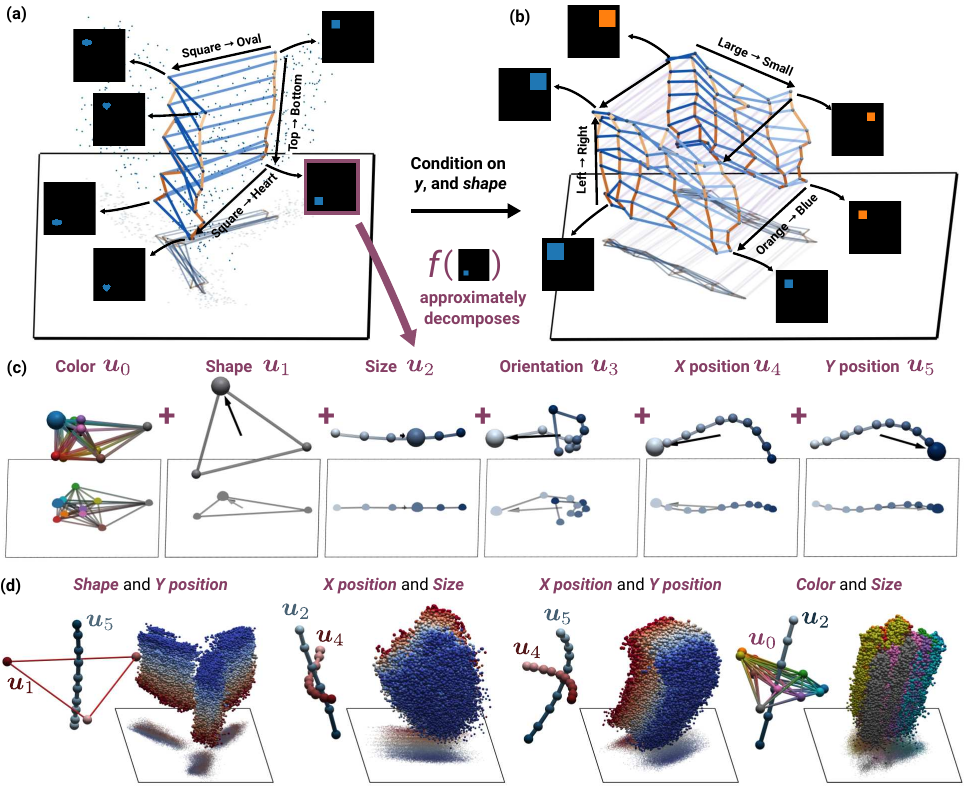}  
  \vspace{-0.8em}
  \caption{\small \textbf{Summary of the linearity + orthogonality hypothesis, illustrated in DINOv3 on dSprites.}
  \textbf{(a)} PCA projection of DINOv3 embeddings over all dSprites combinations. Changing shape or $y$-position produces near-constant direction shifts (linearity), and the two directions are nearly orthogonal.   
  \textbf{(b)} After fixing shape to square and $y$ to one value, PCA on the remaining subset shows that horizontal position $x$ (left $\rightarrow$ right), size (large $\rightarrow$ small), and color (orange $\leftrightarrow$ blue) each vary along near-constant directions and form a grid-like structure. 
  \textbf{(c)} Embeddings exhibit approximate linear factorization: each embedding decomposes as a sum of one factor per concept, $\bz_{\bc}\approx\sum_i \bu_{i,c_i}$ (illustrated for the highlighted sample, with arrows pointing to the selected factor in each panel). The recovered factors are typically low-rank ($\leq 3$D), so these 3D plots capture most of their structure.
  \textbf{(d)} For each pair of concepts, the left mini-panel shows the two sets of factors, illustrating near-orthogonality across concepts. The right mini-panel shows all datapoints projected onto the span of those two factors; the projected points organize into a grid aligned with the factor directions, consistent with additive decomposition. 
  The corresponding quantitative results are reported in \Cref{sec:linear-factorization-in-pretrained-models,sec:orthogonality-of-factors,sec:geometry-of-factors-and-datapoints}; full qualitative results in \Cref{sec:qualitative_results_factors}. }
  \label{fig:dino_summary}   
  \vspace{-1.6em}
\end{figure*}

The bound is independent of the number of values $n$ per concept, requiring only that any two values per factor be distinguishable (so it holds for discrete and continuous factors alike). \Cref{fig:lower_bound_probes_min} illustrates two divisibility examples (sphere and Euclidean), with additional visualizations in \Cref{fig:non_trivial_r2}. As $k$ grows for fixed $d$, factors within each concept must lie in progressively lower-dimensional subspaces, approaching near-collinearity (\Cref{fig:examples_lrh}).

\textbf{Empirical results and the dependence on representation space and the loss function.} The result in \Cref{prop:min-dim-probes-maintext} is a geometric lower bound and does not depend on a specific loss or representation space. Empirically, we find that CE in Euclidean space reaches this bound most closely, BCE typically requires higher dimension (approximately $2k$), and spherical geometry adds roughly one extra dimension (see the setting in \Cref{sec:synthetic_data}, and Fig.~\ref{fig:from_scratch_dims} for the empirical results). In \Cref{app:example-extreme-clip} we discuss an idealized case where a model's logits are constrained to take two values, yet satisfy perfect classification, and show that this necessarily requires $d \geq 1 + k ( n-1)$ in \Cref{lem:min-dim}. 

\begin{takeawaybox}
  \textbf{Takeaway \S\ref{sec:packing-min-dim}.}
  The minimum embedding dimension scales with the number of concepts $k$, not the number of values $n$ per concept ($d \geq k$). When $k$ grows relative to a fixed $d$, per-concept subspaces must become increasingly low-rank, approaching near-collinearity.
\end{takeawaybox}

\begin{figure*}[!t]
  \centering    
  \includegraphics[]{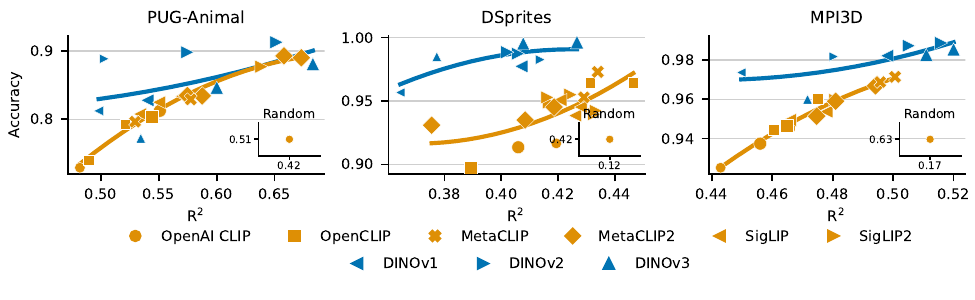}  
  \vspace{-1.5em}  
\caption{\small \textbf{Linearity in embeddings correlates with compositional generalization across VLMs and self-supervised vision models.}
Across three datasets, we plot compositional generalization accuracy against projected $R^2$ (our linear-factorization score) for a broad set of pretrained encoders, including vision-language models (OpenAI CLIP, OpenCLIP, SigLIP, MetaCLIP, MetaCLIP2) and pure vision backbones (DINOv1--v3). Each marker is a model variant; higher projected $R^2$ is consistently associated with higher compositional generalization performance.}
  \label{fig:corrs_pretrained_models} 
  \vspace{-1.7em}
\end{figure*}  

\vspace{-0.5em}
\section{Surveying necessary conditions in pretrained models}

Previous section established necessary conditions for compositional generalization: representation space must be linearly factorized and the factors must be orthogonal across concepts (\Cref{prop:binary_factorization}). In this section, we ask: how far are modern pretrained models from these necessary conditions?

Our theory is built on binary concept values, but some of the concepts in the datasets we consider are multi-valued. Instead of repeatedly sampling binary values and testing factorization on these, we adopt the natural multivalued extension of the necessary structure. Concretely, we test whether representations admit an approximate additive decomposition of the form
\vspace{-1em}
\begin{equation} \label{eq:multivalued_geometry} 
\begin{aligned}       
\bz_{\bc} &\approx \sum_{i=1}^k \bu_{i,c_i}, \quad
(\bu_{i,a}-\bu_{i,b}) &\perp (\bu_{j,a'}-\bu_{j,b'}),
\end{aligned} \vspace{-0.5em}
\end{equation}
for all $\bc \in [n]^k$, all $i\neq j$, and all $a,b,a',b'\in[n]$, i.e., an additive per-concept factorization with cross-concept orthogonality of difference directions. This form reduces to the binary case when $n=2$.

\textbf{Main qualitative result.} We give intuition for both conditions in \Cref{eq:multivalued_geometry} using a pretrained DINOv3 model on dSprites (\Cref{fig:dino_summary}). The figure shows that the idealized linearity and orthogonality conditions are visible in the global PCA view \textbf{(a)}, and that the same pattern is observed when some concepts are fixed and others are varied \textbf{(b)}, consistent with approximate additive factorization and cross-concept orthogonality. Any datapoint can approximately be expressed as a sum of the factor vectors $\bu$ \textbf{(c)}, and they often take low-rank structure, as well as satisfy orthogonality \textbf{(d)}. The remainder of this section quantifies these observations. For full qualitative results, see \Cref{sec:qualitative_results_factors}.

Concretely, we quantitatively evaluate whether pretrained models exhibit linear factorization (\Cref{sec:linear-factorization-in-pretrained-models}), whether it correlates with compositional generalization (\Cref{sec:compositional-generalization-and-linear-factorization}), whether factors are orthogonal across concepts (\Cref{sec:orthogonality-of-factors}), and what geometric structure they exhibit (\Cref{sec:geometry-of-factors-and-datapoints}).

\textbf{Models and datasets.} We evaluate the CLIP, SigLIP, and DINO model families on three compositional datasets (PUG-Animal, dSprites, MPI3D), plus ImageNet-AO; see \Cref{app:models-and-probing} for the full checkpoint and model details.

\textbf{Recovering the factors from representations.} Assuming that a linear factorization exists in the representations of a model $f$ as detailed in \Cref{sec:gdce-geometry}, we can recover the factors $\{ \bu_{i,j} \}_{i \in [k], j \in [n]}$ by averaging over all the datapoints that share a particular concept value \citep{trager2023linear_spaces}. For analysis purposes it is sufficient to recover the centered factors. That is, given all centered embeddings $\{ f(\bx_{\bc}) \}_{ \bc \in [n]^k }$, the factors can be recovered as $\bu_{i,j} = \frac{1}{| \{ \bc \in [n]^k : c_i = j \} |} \sum_{\bc \in [n]^k : c_i = j} f(\bx_{\bc})$.

\subsection{Linear factorization in pre-trained models}
\label{sec:linear-factorization-in-pretrained-models}

{To assess the extent of linearity in the embeddings, we measure a whitened $R^2$ score on the probe span. We (1) project onto the probe span to discard information the embeddings may possess beyond the dataset concepts, and (2) whiten the embedding space so that $R^2$ is not inflated by a few dominant directions. Concretely, given the recovered approximate factors $\{ \bu_{i,j} \}_{i \in [k], j \in [n]}$, the $R^2$ score is computed as {\small \begin{equation} \label{eq:r2_score_definition}
  R^2 = 1 - \frac{\sum_{\bx_{\bc} \in \mathcal{D}} \bigl\| f(\bx_{\bc}) - \sum_{i=1}^k \bu_{i, c_i} \bigr\|_2^2}
                           {\sum_{\bx_{\bc} \in \mathcal{D}} \| f(\bx_{\bc}) - \bar{\bm{f}} \|_2^2},
\end{equation}}where $\cD$ is the dataset, and $\bar{\bm{f}}$ is the mean embedding. Note that a score of $1.0$ indicates perfect linearity}{. We provide intuition of linear factorization and its relation to the $R^2$ in \Cref{sec:details-and-intuition-of-linear-factorization}, additional justification of whitening in \Cref{sec:whitening-in-measuring-linear-factorization}.}

\begin{figure}[h] 
  \centering  
  \vspace{-1em}
  \includegraphics[]{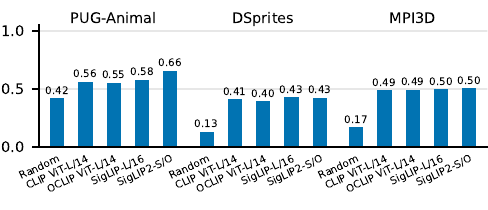} 
  \vspace{-2em} 
  \caption{\small \textbf{Linear factorization partly explains current models' embedding spaces.}
  Bar plots of whitened $R^2$ on three datasets with varying concept/value counts. }
  \label{fig:linear_factorization_results}
  \vspace{-1em}
\end{figure}  

\textbf{Results.}
Fig.~\ref{fig:linear_factorization_results} shows projected $R^2$ scores across models and datasets. Among all datasets, each model's $R^2$ score is consistently above the random baseline ($0.4$-$0.6$ vs.\ $0.12$-$0.42$, respectively).
This suggests that embeddings are partially captured by a sum of per-concept components, while still leaving some information unexplained. Additionally, we observe that $R^2$ scores are similar in scale across models.  
The same linearity trend holds in the zero-shot setting when using text encoders as probes on both PUG-Animal and ImageNet-AO (\Cref{sec:experiments-using-text-encoders-as-probes,fig:projected_r2_pug_animal,fig:projected_r2_imagenetao}).

\begin{figure*}[t] 
  \centering
  \includegraphics[]{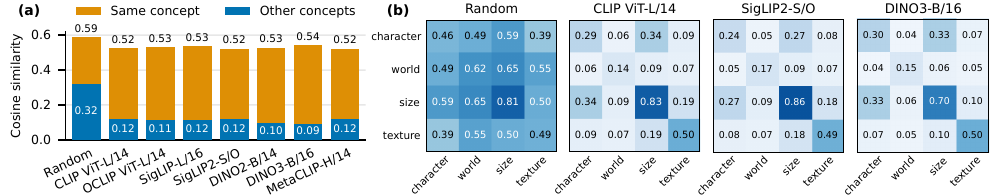}
  \vspace{-0em}
  \caption{\small \textbf{Pre-trained models exhibit strong within-concept direction similarity and partial orthogonality across concepts.} \textbf{(a)} Aggregated within-concept direction similarity over datasets. \textbf{(b)} Pairwise average cosine across concepts. Lower values indicate greater orthogonality between factor vectors. Full orthogonality results are reported in  \Cref{app:orho_exps}.}
  \label{fig:orthogonality_factors}
  \vspace{-1.3em}
\end{figure*}

Importantly, we note that the $R^2$ scores, while consistently above random, are far from perfect, indicating that current models only partially satisfy the linear factorization predicted by our theory.
\begin{takeawaybox}
  \textbf{Takeaway \S\ref{sec:linear-factorization-in-pretrained-models}.}
  Embeddings exhibit partial linear factorization, explaining a moderate fraction of the variance via per-concept components. The gap from perfect scores highlights a divergence from the ideal embedding structure theory predicts.
\end{takeawaybox}

\vspace{-0.5em}
\subsection{Compositional generalization and linear factorization} \label{sec:compositional-generalization-and-linear-factorization}

We ask whether the \emph{degree} of linear factorization predicts compositional generalization.

\textbf{Metrics and setup.} For each dataset/model, we train linear probes on 10\% of all concept combinations and evaluate on the held-out 10\% unseen compositions (cf. sampling discussion in \Cref{sec:gdce-geometry}). This corresponds to a validity rule $R(T)=1$ if $|T| = 0.1\, n^k$. We compute \emph{Projected $R^2$} on \emph{whitened} $\bP_{\bW}f(\bx)$, where $\bP_{\bW}$ projects onto the span of the probe weights $\bW$ (\Cref{sec:linear-factorization-in-pretrained-models}) and pair it with a \emph{compositional accuracy} score on the held-out compositions.
We use a randomly-initialized OpenCLIP ViT-L/14 model as a baseline by training linear probes on the embeddings. We use linear probing rather than zero-shot classification to avoid prompt-specification issues, nonetheless, the same conclusions hold in zero-shot experiments on both PUG-Animal and ImageNet-AO (\Cref{sec:experiments-using-text-encoders-as-probes}).

{Compositional accuracy is computed by training one linear classifier per concept, then averaging each classifier's accuracy on the held-out combinations. For example, dSprites has 6 concepts (shape, orientation, $x$ position, $y$ position, size, and color); we train 6 classifiers and report their mean accuracy on unseen combinations.}

\textbf{Results.}
Across all datasets \emph{higher Projected $R^2$ coincides with higher compositional accuracy} (Fig.~\ref{fig:corrs_pretrained_models}).
The randomly initialized OpenCLIP ViT-L/14 baseline consistently occupies the low-$R^2$/low-accuracy corner, indicating the effect is not a dimensionality or scale artifact, and follows the rationale of sanity checks in interpretability \citep{meloux2025deadsalmonsaiinterpretability}.
The same trend is observed in zero-shot text-probe experiments on both PUG-Animal and ImageNet-AO (\Cref{sec:experiments-using-text-encoders-as-probes,fig:projected_r2_pug_animal,fig:projected_r2_imagenetao}). 
The correlation persists across train fractions $|S|/N \in \{0.001,\ldots,0.999\}$ and on the full $32{\times}32$ dSprites grid (\Cref{app:linearity-and-compgen,sec:train-frac-sweep-rebuttal,sec:r2-vs-train-frac-rebuttal,sec:svd-residual-rebuttal}), consistent with our theory: models whose embeddings are closer to a linear factorization also generalize better to unseen concept combinations.

\begin{takeawaybox}
  \textbf{Takeaway \S \ref{sec:compositional-generalization-and-linear-factorization}:} Linear factorization in pre-trained models correlates positively with compositional generalization performance.
\end{takeawaybox}

\vspace{-0.5em}
\subsection{Orthogonality of factors}\label{sec:orthogonality-of-factors}

Our theory (\Cref{prop:binary_factorization}) predicts that per-concept difference vectors should be orthogonal \emph{across} concepts under linear factorization, in generalizing linearly compositional models. We empirically test this prediction by testing orthogonality in two ways: (1) within-concept and (2) across-concept. We defer the details to Appendix \ref{app:orho_exps}.

\textbf{Results.}
Pretrained encoders exhibit consistently higher direction similarity within concepts than across concepts (Fig.~\ref{fig:orthogonality_factors}): within-concept similarity (a) is around $\approx 0.53$-$\,0.55$, whereas cross-concept similarity (b) is $\approx 0.09$-$\,0.12$. 
The randomly-initialized encoder also exhibits this pattern; however, the across-concept similarity is higher ($0.32$ on average) compared to pre-trained models. 
The same within-vs-across pattern is also observed in zero-shot text-probe experiments on both PUG-Animal and ImageNet-AO (\Cref{fig:orho_plots_pug_animal,fig:orho_plots_imagenetao}). Error bars across concept pairs and a re-run on the full $32{\times}32$ dSprites grid are reported in \Cref{sec:orthogonality-error-bars-rebuttal}.

\begin{takeawaybox}
  \textbf{Takeaway \S \ref{sec:orthogonality-of-factors}:} Pretrained models exhibit partial cross-concept orthogonality, substantially more so than randomly initialized encoders, suggesting that training drives factor directions toward the geometry predicted by our theory.
\end{takeawaybox}

\textbf{Dimensionality of factors.} Per-concept factors in pretrained models are typically low-dimensional: ordinal and continuous concepts are captured by $\leq 4$ PCs (often $\leq 2$), while discrete concepts show higher effective rank. Factor geometry is also similar across model families (CLIP, OpenCLIP, SigLIP, SigLIP2), in line with the Platonic Representation Hypothesis~\citep{huh2024platonic}. See \Cref{sec:geometry-of-factors-and-datapoints} for full results.

\vspace{-1em}
\section{Discussion and conclusion}

In this work, we formalized compositional generalization through three practically motivated desiderata (divisibility, transferability, stability) and showed they force representations into linear factorization with cross-concept orthogonality, with a minimum embedding dimension $d \geq k$. This reframes the Linear Representation Hypothesis as a necessary consequence of compositional generalization rather than just an empirical observation.

Empirically, across modern VLMs and compositional datasets, current representations partially satisfy this structure, and the degree of factorization correlates with compositional generalization on unseen combinations. The pattern holds for both vision-language models and self-supervised DINO encoders. The $R^2$--accuracy correlation gives a practical diagnostic for assessing compositional capability directly in representation space; the gap from perfect factorization may explain current models' failures on compositional benchmarks, and our conditions predict the geometry they may converge to as they scale.

\textbf{Limitations and future work.} Our theory emphasizes worst-case stability over valid training supports; relaxing this to average-case or approximate stability is a natural direction that may better match some practical settings. Characterizing how different training supports change the implied geometry could turn these necessary conditions into a practical design guide for data collection and model building. Our theory assumes a fixed encoder and considers retraining across different training supports; in practice the encoder is trained once on a single dataset; understanding the impact of this assumption on the necessary conditions is an interesting direction for future work.

\clearpage
\section*{Acknowledgments}
We would like to thank Divyat Mahajan for useful discussions and insights, as well as comments on an early version of this work. We also thank Alexander Rubinstein and Darina Koishigarina for insightful discussions, and Simon Buchholz for useful comments. This work was supported by the Tübingen AI Center. Arnas Uselis was supported by the International Max Planck Research School for Intelligent Systems (IMPRS-IS). Seong Joon Oh was supported by the Institute for Information \& Communications Technology Planning \& Evaluation (IITP) grant funded by the Korea government (MSIT) (RS-2019-II190075, Artificial Intelligence Graduate School Program gram(KAIST)).

\section*{Impact statement}

This paper presents work whose goal is to advance the field of machine learning. There are many potential societal consequences of our work, none of which we feel must be specifically highlighted here.

\ifbuildbibliography
\nocite{*}
{
\bibliography{iclr2026_conference}
}
\bibliographystyle{icml2026}
\fi

\ifbuildappendix
\onecolumn 
\appendix
\part*{Appendix}
\small \addcontentsline{toc}{part}{Appendix}
\pdfbookmark[1]{Appendix Contents}{app-toc}
\etocstandardlines
\etocstandarddisplaystyle
\etocsetnexttocdepth{subsubsection}
{\renewcommand{\baselinestretch}{0.85}\selectfont\localtableofcontents\par}

\makeatletter
\clearpage

\clearpage

\section{Extended related work} \label{sec:related-work-ext}

\textbf{Compositional generalization.}
Research on compositional generalization investigates how models can systematically combine concepts. On the objective side, approaches such as Compositional Feature Alignment \citep{wang2025cfa} and Compositional Risk Minimization \citep{mahajan2025crm} study how model training objectives, and model architecture \cite{jarvis2024specializationneuralmodules} affect compositional generalization. On the representational side, kernel analyses characterize when certain compositional structures in embeddings yield generalization theoretically \citep{lippl2025kernel}, and empirical work investigates the role of disentangled representations for compositional generalization \citep{montero2021the,dittadi2021transferdisentangledrepresentationsrealistic, liang2025compositionalgeneralizationforcedrendering}. On the data side, recent work probes whether and how scaling and data coverage improve compositional behavior \citep{uselis2025scaling, schott2022same_domain, kempf2025clip_when}. \citet{abbasi2024decipheringrolerepresentationdisentanglement} investigate CLIP's ability to recognize unlikely attribute-object combinations, finding that CLIP models still fall short on such tasks.

{Other works establish formal sufficient conditions for when particular model classes can achieve compositional generalization, e.g., generative models whose data are produced by a differentiable rendering process and whose training distribution provides compositional support over latent factors \citep{wiedemer2023firstprinciples}, constrained decoders paired with explicit inversion \citep{brady2025compositional}, conditional diffusion models with local conditional scores \citep{bradley2025compositional}, discriminative models whose inputs are drawn from an additive energy distribution \citep{mahajan2025crm}, or linearly factorized representations \citep{uselis2025scaling}.
In contrast, we do not impose specific structure on the data-generating process or on the learned representations.
Instead, we ask what properties are implied \emph{if} a model transfers from a restricted subset of the data space to the full space under our desiderata.
Within this setting, our results can be interpreted as providing \emph{necessary} conditions for compositional generalization for models that satisfy these desiderata.}

\textbf{Geometry of learned representations.}
A large literature studies the shape of learned features. In VLMs, \citet{trager2023linear_spaces} report compositional linear subspaces, while in LLMs the \emph{Linear Representation Hypothesis} (LRH) is examined mechanistically and statistically \citep{jiang2024origins,park2023lrh}. Extending LRH, \citet{engels2025not1D} show that features can be multi-dimensional rather than rank-1, and \citet{roeder2020linear_identifiability}  analyze identifiability constraints. Sparse-autoencoder probes provide evidence for monosemantic or selectively remapped features in VLMs \citep{pach2025sparse_sae,zaigrajew2025hier_sae,lim2025sae_remap}. Beyond nominal labels, ordinal/ordered concepts motivate the rankability of embeddings \citep{sonthalia2025rankability}. More broadly, capacity limits for embedding-based retrieval emphasize geometric bottlenecks \citep{weller2025limits}. {\cite{elhage2022superposition} investigated empirically how neural networks can represent more features than there are dimensions in two-layer auto-encoder models. They found a tendency to encode features near-orthogonally with respect to neurons. \citet{abbasi2024decipheringrolerepresentationdisentanglement} find evidence of disentanglement in CLIP models.} {{In contrast to these works}, which document linear or near-orthogonal structure empirically, we show that under practice-driven desiderata and standard training, linearity and orthogonality are \emph{necessary}. }
 
Concurrent to our study of compositional generalization constraints in embedding spaces, \citet{fel2025rabbithulltaskrelevantconcepts} conduct a large-scale, empirical concept analysis of DINOv2 using sparse autoencoders, showing strong task-dependent structure in which different downstream tasks recruit different concept families. They further propose a ``Minkowski Representation Hypothesis'', in which token activations behave as a Minkowski sum of convex polytopes, emphasizing both the interpretability consequences and the non-uniqueness/identifiability limitations of such decompositions. While their focus is token-level geometry and interpretability (rather than CLS-centric compositional probing and necessary conditions), their observations offer complementary evidence that modern vision encoders exhibit rich internal geometric structure beyond a purely unstructured embedding space.

\textbf{Data, objectives, and training effects on geometry.}
Data distribution strongly shapes zero-shot behavior; concept frequency during pretraining predicts multimodal performance \citep{udandarao2024nozeroshot}. On the objective side, BCE vs.\ CE can induce different feature geometries \citep{li2025bcece}, and contrastive/InfoNCE objectives exhibit characteristic similarity patterns \citep{lee2025infonce_similarity}. Convergence perspectives argue that the \emph{objective} drives canonical representational forms \citep{huh2024platonic}, and objective choice has been tied to representational similarity across datasets \citep{ciernik2025objective_similarity}.

\textbf{Binding, explicit structure injection, and concept identification.}
Work on \emph{binding} asks whether models maintain factored world states \citep{feng2025propositional}, and CLIP has been observed to show uni-modal binding \citep{koishigarina2025bow}. Surveys and empirical studies examine binding limits and emergent symbolic mechanisms \citep{campbell2025binding,assouel2025symbolic}. Other approaches inject structure directly, e.g., {hyperbolic} image-text embeddings and entailment learning \citep{pal2024hyperbolic_comp,desai2024hyperbolic}, or pursue concept identification at the causal/foundation interface and object-centric pipelines \citep{rajendran2024concepts,mamaghan2024objectcentric}.

{\textbf{Relation to disentangled and object-centric representations.}
Work on disentangled representations largely focuses on specifying desiderata for internal codes (e.g., disentanglement, completeness, informativeness) and proposing metrics or training schemes to satisfy them, often with the informal motivation that such structure should help downstream generalization \citep{bengio2014representationlearningreviewnew, eastwood2018a, higgins2018definitiondisentangledrepresentations}.
Closely related, object-centric representation learning injects an inductive bias that scenes should be modeled as compositions of objects, which can be viewed as a structured form of factorization/binding that may support compositional transfer and robustness \citep{greff2020bindingproblemartificialneural}.
Recent studies directly probe how these properties relate to out-of-distribution or compositional generalization, often with mixed or limited evidence \citep{watters2019spatialbroadcastdecodersimple, dittadi2021transferdisentangledrepresentationsrealistic, dittadi2022generalization, montero2021the, montero2024lostlatentspacedisentangled, kapl2025object}. 
We instead ask a complementary question: if a discriminative model \emph{does} exhibit compositional generalization when learned from a subset of the data space, what must necessarily be true of its embeddings?}

{A large body of literature has studied the usefulness and implications of learning disentangled representations in an unsupervised way \citep{bengio2014representationlearningreviewnew, lake2017building}. Most commonly, the goal is to learn a generative model, usually through a VAE \citep{kingma2022autoencodingvariationalbayes}, that can compress the data in a disentangled manner, in a way that allows to reconstruct these representations. While shown to be impossible without additional assumptions \citep{locatello2019challengingcommonassumptionsunsupervised}, under weak supervision learning is possible \citep{shu2020weaklysuperviseddisentanglementguarantees, locatello2020weaklysuperviseddisentanglementcompromises}. Measuring the degree of disentanglement in these models is in itself non-trivial and various metrics have been proposed, e.g. by measuring disentanglement by performing interventions on the representations \citep{higgins2017betavae, pmlr-v80-kim18b}. The DCI framework \citep{eastwood2018a} proposes desiderata of properties disentangled representations should satisfy, namely disentanglement, completeness, and informativeness, and proposes a metric to measure them. Some works also consider what constitutes a good disentanglement \citep{higgins2018definitiondisentangledrepresentations} and propose a conceptual framing of meaning behind disentangled representations with respect to the data generative process in terms of group actions of transformations.}

{\citet{abbasi2024decipheringrolerepresentationdisentanglement} investigate the role of representation disentanglement in compositional generalization in CLIP models. Using metrics such as DCI, they find that CLIP models with more disentangled text and image representations exhibit higher compositional OOD accuracy on their attribute-object dataset (ImageNet-AO). This work is complementary to ours. Their study explores correlations between disentanglement and compositional generalization by probing CLIP embeddings with respect to the adjective and noun components present in the inputs. For instance, they estimate ``attribute'' and ``object'' subspaces by feeding isolated adjectives or nouns into the text encoder, or by generating isolated attributes/objects via a text-to-image model and embedding them with CLIP. However, this approach assumes that CLIP's embedding space is additively decomposed with respect to individual words, an assumption that is not guaranteed to hold. Indeed, \citet{yamada2024lemonspurpleconceptassociation} show that word embeddings in language models are often highly entangled with associated concepts. In contrast, our necessary condition does not rely on word-level decomposition. We posit that models achieving perfect downstream compositional performance must possess linearly factorized representations that separate per-concept components, independent of how an encoder processes individual words. In short, our work provides principled motivation for analyses of representational decomposition, whereas \citet{abbasi2024decipheringrolerepresentationdisentanglement} offer an empirical correlation study based on CLIP's emergent disentanglement.}

{\citet{lippl2025kernel} investigate when a particular form of compositionally structured representations, specifically representations whose similarity depends only on how many underlying components two inputs share, supports downstream compositional generalization. Using kernel theory, they characterize exactly which tasks linear readouts on top of such representations can solve, showing that these models are fundamentally restricted to conjunction-wise additive functions. In contrast, we focus on a specific subclass of compositional tasks: identifying factors of inputs that never co-occur during training. While \citet{lippl2025kernel} characterize what kinds of generalization are possible under a compositional representational structure, we ask the complementary question: given perfect downstream performance on such a task, what representational structure must the model necessarily possess under the desiderata we specify?}

\clearpage

\section{Necessary conditions (proof of \Cref{prop:binary_factorization})}\label{app:proofs}  
In this section, we prove \Cref{prop:binary_factorization} and state the auxiliary lemmas used in the argument.
 
Our default training-support regime is the one used in the main result: valid supports satisfy $|T|=2^{k-1}+1$. In the proof below, we work with cross-datasets $\cD^{\bc}$ (\Cref{def:cross_dataset}), i.e., datasets centered at $\bc$ that vary one concept at a time, because this makes the core geometric argument transparent. We write $\cD$ for the full dataset of all $n^k$ combinations and $\cD^{\bc}$ for a cross-dataset centered at $\bc$. 

Although most proofs in this section are written in the binary setting for clarity, we state intermediate claims in a form that extends directly to general $n$ where possible. Concretely, in the binary setting $\cC_i=\{0,1\}$. Any learned quantity carries a superscript indicating the training set, e.g., $\bw_i^{(\cD)}$ or $\bw_i^{(\cD^{\bc})}$. For concept $i$ and training set $S$, we use the logistic score form
\begin{equation*}
h_i^{(S)}(\bz):=(\bw_i^{(S)})^\intercal \bz + b_i^{(S)},
\qquad
p_i^{(S)}(\bz)=\sigma(h_i^{(S)}(\bz)),
\end{equation*}
and we drop the superscript when the dataset is clear from context.

\begin{definition}[Intervention on a concept value]\label{def:intervention}
  For any concept index $i\in[k]$, target value $j\in[n]$, and concept vector $\bc\in[n]^k$, we define the intervened index and representation
  \begin{equation*}
  \bc(i\to j) := (c_1,\ldots,c_{i-1},j,c_{i+1},\ldots,c_k),
  \quad
  {\bz_{\bc(i\to j)}} := {\bz_{\bc}}\text{ with concept $i$ set to $j$}.
  \end{equation*}
  In case of multiple interventions, they compose componentwise, e.g. for distinct $i,m\in[k]$, $\bc(i \to j, m\to l) = (c_1,\ldots,c_{i-1},j,c_{i+1},\ldots,c_{m-1},l,c_{m+1},\ldots,c_k)$.
  \end{definition}

\begin{definition}[Cross dataset at $\bc$]\label{def:cross_dataset} Given a concept space $\cC = \cC_1 \times \cdots \times \cC_k$, we say that a dataset $\cD^{\bc}$ is a cross-dataset at $\bc \in [n]^k$ if:
  \begin{enumerate}
    \item It contains only samples that vary one concept at a time around the center $\bc$:
    \begin{equation*}
      \cD^{\bc}
      = \big\{ (c_1', c_2, \dots, c_k) : c_1' \in [n] \big\}
      \cup \cdots \cup
      \big\{ (c_1, c_2, \dots, c_{k-1}, c_k') : c_k' \in [n] \big\}
      {= \{ \bc \}\ \cup\ \bigcup_{i=1}^{k}\ \bigcup_{a\in[n]\setminus\{c_i\}}\ \{\bc(i\to a)\}.}
    \end{equation*}
    \item Its size is $1 + k(n-1)$,
  \end{enumerate} 
  
  \end{definition}

\begin{definition}[Dataset marginal counts]\label{def:marginal_counts}
For any dataset $S \subseteq \{({\bz_{\bc}}) : \bc\in[n]^k\}$ (e.g., $S=\cD$ or $S=\cD^{\bc}$), concept $i\in[k]$, and value $j\in[n]$, the marginal count of value $j$ in $S$ is
\begin{equation*}
\Nij{S}{i}{j} := \big\lvert\{\bc\in[n]^k : ({\bz_{\bc}})\in S, c_i=j\}\big\rvert.
\end{equation*}
When $S$ is clear, we abbreviate $N_{i,j}:=N_{i,j}(S)$.
\end{definition}

\begin{remark}[Marginal counts: full vs cross-datasets]\label{lem:marginal_counts}
For the full dataset $\cD$, the marginal counts are balanced:
\begin{equation*}
\Nij{\cD}{i}{j} = n^{k-1} \quad \text{for all } i\in[k],\ j\in[n].
\end{equation*}
For a cross-dataset $\cD^{\bc}$ as in \Cref{def:cross_dataset}, the marginal counts satisfy
\begin{equation*}
\Nij{\cD^{\bc}}{i}{j}=
\begin{cases}
1 + (k-1)(n-1), & j=c_i,\\
1, & j\ne c_i.
\end{cases}
\end{equation*}
\end{remark}
\begin{proof}
In $\cD$, fixing $c'_i=j$ leaves $n^{k-1}$ free coordinates. In $\cD^{\bc}$: varying concept $i$ contributes one point for each $j\ne c_i$; the center contributes one more with $c'_i=c_i$; varying any other concept $r\ne i$ adds $(n-1)$ points with $c'_i=c_i$, across $(k-1)$ such concepts, totaling $(k-1)(n-1)$.
\end{proof}

\begin{definition}[Binary complement notation]\label{def:binary_complement}
In the binary case ($\cC_i=\{0,1\}$), we write $\bar c_i := 1 - c_i$ for the complement value of concept $i$. As shorthand for an intervention to the complement, we write $\bc{(\bar c_i)} := \bc{(i\leftarrow \bar c_i)}$.
\end{definition}

To make use of Stability in the binary case, we use the identifiability of the sigmoid.

\begin{lemma}[Binary equal probabilities imply equal affine parameters] \label{lemma:same_stuff}
For a concept index $i$ and two training supports $S,S'$, let
\begin{equation*}
h_i^{(S)}(\bz):=(\bw_i^{(S)})^\intercal \bz + b_i^{(S)},
\quad
h_i^{(S')}(\bz):=(\bw_i^{(S')})^\intercal \bz + b_i^{(S')}.
\end{equation*}
Assume that for every $\bz\in\R^d$,
\begin{equation*}
\sigma\!\left(h_i^{(S)}(\bz)\right)=\sigma\!\left(h_i^{(S')}(\bz)\right),
\end{equation*}
where $\sigma(t)=1/(1+e^{-t})$.
Then
\begin{equation*}
\bw_i^{(S)}=\bw_i^{(S')}
\qquad\text{and}\qquad
b_i^{(S)}=b_i^{(S')}.
\end{equation*}
\end{lemma}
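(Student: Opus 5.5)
The plan is to exploit that the sigmoid is a strictly increasing bijection from $\R$ onto $(0,1)$. It is therefore injective, and the hypothesis $\sigma(h_i^{(S)}(\bz))=\sigma(h_i^{(S')}(\bz))$ for every $\bz\in\R^d$ gives equality of the logits pointwise:
\begin{equation*}
(\bw_i^{(S)})^\intercal \bz + b_i^{(S)} = (\bw_i^{(S')})^\intercal \bz + b_i^{(S')}\qquad\text{for all }\bz\in\R^d .
\end{equation*}
Injectivity can be shown either by noting $\sigma'(t)=\sigma(t)(1-\sigma(t))>0$, or by writing the explicit inverse $\sigma^{-1}(p)=\log\frac{p}{1-p}$ (the logit) and applying it to both sides.

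With logit equality in hand, I define $\Delta\bw:=\bw_i^{(S)}-\bw_i^{(S')}$ and $\Delta b:=b_i^{(S)}-b_i^{(S')}$. The affine function $\bz\mapsto \Delta\bw^\intercal\bz+\Delta b$ then vanishes identically on $\R^d$. Evaluating at $\bz=\mathbf{0}$ gives $\Delta b=0$. Evaluating at each standard basis vector $\be_m$ then gives $(\Delta\bw)_m=0$ for every $m\in[d]$, so $\Delta\bw=\mathbf{0}$, which is the claim.

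There is no real obstacle here. The only point to watch is that the hypothesis must hold on a set rich enough to pin down an affine map. As stated it holds on all of $\R^d$, so this is immediate; it would also suffice for the points $\{\mathbf{0},\be_1,\dots,\be_d\}$, or any $d+1$ affinely independent points, to lie in the set where equality holds. If the lemma is later applied only on the finite representation set $\cZ$, one would need $\cZ$ to affinely span $\R^d$, or else conclude equality only modulo the orthogonal complement of that span. I would add this as a brief remark, but the proof of the stated lemma is just the two steps above.
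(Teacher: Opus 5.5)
Your proof is correct and follows the paper's argument: injectivity of $\sigma$ gives pointwise equality of the logits, evaluating at $\mathbf{0}$ gives equal biases, and the identically vanishing linear form then forces equal weights, which you spell out via the standard basis vectors. Your closing caveat is reasonable. The lemma is later fed through Stability, which constrains posteriors only at actual encoder outputs, so in that use the parameters are pinned down only on the affine hull of those outputs unless the outputs affinely span $\R^d$.
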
 
  
\begin{proof}
Since $\sigma$ is injective,
\begin{equation*}
h_i^{(S)}(\bz)=h_i^{(S')}(\bz)\qquad \forall \bz\in\R^d.
\end{equation*}
Hence
\begin{equation*}
\left(\bw_i^{(S)}-\bw_i^{(S')}\right)^\intercal \bz
+ 
\left(b_i^{(S)}-b_i^{(S')}\right)=0
\qquad \forall \bz\in\R^d.
\end{equation*}
Taking $\bz=\mathbf 0$ gives $b_i^{(S)}=b_i^{(S')}$, and then
\begin{equation*}
\left(\bw_i^{(S)}-\bw_i^{(S')}\right)^\intercal \bz=0
\qquad \forall \bz\in\R^d,
\end{equation*}
implies $\bw_i^{(S)}=\bw_i^{(S')}$, since the equation holds for any point in $\R^d$.
\end{proof}

\begin{figure}[H] 
  \centering 
  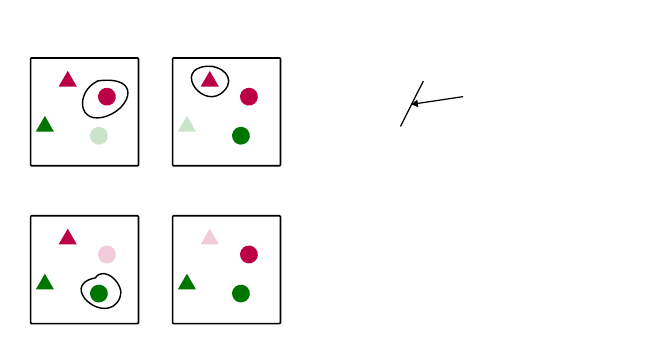
  \caption{\textbf{Illustration of the invariance lemma (left) and the main proposition (right).} \textbf{(a)} The invariance lemma (\Cref{lemma:invariance}): any point $\bz_{\bc}$ can be made a support vector by an appropriate choice of a pair of datasets. \textbf{(b)} The main proposition (\Cref{prop:binary_factorization}): linearity and orthogonality under GD+BCE follow because, for each concept $i$, the SVM support vectors correspond to counterfactual pairs that differ only in the $i$-th concept. Concretely, when the minority support point is $\bz_{\bc(i\to \bar c_i)}$, the majority-class support vector $\by_1$ under $\cD^{(1,1)}$ reduces to $\bz_{\bc}$, and analogously for $\by_2$.} 
  \label{fig:invariance_lemma} 
\end{figure}

\begin{lemma}[Bi-directional tight support vectors]\label{lem:binary_tightness}
For binary concepts $\cC_i = \{0, 1\}$ for all $i \in [k]$, consider a pair of cross-datasets $\cD^{\bc}$ and $\cD^{\bc(i\to \bar c_i)}$ for some $i \in [k]$ and the corresponding SVM solutions $\{\bw_i, b_i\}$ and $\{\bw_i', b_i'\}$ for these datasets, respectively. 
Then,  {$\bz_{\bc(i\to \bar c_i)}$ is a tight support vector for concept $i$ under $(\bw_i,b_i)$ trained on $\cD^{\bc}$, and $\bz_{\bc}$ is a tight support vector under $(\bw_i',b_i')$ trained on $\cD^{\bc(i\to \bar c_i)}$,} i.e. the following hold: 
\begin{nalign} 
(\bw')_i^\intercal {\bz_{\bc}} + b'_i &= y_i(\bc) \\ 
\bw_i^\intercal {\bz_{\bc(i\to \bar c_i)}} + b_i &= y_i(\bc(i\to \bar c_i)),
\end{nalign}
where $y_i(\bc) \in \{+1, -1\}$ is the label of $\bz_{\bc}$ with respect to concept $i$.
\end{lemma}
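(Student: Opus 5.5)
The plan is to exploit the structure of the binary cross-dataset directly. In $\cD^{\bc}$ with binary concepts, concept $i$ has exactly one point with value $\bar c_i$, namely $\bz_{\bc(i\to\bar c_i)}$ (by \Cref{lem:marginal_counts}, $N_{i,\bar c_i}(\cD^{\bc})=1$). All remaining $k$ points carry label $y_i(\bc)$. I will also assume, as in the surrounding discussion, that GD+BCE converges in direction to the hard-margin SVM. This is legitimate because the data are linearly separable by \Cref{def:linearly_compositional_model}. I will also use the canonical normalization in which support vectors satisfy $y(\bw^\intercal\bz+b)=1$.

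\textbf{Step 1: a minority class forces its unique point to be tight.} For the SVM on any separable dataset with both labels present, the KKT conditions give $\bw=\sum_s \alpha_s y_s \bz_s$ and $\sum_s \alpha_s y_s=0$, with $\alpha_s\ge 0$. Because $\bw\neq 0$ on data that contain both labels, some $\alpha_s>0$. The constraint $\sum_s\alpha_s y_s=0$ then forces positive multipliers in both classes. Equivalently, if every point of one class had slack, the bias $b$ could be shifted to increase the margin, which contradicts optimality.

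In $\cD^{\bc}$ the class $\bar c_i$ consists only of $\bz_{\bc(i\to\bar c_i)}$, so this point has $\alpha>0$. By complementary slackness it is tight:
\[
y_i(\bc(i\to\bar c_i))\bigl(\bw_i^\intercal\bz_{\bc(i\to\bar c_i)}+b_i\bigr)=1.
\]
Since $y\in\{\pm1\}$, this is the second displayed identity.

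\textbf{Step 2: apply the same argument to the other cross-dataset.} The cross-dataset $\cD^{\bc(i\to\bar c_i)}$ is centered at $\bc'=\bc(i\to\bar c_i)$. Its unique point with concept-$i$ value $\bar c'_i=c_i$ is $\bz_{\bc'(i\to c_i)}=\bz_{\bc}$. Repeating Step 1 shows $\bz_{\bc}$ is tight for $(\bw_i',b_i')$, which gives the first identity.

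\textbf{Expected obstacle.} The main subtlety is the degenerate cases. One must ensure that both classes are present, which holds because each cross-dataset contains both values of concept $i$. One must also ensure separability so the SVM exists, and this comes from linear compositionality of $f$. A further point is that "tight support vector" is meaningful only relative to the canonical scaling of the max-margin direction that GD reaches. I would therefore state explicitly that $(\bw_i,b_i)$ denotes the SVM normalization, not the divergent GD iterate. Stability is not needed for this lemma; it enters only later, when these tightness relations are combined with \Cref{lemma:same_stuff}.
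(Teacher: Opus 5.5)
Your proposal is correct and follows the paper's route: the paper also uses Remark~\ref{lem:marginal_counts} to show that each cross-dataset has a singleton minority class for concept $i$, then invokes the standard fact that a hard-margin SVM has a tight support vector in every nonempty class, which you derive explicitly from the KKT conditions ($\bw\neq 0$, $\sum_s\alpha_s y_s=0$, complementary slackness). Your explicit remarks that the singleton point must therefore be the tight one and that Stability plays no role in this lemma fill in steps the paper leaves implicit.
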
 
\begin{proof}
For the second equality: by Remark~\ref{lem:marginal_counts}, $\Nij{\cD^{\bc}}{i}{c_i}=1+(k-1)(n-1)$ and $\Nij{\cD^{\bc}}{i}{\bar c_i}=1$. Hence both classes are non-empty in $\cD^{\bc}$; standard hard-margin SVM argument then gives at least one support vector in each class achieving equality at the margin \citep{cortes1995support}. Repeating the same argument for $\cD^{\bc(i\to \bar c_i)}$ gives the first equation.
\end{proof}

\begin{lemma}[Invariance to irrelevant concepts]
  \label{lemma:invariance}
  Assume each concept is binary, $\cC_i=\{0,1\}$ for all $i\in[k]$, and \Cref{des:axiom_of_divisibility,des:axiom_of_transferability,des:axiom_of_stability} hold, and consider either the rule $|T| = 2^{k-1} +1$ or the cross-dataset design (\Cref{def:cross_dataset}). Then, for any $i\in[k]$ and any $\bc,\bc'\in[2]^k$ with $c_i=c_i'$, it holds that
  \begin{equation}
  P(C_i=c_i\mid \bz_{\bc}) = P(C_i=c_i \mid \bz_{\bc'}).
  \end{equation}
  \end{lemma}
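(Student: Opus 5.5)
The plan is to combine Stability with the max-margin characterization of GD+CE (as in the proof sketch of \Cref{prop:binary_factorization}) and the tight-support-vector property of \Cref{lem:binary_tightness}. By Stability and \Cref{lemma:same_stuff}, all valid supports yield the \emph{same} affine parameters $(\bw_i,b_i)$ for concept $i$. Call this common head $h_i$; it is what I will work with throughout. Since the posterior is $\sigma(h_i(\bz))$, the claim reduces to showing $h_i(\bz_{\bc})=h_i(\bz_{\bc'})$ whenever $c_i=c_i'$. Since GD on the logistic loss converges in direction to the hard-margin SVM solution on linearly separable data \citep{soudry2024implicitbiasgradientdescent}, with linear separability guaranteed by Transferability, the direction of $h_i$ is the SVM direction for every valid support. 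I will normalize the head so that the SVM margin constraints read $y_i(\bc)\,h_i(\bz_{\bc})\ge 1$.

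The key step is to show that every point $\bz_{\bc}$ is a tight support vector of $h_i$, i.e.\ $h_i(\bz_{\bc})=y_i(\bc)$. In the cross-dataset design, I will take the pair $\cD^{\bc(i\to\bar c_i)}$ and $\cD^{\bc}$. In $\cD^{\bc(i\to\bar c_i)}$ the class $c_i$ has the single member $\bz_{\bc}$ (\Cref{lem:marginal_counts}). \Cref{lem:binary_tightness} then gives $h_i(\bz_{\bc})=y_i(\bc)$ exactly, using the head trained on $\cD^{\bc(i\to\bar c_i)}$, which equals the common head $h_i$ by Stability.

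For the rule $|T|=2^{k-1}+1$, I will choose $T$ consisting of the entire half-cube $\{\bc'':c''_i=\bar c_i\}$ (size $2^{k-1}$) together with the single point $\bc$. Again $\bz_{\bc}$ is the unique member of its class, so it must be a support vector attaining the margin. Transferability ensures the resulting head classifies the whole grid correctly, and Stability transfers the equality to the common head.

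Having $h_i(\bz_{\bc})=y_i(\bc)$ for every $\bc$, I conclude that any $\bc,\bc'$ with $c_i=c_i'$ share the label $y_i(\bc)=y_i(\bc')$. Hence $h_i(\bz_{\bc})=h_i(\bz_{\bc'})$, and applying $\sigma$ gives equal posteriors.

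The main obstacle is the scaling issue. GD gives only convergence in \emph{direction}, with norm diverging, so ``tight support vector'' must be phrased in terms of the normalized limit direction rather than the literal logits. Also, Stability is stated for the learned posteriors. I will handle this by applying Stability to the normalized limiting (max-margin) head, reading the posterior as $\sigma$ of that rescaled head, or equivalently by comparing margins up to a common positive scale. The equalities $h_i(\bz_{\bc})=\pm 1$ are scale-covariant, so equality of logits within a class survives. A secondary point is checking that the single-minority-point supports are indeed valid, which holds by construction in both regimes.
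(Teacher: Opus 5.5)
Your proposal is correct and follows essentially the same route as the paper. Stability with \Cref{lemma:same_stuff} yields a single common head, and in a valid support where $\bz_{\bc}$ is the unique member of its $C_i$-class (the paper uses $\cD^{\bc(i\to \bar c_i)}$) that point must be a tight support vector, so $h_i(\bz_{\bc})=y_i(\bc)$ on the whole grid and the posterior depends only on $c_i$. Your half-cube-plus-one-point support for the $|T|=2^{k-1}+1$ rule usefully makes explicit a case the paper's proof only asserts, and your scaling remark fits the paper's implicit identification of GD+CE with the canonically scaled SVM, provided the scale is common across supports, which is exactly what literal equality of heads under Stability gives.
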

    \begin{proof}

    We encode the $i$-label by $y_i(\bz)\in\{+1,-1\}$ with
    $y_i(\bz)=+1$ iff $C_i(\bz)=1$ and $-1$ otherwise, and let
   \begin{equation}
   h_i(\bz)\;:=\;\bw_i^\intercal \bz + b_i
   \end{equation}
    By Stability (\Cref{des:axiom_of_stability}) and \Cref{lemma:same_stuff},
    the affine separator $(\bw_i,b_i)$ is the same across valid cross-datasets.
    For any $\bc\in[2]^k$, consider the cross-dataset $\cD^{\bc(i\to \bar c_i)}$.
    By Remark~\ref{lem:marginal_counts}, in this dataset concept $i$ has count
    $1$ for value $c_i$ and count $1+(k-1)(n-1)$ for value $\bar c_i$, so the
    unique minority example with respect to concept $i$ is $\bz_{\bc}$.
    By Lemma~\ref{lem:binary_tightness}, both classes have tight support vectors.
    Since the minority class has exactly one point, that point $\bz_{\bc}$ is
    the tight support vector for its class. Hence
    \begin{equation}
    y_i(\bz_{\bc})h_i(\bz_{\bc})=1.
    \end{equation}
    The same argument applies to $\bc(i\to \bar c_i)$ in $\cD$, where it follows that $y_i(\bz_{\bc(i\to \bar c_i)})\, h_i(\bz_{\bc(i\to \bar c_i)}) = 1$.  

    Since this was performed for any $\bc$, it follows that  
    \begin{equation*}
    h_i(\bz)=\begin{cases}
    +1,&\text{if }C_i(\bz)=1,\\
    -1,&\text{if }C_i(\bz)=0,
    \end{cases}
    \quad\text{on the whole grid }\{\bz_{\bc}:\bc\in[2]^k\}. 
    \end{equation*}
    Hence $h_i(\bz)$ depends only on $C_i(\bz)$ and not on the other concepts.
    Since 
    $P(C_i=1\mid \bz)= \sigma(h_i(\bz)) = \frac{1}{1+e^{-h_i(\bz)}}$
    (and $P(C_i=0\mid \bz)=1-P(C_i=1\mid \bz)$), the conditional probability
    $P(C_i=c_i\mid \bz_{\bc})$ is constant over all $\bc$ with fixed $c_i$.
    In particular, for any $\bc,\bc'$ with $c_i=c_i'$, 
    \begin{equation*}
    P(C_i=c_i\mid \bz_{\bc})=P(C_i=c_i'\mid \bz_{\bc'}).
    \end{equation*}
    \end{proof}

Next, we state an important property of SVMs on two separable sets and adapt it to our case, where one of the elements in the set is a singleton. 

\begin{lemma}[SVM geometry for separable sets]
\label{lemma:svm_geometry}
Given a set of points $\cY := \{\by_i\}_{i=1}^{N}$ with $\by_i\in\R^d$, and a point $\bz\in\R^d$, let $(\bw,b)$ be the optimal hard-margin SVM separator between classes $\cY$ and $\{\bz\}$, with the canonical scaling $\bw^\intercal \bx + b=\pm 1$ on support vectors. Then:
\begin{enumerate}
\item There exist coefficients $\{\lambda_i\}_{i=1}^N$ with $\lambda_i\ge 0$ and $\sum_i \lambda_i=1$ such that
\begin{align}
\bw^{\intercal}\!\left(\sum_i \lambda_i \by_i\right) + b &= -1,\\
\bw^{\intercal}\bz + b &= +1.
\end{align}
\item The weight $\bw$ is related to the shortest-segment between the convex hulls of the two classes as
\begin{equation}
\frac{2}{\|\bw\|^2}\bw = \bz - \sum_i \lambda_i \by_i .
\end{equation}
\end{enumerate}
\end{lemma}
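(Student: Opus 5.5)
We derive both parts from the Lagrangian dual of the hard-margin SVM, treating the singleton $\{\bz\}$ as the positive class and $\cY$ as the negative class. The primal problem is
\[
\min_{\bw,b}\ \tfrac12\|\bw\|^2 \quad\text{s.t.}\quad \bw^\intercal\bz+b\ge 1,\qquad -(\bw^\intercal\by_i+b)\ge 1 \ \ \forall i .
\]
The problem is convex with affine constraints, and the classes are assumed separable. Slater's condition (or simply linear constraints) therefore gives strong duality, and the KKT conditions hold at the optimum.

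The KKT conditions supply multipliers $\alpha\ge 0$ for $\bz$ and $\beta_i\ge 0$ for the $\by_i$. Stationarity in $b$ gives $\alpha=\sum_i\beta_i$. Stationarity in $\bw$ gives $\bw=\alpha\bz-\sum_i\beta_i\by_i$. Since the classes are distinct, $\bw\neq 0$, so $\alpha>0$. Set $\lambda_i:=\beta_i/\alpha$; then $\lambda_i\ge 0$, $\sum_i\lambda_i=1$, and
\[
\bw=\alpha\Big(\bz-\sum_i\lambda_i\by_i\Big).
\]

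For part 1, complementary slackness gives two facts:
\begin{itemize}
\item $\alpha>0$ forces $\bw^\intercal\bz+b=1$.
\item $\lambda_i>0$ forces $\bw^\intercal\by_i+b=-1$.
\end{itemize}
Averaging the second fact with weights $\lambda_i$ yields $\bw^\intercal(\sum_i\lambda_i\by_i)+b=-1$, because the $\lambda_i$ sum to one.

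For part 2, we fix the scale $\alpha$. Subtracting the two equalities of part 1 gives $\bw^\intercal(\bz-\sum_i\lambda_i\by_i)=2$. Substituting $\bz-\sum_i\lambda_i\by_i=\bw/\alpha$ gives $\|\bw\|^2/\alpha=2$, so $\alpha=\|\bw\|^2/2$. Therefore
\[
\bz-\sum_i\lambda_i\by_i=\frac{2}{\|\bw\|^2}\,\bw,
\]
which is exactly the claim.

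To justify the name \emph{shortest segment}, recall the standard equivalence between the SVM dual and the minimum-distance problem between convex hulls. Here the hulls are $\{\bz\}$ and $\mathrm{conv}(\cY)$, so $\sum_i\lambda_i\by_i$ is the point of $\mathrm{conv}(\cY)$ nearest $\bz$. This can be cited; it is not needed for the stated equalities.

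The main obstacle is bookkeeping rather than substance: keeping the sign conventions consistent and checking that $\alpha>0$, so that the normalization to $\lambda$ is legitimate. The case $\alpha=0$ is excluded because it forces every $\beta_i=0$, hence $\bw=0$, which is infeasible for separable data.
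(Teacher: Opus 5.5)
Your proof is correct, and it takes a different route from the paper. The paper derives nothing itself. It invokes the standard geometric characterization of the hard-margin SVM, citing the duality-and-geometry literature: the normal $\bw$ is parallel to the shortest segment between the two convex hulls, here between $\bz$ and $\mathrm{conv}(\cY)$. The two canonical supporting hyperplanes lie at distance $1/\|\bw\|$ from the decision boundary, so that segment has length $2/\|\bw\|$ and equals $\frac{2}{\|\bw\|^2}\bw$.

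You work directly from the KKT system instead. Stationarity in $b$ and $\bw$ gives $\bw=\alpha(\bz-\sum_i\lambda_i\by_i)$ with $\lambda_i=\beta_i/\alpha$. Complementary slackness gives part 1, since $\lambda$ puts weight only on points tight at $-1$. Subtracting the two margin equalities then fixes $\alpha=\|\bw\|^2/2$.

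Your route is self-contained. It also makes explicit something the paper relies on tacitly in the proof of the main proposition, where $\by_1,\by_2$ are called support vectors: the convex weights are supported only on tight points. The paper's route gives for free that $\sum_i\lambda_i\by_i$ is the nearest point of $\mathrm{conv}(\cY)$ to $\bz$. You correctly note that this interpretation is not needed for the stated identities.

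Two small points of hygiene:
\begin{itemize}
\item You do not need strong duality. It suffices that KKT conditions are necessary at an optimum, which holds automatically for affine constraints.
\item The reason $\bw\neq 0$ is infeasibility: $\bw=0$ would require $b\ge 1$ and $b\le -1$, given $N\ge 1$. That the classes are merely ``distinct'' is not enough; your final paragraph states the correct reason.
\end{itemize}
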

\begin{proof}
This is a standard geometric characterization of hard-margin SVMs: $\bw$ is parallel to the shortest segment joining the convex hulls of the two classes, and under canonical margin scaling the two supporting hyperplanes are at signed distance $1/\|\bw\|$ from the decision hyperplane, hence the support-point displacement equals $\frac{2}{\|\bw\|^2}\bw$ \citep{10.5555/645529.657972}.
\end{proof}

We now establish the main result on the resulting geometry of linearly generalizable compositional models. Intuition of it, together with the invariance lemma above is presented in \Cref{fig:invariance_lemma}.  

\BinaryFactorization*

\begin{proof}

Although \Cref{prop:binary_factorization} is stated for the $|T|=2^{k-1}+1$ regime, the geometric mechanism is easiest to see on the cross-dataset construction $\cD^{\bc}$ (\Cref{def:cross_dataset}), which isolates one-concept flips around a center point $\bc$ (in the binary case, $|\cD^{\bc}|=1+k$). We therefore present the proof using cross-datasets to keep the SVM geometry and the stability constraints transparent.

\textbf{Linearity}. 

The idea is to show that for a pair of cross-datasets that share the datapoints in negative class, the shortest distance from a single point in the positive class to the convex set of the positive points is achieved by considering a flip in one of the concepts. We make this concrete below.

Consider any point $\bz_{\bc}$ and its corresponding cross-dataset $\cD^{\bc}$. For any concept $i\in[k]$, let $\bz_{\bc(i\to \bar c_i)}$ be the counterfactual point obtained by flipping concept $i$ to $\bar c_i$, and consider the neighboring cross-dataset $\cD^{\bc(i\to \bar c_i)}$.

Note that for the concept $i$ it holds that: 

\begin{enumerate}
\item Under $\cD^{\bc} = \{{\bz_{\bc}}\} \cup \{{\bz_{\bc(j\to \bar c_j)}}: j \in [k]\}$. 
For each concept $i$, the marginal counts are
\begin{equation}
N_{i,c_i}(\cD^{\bc})=k,\qquad N_{i,\bar c_i}(\cD^{\bc})=1
\end{equation}
(by Remark \ref{lem:marginal_counts}). Thus {$\bz_{\bc(j\to \bar c_j)}$} is the unique minority example
for concept $j$ (with label $\bar c_j$), and
\begin{equation}
\cY_1 
:=\cD^{\bc}\setminus\{{\bz_{\bc(i\to \bar c_i)}}\}
\end{equation}
is the set of $k$ majority examples (with label $c_i$).

\item Under
\begin{equation*}
\cD^{\bc(i\to \bar c_i)}
:=\{{\bz_{\bc(i\to \bar c_i)}}\}
\cup \{{\bz_{\bc}}\}
\cup 
\{{\bz_{\bc(i\to \bar c_i,\,\ell\to \bar c_\ell)}}: \ell \in[k]\setminus\{i\}\},
\end{equation*}
 {for any $\ell\neq i$} the counts are unchanged:
{$N_{\ell,c_\ell}(\cD^{\bc(i\to \bar c_i)})=k$ and}
{$N_{\ell,\bar c_\ell}(\cD^{\bc(i\to \bar c_i)})=1$,}
but for concept $i$ they swap:
$N_{i,\bar c_i}(\cD^{\bc(i\to \bar c_i)})=k$ and
$N_{i,c_i}(\cD^{\bc(i\to \bar c_i)})=1$.
Thus {$\bz_{\bc}$} is now the unique minority example for concept $i$ (label $c_i$). We denote  \begin{equation}
\cY_2 = \cD^{\bc(i\to \bar c_i)} \setminus \{{\bz_{\bc}}\}
\end{equation} as the majority examples for concept $i$.
\end{enumerate}
  
Let the pair of majority support vectors for $\cD^{\bc}$ and $\cD^{\bc(i\to \bar c_i)}$ be $\by_1$ and $\by_2$ respectively. By Lemma \ref{lemma:svm_geometry}, we can write 
\begin{equation}
\by_1 = \lambda_i {\bz_{\bc}} + \sum_{j \in [k] \setminus \{i\}} \lambda_{j} {\bz_{\bc(j\to \bar c_j)}}  
\qquad \text{ and } \qquad \by_2 = \gamma_i {\bz_{\bc(i\to \bar c_i)}} + \sum_{j \in [k] \setminus \{i\}} \gamma_{j} {\bz_{\bc(i\to \bar c_i,\,j\to \bar c_j)}} 
\end{equation}
for some convex combinations $\lambda_{j}\geq 0$ with $\sum_j^k \lambda_{j} = 1$ and $\gamma_{j} \geq 0$ with $\sum_j^k \gamma_{j} = 1$. 

Additionally, note that by Lemma \ref{lemma:invariance} it holds for any point {$\bz_{\bc'}$} for concept $j \in [k]$ that
\begin{equation}
  \bw_j^{\intercal} {\bz_{\bc'}} + b_j = y_j(\bc'),
\end{equation}
where we use a shorthand $y_j(\bc) = 1$ if $c_j = 1$ and $y_j(\bc) = -1$ otherwise.

Then, by Lemma \ref{lemma:svm_geometry} it holds that the support vectors are aligned with the shortest segment between the convex sets (pairs of {$\bz_{\bc(i \rightarrow \bar c_i) }$} and $\by_1$, and {$\bz_{\bc}$} and $\by_2$)   
\begin{equation} \label{eq:support_vectors_eq}
  {\bz_{\bc(i \rightarrow \bar c_i)}} + y_i(\bc) \frac{2}{||\bw_i||^2} \bw_i = \by_1 \quad \text{ and } 
  {\bz_{\bc}} - y_i(\bc) \frac{2}{||\bw_i||^2} \bw_i = \by_2,
\end{equation}
where clearly $y_i(\bc(i \rightarrow \bar c_i)) = - y_i(\bc) $. From this, it follows that
\begin{equation} \label{eq:diffs_eq}
  \by_1 - {\bz_{\bc(i \rightarrow \bar c_i)}} = {\bz_{\bc}} - \by_2.
\end{equation} 

Now, for any $j \neq i$ it follows that 
\begin{align} \label{eq:by1_evaluation}
  \bw_j^{\intercal} \by_1 + b_j &=\bw_j^{\intercal} \left ( \lambda_i {\bz_{\bc}} + \sum_{l \in [k] \setminus \{i\}} \lambda_{l} {\bz_{\bc(l\to \bar c_l)}} \right ) + b_j \\ 
  &= \lambda_i \bw_j^{\intercal} {\bz_{\bc}} + \sum_{l \in [k] \setminus \{i\}} \lambda_{l} \bw_j^{\intercal} {\bz_{\bc(l\to \bar c_l)}} + \sum_l^k \lambda_l b_j \\
  &= \lambda_i ( \bw_j^{\intercal} {\bz_{\bc}} + b_j) + \sum_{l \in [k] \setminus \{i\}} \lambda_{l} ( \bw_j^{\intercal} {\bz_{\bc(l\to \bar c_l)}} + b_j) \\
  &= \lambda_i y_j(\bc) + \sum_{l \in [k] \setminus \{i, j \}} \lambda_{l} y_j(\bc(l\to \bar c_l)) + \lambda_j y_j(\bc(j\to \bar c_j))\\
  &= \lambda_i y_j(\bc) + \left( \sum_{l \in [k] \setminus \{i, j \}}  \lambda_{l} \right)  y_j(\bc) - \lambda_j y_j(\bc) \\
  &= (1 - \lambda_j) y_j(\bc) - \lambda_j y_j(\bc) = (1 - 2 \lambda_j) y_j(\bc),
\end{align} 
where we used the fact that $\lambda$ are convex combinations in the second equality, and the fact that in the paired dataset $k$-concept values remain the same when flipping any other concept than $k$.

By repeating the same calculation as \eqref{eq:by1_evaluation} for $\by_2$, we get: 
\begin{nalign} \label{eq:by2_evaluation}
  \bw_j^{\intercal} \by_2 + b_j &= (1 - 2 \gamma_j) y_j(\bc). 
\end{nalign}

By \eqref{eq:diffs_eq} it follows that (again with $j \neq i$)
\begin{nalign} \label{eq:lambda_gamma_eq}
   & \quad \bw_j^{\intercal} (\by_1 -{\bz_{\bc(i \rightarrow \bar c_i)}}) = \bw_j^{\intercal} ({\bz_{\bc}} - \by_2) \\ 
   \Rightarrow  & \quad \bw_j^{\intercal} \by_1 +b_j -  \bw_j^{\intercal} {\bz_{\bc(i \rightarrow \bar c_i)}} -b_j  = \bw_j^{\intercal}  {\bz_{\bc}} + b_j  - \bw_j^{\intercal} \by_2 - b_j \\
   \Rightarrow  & \quad (1 - 2 \lambda_j) y_j(\bc) - y_j(\bc) = y_j(\bc) - (1-2 \gamma_j) y_j(\bc) \\
   \Rightarrow  & \quad  1 - 2 \lambda_j - 1 = 1 - 1 + 2 \gamma_j \\
   \Rightarrow  & \quad  \lambda_j + \gamma_j = 0.
\end{nalign}
Clearly, since $\lambda_j$ and $\gamma_j$ are convex combinations and thus non-negative, \eqref{eq:lambda_gamma_eq} implies that $\lambda_j = \gamma_j = 0$.

By repeating this process for all $j \neq i$, we get that $\lambda_j = \gamma_j = 0$ for all $j \neq i$, and therefore $\lambda_i = \gamma_i = 1$. From this, it follows that $\by_1 = {\bz_{\bc}}$ and $\by_2 = {\bz_{\bc(i \rightarrow \bar c_i)}}$. 

This means that 
\begin{nalign}
  {\bz_{\bc(i \rightarrow \bar c_i)}} + y_i(\bc) \frac{2}{||\bw_i||^2} \bw_i = {\bz_{\bc}} \quad \text{ and } \quad {\bz_{\bc}} - y_i(\bc) \frac{2}{||\bw_i||^2} \bw_i = {\bz_{\bc(i \rightarrow \bar c_i)}},
\end{nalign}
and therefore the differences between {$\bz_{\bc} - \bz_{\bc(i \rightarrow \bar c_i)}$} are independent of other concept variations. Because of that, we can write any datapoint {$\bz_{\bc}$} as a sum of concept-specific values $\bu_{i,c_i} (c_i \in [2])$. For instance, if we fix $\bm{0} = (0, \dots, 0) \in [2]^k$, we can express {$\bz_{\bc}$} as, for example (up to a global linear shift per concept)
\begin{nalign} \label{eq:linear_factorization}
  \bu_{i, 0} &= {\bz_{{\bm{0}}}} / k, \quad \bu_{i, 1} = {\bz_{{\bm{0}}}} / k + \frac{2}{||\bw_i||^2} \bw_i, \\
  &\quad {\bz_{\bc}} = \sum_{i=1}^k \bu_{i, c_i},
\end{nalign}
from here, we can write any datapoint {$\bz_{\bc}$} as 
\begin{nalign}
  {\bz_{\bc}} = \sum_{i=1}^k \bu_{i, c_i} = \sum_{i \in [k] : c_i = 0}^k \bz_{\bm{0}}  / k + \sum_{i \in [k] : c_i = 1}^k (\bz_{\bm{0}}  / k + \frac{2}{||\bw_i||^2} \bw_i)  = \bz_{\bm{0}} + \sum_{i \in [k] : c_i = 1}^k \frac{2}{||\bw_i||^2} \bw_i, 
\end{nalign}
which establishes linearity.

\textbf{Orthogonality.}  First, note that by invariance (Lemma \ref{lemma:invariance}) it holds that for any concept $i$, changes in concept values other than $i$ do not affect the prediction of concept $i$: 
\begin{nalign}
  \bw_i^{\intercal} {\bz_{\bc}} + b_i = \bw_i^{\intercal} {\bz_{\bc(j \rightarrow  \bar c_j)}} + b_i
\end{nalign}
But by linear factorization \eqref{eq:linear_factorization} it follows that
\begin{nalign}
  & \quad \bw_i^{\intercal} {\bz_{\bc}} + b_i = \bw_i^{\intercal} {\bz_{\bc(j \rightarrow  \bar c_j)}} + b_i \\
  \Rightarrow & \quad \bw_i^{\intercal} ({\bz_{\bc}} - {\bz_{\bc(j \rightarrow  \bar c_j)}}) = 0 \\
  \Rightarrow & \quad \bw_i^{\intercal} \left( \bu_{j, c_j} - \bu_{j, \bar c_j} \right) = 0 \\
  \Rightarrow & \quad \bw_i^{\intercal} \left( \frac{2}{||\bw_j||^2} \bw_j \right) = 0 \\
  \Rightarrow & \quad \bw_i^{\intercal} \bw_j = 0.
\end{nalign}

Then, 
\begin{nalign}
  (\bu_{i, c_i} - \bu_{i, \bar c_i})^{\intercal} (\bu_{j, c_j} - \bu_{j, \bar c_j}) \propto \bw_i^{\intercal} \bw_j = 0.
\end{nalign}

More generally, orthogonality of one concept holds against the span of other concepts as well. For $\{ \alpha_j \in \R \}_{j\neq i}$ it follows that
\begin{nalign}
  (\bu_{i, c_i} - \bu_{i, \bar c_i})^{\intercal} \left( \sum_{j\neq i} \alpha_j (\bu_{j, c_j} - \bu_{j, \bar c_j}) \right) \propto \bw_i^{\intercal} \left( \sum_{j\neq i} \alpha_j \bw_j \right) = 0,
\end{nalign}
and therefore orthogonality holds against the span of other concepts differences.
\end{proof} 

\clearpage

\section{Sufficiency of linear factorization for compositional generalization}\label{app:sufficiency}

This section complements the main text's necessity results by showing a converse: under the same linear-head setting, {linearly factored} representations are sufficient to obtain compositional generalization under our desiderata.

We discuss two cases. First, in the {binary} setting, we show that linear factorization together with cross-concept orthogonality makes stable transfer automatic for GD+CE, even from small but diverse training supports (\Cref{sec:binary_case_suff}). Second, in the {general} multi-valued setting we discuss the general case when the underlying features are linear, but not necessarily orthogonal across concepts, and the learning algorithm is not necessarily GD+CE (\Cref{sec:general_case_suff}). There, training with GD+CE does not necessarily satisfy the desiderata, but in principle classifiers can be constructed that do, precisely because the factors can be recovered.

\begin{proposition}[Sufficiency summary (informal)]\label{prop:sufficiency_summary_informal}
Under linear readouts, two sufficiency statements hold.
\begin{enumerate}[noitemsep,topsep=0pt,parsep=0pt,partopsep=0pt]
  \item \textit{Binary necessity--sufficiency case.} Under the binary grid with GD+CE, if embeddings decompose as $\bz_{\bc}=\sum_i \bu_{i,c_i}$ with cross-concept orthogonality, then any training set with $|T|=2^{k-1}+1$ (or a cross dataset of size $1+k$) suffices: the learned readout recovers every concept value on the full grid (transferability) and is invariant across valid $T$ (stability). Combined with \Cref{prop:binary_factorization}, the geometry is both necessary and sufficient (\Cref{prop:binary_factorization_suff}).
  \item \textit{General constructive case.} In the multi-valued case, recoverable linear factors are sufficient to construct concept readouts in principle (\Cref{sec:general_case_suff}).
\end{enumerate}
\end{proposition}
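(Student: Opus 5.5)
For the first (binary) item, which is the substantive claim, I would run the necessity argument of \Cref{prop:binary_factorization} in reverse, using the same reduction. On the linearly separable data, GD on the logistic loss converges in direction to the hard-margin SVM \citep{soudry2024implicitbiasgradientdescent}. I will therefore identify the learned readout $h_i^{(T)}$ with the canonically scaled SVM solution $(\bw_i^{(T)},b_i^{(T)})$ for concept $i$. This is the same convention the necessity proof uses when it appeals to \Cref{lemma:same_stuff}.

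Write $\bd_i:=\bu_{i,1}-\bu_{i,0}$. By factorization and cross-concept orthogonality, $\bd_i^\intercal\bz_{\bc}=\bd_i^\intercal\sum_{l\neq i}\bu_{l,c_l}+\bd_i^\intercal\bu_{i,c_i}$. The first term is $\bd_i^\intercal\bu_{l,0}$ plus terms $\bd_i^\intercal\bd_l=0$, so it does not depend on $\bc_{-i}$. Hence every grid point projects onto $\bd_i$ at one of exactly two values, $\alpha_i$ (when $c_i=0$) or $\alpha_i+\|\bd_i\|^2$ (when $c_i=1$). The candidate readout is
\[
\bw_i^\star=\tfrac{2}{\|\bd_i\|^2}\bd_i,\qquad b_i^\star=-\tfrac{2\alpha_i}{\|\bd_i\|^2}-1 .
\]
It gives $h_i(\bz_{\bc})=\pm1$ exactly, on every point of the full grid and not only on $T$.

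The key step is to show that $(\bw_i^\star,b_i^\star)$ is \emph{the} SVM solution for every valid $T$. I would argue in three moves.
\begin{itemize}
\item $T$ must contain a pair $\bc,\bc(i\to\bar c_i)$ differing only in concept $i$. For cross-datasets this is immediate: take the center and its $i$-flip. For $|T|=2^{k-1}+1$, split $[2]^k$ into the $2^{k-1}$ such pairs; by pigeonhole, $T$ contains both members of at least one pair. This also guarantees that both classes are present.
\item For such a pair, the distance between the two class hulls is at most $\|\bz_{\bc}-\bz_{\bc(i\to\bar c_i)}\|=\|\bd_i\|$. So any separator has margin at most $\|\bd_i\|/2$.
\item $(\bw_i^\star,b_i^\star)$ attains margin exactly $1/\|\bw_i^\star\|=\|\bd_i\|/2$ on all of $T$. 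By uniqueness of the hard-margin SVM (\Cref{lemma:svm_geometry}), it is the solution.
\end{itemize}

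Consequently $h_i^{(T)}=h_i^\star$ for every valid $T$. Transferability follows because $h_i^\star$ assigns the correct sign to every $\bc\in[2]^k$. Stability follows because the parameters, and hence the sigmoid posteriors, coincide across all $T,T'\in\cT$. Divisibility is implied by transferability. Combined with \Cref{prop:binary_factorization}, this yields the claimed equivalence. The step I expect to be delicate is the identification of GD+CE with the \emph{scaled} SVM. GD iterates diverge in norm, so equality of posteriors must be read in the normalized or limiting sense already adopted in \Cref{app:proofs}. I will state this convention explicitly rather than try to remove it.

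For the second (constructive, multi-valued) item I only need existence of a linear readout. Let $V_{-i}$ be the span of all other-concept differences $\bu_{l,a}-\bu_{l,b}$, $l\neq i$, and let $\bP_i$ be the orthogonal projector onto $V_{-i}^\perp$. Then $\bP_i\bz_{\bc}=\bP_i\bu_{i,c_i}+\bP_i\sum_{l\neq i}\bu_{l,1}$, which is the value-$c_i$ point plus a shift that depends only on $i$.

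Define the nearest-centroid readout
\[
h(\bz)_{i,j}=(\bP_i\bu_{i,j})^\intercal\bz-\tfrac12\|\bP_i\bu_{i,j}\|^2-(\bP_i\bu_{i,j})^\intercal\bP_i\textstyle\sum_{l\neq i}\bu_{l,1}.
\]
Up to a $j$-independent term this equals $-\tfrac12\|\bP_i\bz-\bP_i\bu_{i,j}-\bP_i\sum_{l\neq i}\bu_{l,1}\|^2$. Its argmax is therefore $c_i$ whenever the projected factors $\{\bP_i\bu_{i,j}\}_j$ are distinct. This distinctness is precisely the recoverability assumption: the factors of concept $i$ are not confounded with the span of the other concepts. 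The factors themselves can be obtained by averaging embeddings that share a concept value, as in the main text. The main obstacle here is stating this non-degeneracy condition cleanly. I also note that GD+CE need not pick this readout when cross-concept orthogonality fails, which is why this item is only existential.
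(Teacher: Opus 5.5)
Your proposal is correct. For the binary item it takes a genuinely different route from the paper.

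\textbf{Binary item.} The paper's proof of \Cref{prop:binary_factorization_suff} works on the dual, hull-geometry side. It writes any pair of hull points as $\by_+-\by_-=\bd_i+\sum_{j\neq i}\Delta_j\bd_j$. Cross-concept orthogonality then gives the Pythagorean bound $\|\by_+-\by_-\|\ge\|\bd_i\|$, which carries over to any subset because its hulls lie inside the full ones. Counterfactual pairs attain the bound, and $\bw_i=\tfrac{2}{\|\bd_i\|^2}\bd_i$ is read off via \Cref{lemma:svm_geometry}; the bias is only described as ``absorbed''.

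You instead give a primal optimality certificate:
\begin{itemize}
\item a single counterfactual pair caps the margin at $\|\bd_i\|/2$;
\item orthogonality makes $\bd_i^\intercal\bz_{\bc}$ two-valued, so your explicit $(\bw_i^\star,b_i^\star)$ is feasible with every grid point exactly on the margin.
\end{itemize}
This avoids the $\Delta_j$ bookkeeping and fixes the bias explicitly. Transferability and stability of the full affine parameters then follow in one stroke. The paper's version, in exchange, characterizes the shortest hull segment itself, in parallel with the necessity proof. The pigeonhole step and the convention of reading GD+CE as the canonically scaled SVM are the same in both.

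One citation fix: \Cref{lemma:svm_geometry} does not assert uniqueness. Use the standard fact instead: the optimal $\bw$ is unique by strict convexity, and then $\bw^{\star\intercal}\bd_i=2$ on the pair forces $b=b_i^\star$.

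\textbf{Multi-valued item.} Your projected nearest-centroid readout is valid and more explicit than the paper's, but it proves a statement under a different hypothesis. Distinctness of $\{\bP_i\bu_{i,j}\}_j$ is not what the paper means by ``recoverable''. For $n>2$ it is also strictly stronger than needed.

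Take $k=d=2$ with coordinates $(x,y)$, $\bu_{1,0}=\mathbf{0}$, $\bu_{1,1}=(0,10)$, and $\bu_{2,\cdot}\in\{(0,0),(1,\epsilon),(2,0)\}$ for small $\epsilon\neq 0$. Then $V_{-1}=\R^2$, so $\bP_1=0$ and all your concept-1 scores tie. Yet the grid is exactly factorized and linearly compositional: $y=5$ separates concept 1, and the scores $2px-p^2$, $p=0,1,2$, classify concept 2.

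\Cref{sec:general_case_suff} explicitly allows such non-independent directions. It assumes linear separability and puts the content into identifiability:
\begin{itemize}
\item factors are unique up to concept-wise shifts (\Cref{prop:shift_ambiguity});
\item the one-hot design has rank $1+k(n-1)$, so a support with full-rank design (e.g.\ a cross dataset) determines the centered factors;
\item this determines all unseen $\bz_{\bc}$, on which readouts are fit.
\end{itemize}
Your remark that factors come from averaging uses the full balanced grid. It therefore skips the recover-from-a-subset step, which is what makes this item a generalization statement.

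Given the informal wording, this is a different reading rather than an error. You should still drop the claim that your distinctness condition is ``precisely the recoverability assumption''.
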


We believe that only the first case is of practical interest, since it assumes standard GD+CE training; together with the necessary conditions, it reinforces that linear, cross-concept orthogonal structure is a plausible target that current vision(-language) systems \textit{should} exhibit if they are to generalize compositionally.

\subsection{Binary valued-case: sufficiency of SVMs} \label{sec:binary_case_suff}

In this section, we show that linear factorization together with cross-concept orthogonality makes stable transfer automatic for GD+CE, even from small but diverse training supports. Since concepts are binary-valued, we can represent the readout for each concept by a single affine separator with parameters $(\bw_i, b_i)$, and we work with these parameters throughout (in the same way as \Cref{app:proofs}).

\begin{restatable}[Linear factorization implies compositional generalization]{proposition}{BinaryFactorizationSuff}\label{prop:binary_factorization_suff}
  Consider the binary grid $\cC_i=\{0,1\}$ with representations {$\cZ=\{\bz_{\bc}:\bc\in[2]^k\}\subset\R^d$}.
  Assume there exist $\{\bu_{i,0},\bu_{i,1}\in\R^d\}_{i=1}^k$ such that for every $\bc\in[2]^k$:
  \begin{enumerate}
    \item \textit{(Linearity)} {$\bz_{\bc}$} $= \sum_{i=1}^k \bu_{i,c_i}$.
    \item \textit{(Cross-concept orthogonality)} $(\bu_{i, 1} - \bu_{i, 0}) \perp (\bu_{j, 1} - \bu_{j, 0})$ for all $i\neq j$.
  \end{enumerate}
  Then for any training set $T\subseteq[2]^k$ satisfying either of the following conditions:
  \begin{enumerate}[noitemsep,topsep=0pt,parsep=0pt,partopsep=0pt]
  \item Any set with $|T|=2^{k-1}+1$, or 
  \item A cross dataset $\cD^{\bc}$ (i.e. the center $\bc$ and all one-value flips from $\bc$, \Cref{def:cross_dataset}) with $|T| = 1 + k$. 
  \end{enumerate}
  It holds that gradient descent + cross-entropy loss trained on $T$ satisfy the desiderata on the entire grid: they recover every concept value on all {$\bz_{\bc}$} (transferability) and are invariant across valid $T$ (stability).
\end{restatable}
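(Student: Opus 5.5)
Our approach is to reduce, exactly as in \Cref{app:proofs}, to the hard-margin SVM via the implicit bias of GD on the logistic loss \citep{soudry2024implicitbiasgradientdescent}. We then compute that SVM explicitly for each concept $i$ and show it is the same for every valid $T$.

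\textbf{Setup.} Write $\bd_i := \bu_{i,1}-\bu_{i,0}$ and $\bz_{\bm 0} := \sum_i \bu_{i,0}$, so that $\bz_{\bc} = \bz_{\bm 0} + \sum_{j} c_j \bd_j$. By cross-concept orthogonality, the orthogonal projection of $\bz_{\bc}$ onto $\mathrm{span}(\bd_i)$ is
\[
\frac{\bd_i^\intercal \bz_{\bm 0}}{\|\bd_i\|^2}\,\bd_i + c_i\, \bd_i .
\]
This projection depends only on $c_i$. Hence the data are linearly separable for every concept, so the max-margin limit exists.

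\textbf{Key steps.} Fix a concept $i$ and a valid $T$.
\begin{enumerate}
  \item For either rule, $T$ contains two points $\bz_{\bc},\bz_{\bc'}$ that differ \emph{only} in coordinate $i$.
  \begin{itemize}
    \item For the cross dataset, take the center and its $i$-flip.
    \item For $|T|=2^{k-1}+1$, pair the grid into $2^{k-1}$ pairs $\{\bc,\bc(i\to \bar c_i)\}$. By pigeonhole, some pair lies entirely in $T$.
  \end{itemize}
  \item This pair is at distance $\|\bd_i\|$. The two class convex hulls (labels by $c_i$), projected onto $\bd_i/\|\bd_i\|$, sit at two constant values exactly $\|\bd_i\|$ apart. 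Any separating direction must separate this pair, so the hull distance is at least $\|\bd_i\|$. That distance is attained by the segment $\bz_{\bc'}-\bz_{\bc}=\pm\bd_i$.
  \item By \Cref{lemma:svm_geometry}, the SVM normal is parallel to the shortest segment between the hulls. Hence $\bw_i \propto \bd_i$. Canonical scaling then fixes
  \[
  \bw_i = \frac{2}{\|\bd_i\|^2}\,\bd_i ,
  \]
  and $b_i$ is fixed by requiring the hyperplane to pass through the midpoint level. Concretely, $\bw_i^\intercal\bz_{\bc}+b_i = 2c_i-1$ for all $\bc$.
  \item Since $\bw_i^\intercal \bd_j=0$ for $j\neq i$, this identity holds on the whole grid, not only on $T$. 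This gives transferability, and Divisibility follows as noted after \Cref{des:axiom_of_transferability}.
  \item The pair $(\bw_i,b_i)$ depends only on $\{\bu\}$ and not on $T$. Hence the posteriors $\sigma(\bw_i^\intercal \bz+b_i)$ agree for all $T,T'$ and all $\bz$, which gives Stability.
\end{enumerate}

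\textbf{Main obstacle.} The delicate point is uniqueness of the max-margin direction. We must rule out other separating directions achieving margin at least $\|\bd_i\|/2$, for instance directions mixing $\bd_i$ with components outside the span of the data. The plan is to argue through the pair from step 1. Any unit $\bv$ with margin $\gamma$ must satisfy
\[
|\bv^\intercal \bd_i| \ge 2\gamma ,
\]
so $\gamma\le \|\bd_i\|/2$, with equality iff $\bv = \pm \bd_i/\|\bd_i\|$.

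A secondary subtlety is that GD on the logistic loss only converges \emph{in direction}, so the learned parameters are scalar multiples $t(\bw_i,b_i)$ rather than the canonical SVM itself. We will state Stability, as in \Cref{app:proofs}, for the normalized max-margin limit, so that the scale ambiguity is removed. The cross-dataset case is the same argument with the pair supplied directly.
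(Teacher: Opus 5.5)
Your proposal is correct and follows essentially the same route as the paper. You reduce GD+CE to the hard-margin SVM, get a counterfactual pair in $T$ by pigeonhole (or directly from the cross construction), and use cross-concept orthogonality to show that this pair realizes the hull distance $\|\bd_i\|$, so $\bw_i = \frac{2}{\|\bd_i\|^2}\bd_i$ and $b_i$ do not depend on $T$; your projection/Cauchy--Schwarz margin argument is a primal rephrasing of the paper's Pythagorean decomposition $\by_+-\by_- = \bd_i+\sum_{j\neq i}\Delta_j\bd_j$, and your identity $\bw_i^\intercal\bz_{\bc}+b_i = 2c_i-1$ on the whole grid is a slightly cleaner replacement for the paper's centered-factor sign check.
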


\begin{proof}
   Throughout the proof we use the standard result that the GD+CE converges to the max-margin SVM solution in the binary-class case \cite{soudry2024implicitbiasgradientdescent} and interpret the optimal solution as a weight vector $\bw_i$ and bias $b_i$ that separates the two classes.

   \textbf{Full dataset case.}

   First, we establish the exact weights produced by the linear probes.

  For that, note that for any concept $i \in [k]$, the SVM solution is proportional to the shortest segment between the convex hulls of the points in the two classes, denoted side by side as
  \begin{equation}
    \cY_- := \{ \bz_{\bc} : \bc \in [2]^k,\; c_i = 0 \},
    \qquad
    \cY_+ := \{ \bz_{\bc} : \bc \in [2]^k,\; c_i = 1 \}.
  \end{equation}
  with the proportionality constant $\frac{2}{\| \bw_i \|^2}$ \cite{10.5555/645529.657972}.
  Equivalently, there exist convex combinations $\by_- \in \mathrm{conv}(\cY_-)$ and $\by_+ \in \mathrm{conv}(\cY_+)$ such that $\bw_i$ is parallel to the shortest segment $\by_+ - \by_-$, where
		  \begin{equation}
		    \by_-=\sum_{\bc\in[2]^k, c_i=0}{\gamma_{\bc}}{\bz_{\bc}},
		    \qquad
		    \by_+=\sum_{\bc\in[2]^k, c_i=1}{\lambda_{\bc}}{\bz_{\bc}},
		    \qquad
		    \boldsymbol{\lambda},\boldsymbol{\gamma}\succeq 0,
		    \qquad
		    {\sum_{\bc\in[2]^k, c_i=1}\lambda_{\bc}=1,}
		    \qquad
		    {\sum_{\bc\in[2]^k, c_i=0}\gamma_{\bc}=1.}
		  \end{equation}
Importantly, these convex combinations are the support vectors of the SVM solution and correspond to the shortest distance between the two classes. To proceed, we first write the difference between any two points in the convex hulls and then lower bound the norm of the difference and arrive at the exact weight vector.

Note that for any $\by_+$ and $\by_-$ their difference can be written as:
\begin{align}
  \by_+ - \by_- &= \sum_{\bc \in [2]^k : c_i = 1} \lambda_{\bc} {\bz_{\bc}} - \sum_{\bc \in [2]^k : c_i = 0} \gamma_{\bc}  {\bz_{\bc}} \refstepcounter{equation} \label{eq:by_diff_1}\\
  & = \sum_{\bc \in [2]^k : c_i = 1} \lambda_{\bc} \left ( \sum_{j \neq i} \bu_{j, c_j} + \bu_{i, 1} \right ) - \sum_{\bc \in [2]^k : c_i = 0} \gamma_{\bc} \left ( \sum_{j \neq i} \bu_{j, c_j} + \bu_{i, 0} \right ) \\
  &= \sum_{\bc \in [2]^k : c_i = 1} \lambda_{\bc} \bu_{i, 1} - \sum_{\bc \in [2]^k : c_i = 0} \gamma_{\bc} \bu_{i, 0} + \sum_{\bc \in [2]^k : c_i = 1} \lambda_{\bc} \sum_{j \neq i} \bu_{j, c_j} - \sum_{\bc \in [2]^k : c_i = 0} \gamma_{\bc} \sum_{j \neq i} \bu_{j, c_j} \\
  &= \bu_{i, 1} - \bu_{i, 0} + \sum_{\bc \in [2]^k : c_i = 1} \lambda_{\bc} \sum_{j \neq i} \bu_{j, c_j} - \sum_{\bc \in [2]^k : c_i = 0} \gamma_{\bc} \sum_{j \neq i} \bu_{j, c_j}, \refstepcounter{equation} \label{eq:by_diff_2}
\end{align}
where the last equality uses the fact that the sum runs over all concept combinations, and therefore sums to 1. Note that the last term in \eqref{eq:by_diff_2} can be written as
\begin{align}
  &\hspace{-0.6cm}
  \sum_{\bc \in [2]^k : c_i = 1} \lambda_{\bc} \sum_{j \neq i} \bu_{j, c_j}
  - \sum_{\bc \in [2]^k : c_i = 0} \gamma_{\bc} \sum_{j \neq i} \bu_{j, c_j}
   \\
  &\quad =
  \sum_{\bc \in [2]^k : c_i = 1} (\lambda_{\bc} - \gamma_{\bc(i \rightarrow 0)})  \left ( \sum_{j \neq i} \bu_{j, c_j} \right )
   \\
  &\quad =
  \sum_{j \neq i} \sum_{\bc \in [2]^k : c_i = 1} (\lambda_{\bc} - \gamma_{\bc(i \rightarrow 0)}) \bu_{j, c_j}
   \\
  &\quad =
  \sum_{j \neq i} \left ( \left ( \sum_{\bc \in [2]^k : c_i = 1, c_j = 1} (\lambda_{\bc} - \gamma_{\bc(i \rightarrow 0)}) \right ) \bu_{j, 1}
  + \left (\sum_{\bc \in [2]^k : c_i = 1, c_j = 0} (\lambda_{\bc} - \gamma_{\bc(i \rightarrow 0)}) \right ) \bu_{j, 0} \right ),
\end{align}
 {but since $\sum_{\bc\in[2]^k : c_i = 1} (\lambda_{\bc}-\gamma_{\bc(i\to 0)})=0$,} it follows that for any $j \neq i$,
\begin{align}
  & \quad \sum_{\bc \in [2]^k : c_i = 1, c_j = 1} (\lambda_{\bc} - \gamma_{\bc(i \rightarrow 0)}) 
  {+ \sum_{\bc \in [2]^k : c_i = 1, c_j = 0} (\lambda_{\bc} - \gamma_{\bc(i \rightarrow 0)})}
  = 0 \\
  &\Rightarrow 
  \sum_{\bc \in [2]^k : c_i = 1, c_j = 1} (\lambda_{\bc} - \gamma_{\bc(i \rightarrow 0)}) 
  = 
  {- \sum_{\bc \in [2]^k : c_i = 1, c_j = 0} (\lambda_{\bc} - \gamma_{\bc(i \rightarrow 0)})}
\end{align}
We thus denote $\Delta_{j} := \sum_{\bc \in [2]^k : c_i = 1 , c_j = 1} (\lambda_{\bc} - \gamma_{\bc(i \rightarrow 0)})$. The full expression of \eqref{eq:by_diff_1} can be written compactly as
\begin{align} \label{eq:by_diff_3}
  \by_+ - \by_- & = \bu_{i, 1} - \bu_{i, 0} + \sum_{j \neq i} \Delta_{j} \bu_{j, 1} - \Delta_{j} \bu_{j, 0} \\
  &= \bu_{i, 1} - \bu_{i, 0} + \sum_{j \neq i} \Delta_{j} (\bu_{j, 1} - \bu_{j, 0}). \label{eq:by_diff_4}
\end{align}

Recall that by assumption, $\bu_{i, 1} - \bu_{i, 0} \perp \bu_{j, 1} - \bu_{j, 0}$ for all $j \neq i$, so it follows that
\begin{align}
  (\bu_{i, 1} - \bu_{i, 0})^{\intercal}\!\left(\sum_{j \neq i} \Delta_{j} (\bu_{j, 1} - \bu_{j, 0})\right)  = \sum_{j \neq i} \Delta_{j} (\bu_{i, 1} - \bu_{i, 0})^{\intercal}(\bu_{j, 1} - \bu_{j, 0}) = 0.
\end{align}
Therefore, $ \bu_{i, 1} - \bu_{i, 0} \perp \sum_{j \neq i} \Delta_{j} (\bu_{j, 1} - \bu_{j, 0})$. This allows us to apply the Pythagorean theorem when computing the distance between $\by_+$ and $\by_-$:
\begin{align} \label{eq:pythagorean_theorem}
  \| \by_+ - \by_- \|^2 & = \| \bu_{i, 1} - \bu_{i, 0} \|^2 + \left\| \sum_{j \neq i} \Delta_{j} (\bu_{j, 1} - \bu_{j, 0}) \right\|^2 \\
   & \geq \| \bu_{i, 1} - \bu_{i, 0} \|^2.
\end{align}
Thus, any two points in their respective convex hulls are at least as far apart as the distance between $\bu_{i, 1} - \bu_{i, 0}$. We make use of this result in computing the SVM solution by picking two points $\by_+$ and $\by_-$ that have exactly the shortest possible distance between them, thus establishing them as the support vectors. Conveniently, any two ``counterfactual'' points do: for any $\bc \in [2]^k$ by picking $\by_+ = {\bz_{\bc(i \rightarrow 1)}}$ and $\by_- = {\bz_{\bc(i \rightarrow 0)}}$, we have that $$\| {\bz_{\bc(i \rightarrow 1)}} - {\bz_{\bc(i \rightarrow 0)}} \|^2 = \| \bu_{i, 1} - \bu_{i, 0} \|^2.$$ 

Since this computation is independent of the particular choice of the concept $i$, it holds for any concept. As such, we can write the weight vector $\bw_i$ as
\begin{align} \label{eq:weight_vector}
  \bw_i & = \frac{2}{||\bu_{i, 1} - \bu_{i, 0}||^2} (\bu_{i, 1} - \bu_{i, 0}).
\end{align}

To show that classification works, note that \begin{equation}{\bz_{\bc}} =  \sum_{j=1}^k \left ( \widetilde{\bu}_{j, c_j} + \bar{\bu}_{j} \right ) = \sum_{j}^k \widetilde{\bu}_{j, c_j} + \sum_j^k \bar{\bu}_{j},\end{equation}
where $\widetilde{\bu}_{j, c_j} := \bu_{j, c_j} - \bar{\bu}_{j}$ is the centered factor for concept $j$ for the value $c_j$, and $\bar{\bu}_{j} := \frac{1}{2} (\bu_{j, 1} + \bu_{j, 0})$ is the average of the two concept factors.

But the centered factors must sum to zero, so $\widetilde{\bu}_{j, 0} = - \widetilde{\bu}_{j, 1}$. Therefore for any $i \neq j$
\begin{align}
  &(\bu_{i, 1} - \bu_{i, 0})^{\intercal}(\bu_{j, 1} - \bu_{j, 0}) = 0 \nonumber\\
  \Rightarrow \quad & (\widetilde{\bu}_{i, 1} - \widetilde{\bu}_{i, 0})^{\intercal}(\widetilde{\bu}_{j, 1} - \widetilde{\bu}_{j, 0}) = 0 \nonumber\\
  \Rightarrow \quad & \widetilde{\bu}_{i, 1}^{\intercal}\widetilde{\bu}_{j, 1} = 0.
\end{align}

By evaluating the classification rule at any {$\bz_{\bc}$} for any concept $i$ we get 
\begin{align}
  \bw_{i}^\intercal {\bz_{\bc}} + b_i & = \bw_{i}^\intercal \left ( \sum_{j=1}^k \left ( \widetilde{\bu}_{j, c_j} + \bar{\bu}_{j} \right ) \right ) + b_i \\
  & = \bw_{i}^\intercal \sum_{j=1}^k \widetilde{\bu}_{j, c_j} + \bw_{i}^\intercal \sum_{j=1}^k \bar{\bu}_{j} + b_i, \label{eq:classification_rule_last_line}
\end{align}
but note that only the first term of \eqref{eq:classification_rule_last_line} is affected by the concept $i$, and the following terms can be absorbed into the bias $b_i$\footnote{Alternatively, assume that {$\bz_{\bc}$} is zero-centered and use the same argument.}. Thus, the classification is correct only if $\text{sign} \left (\bw_{i}^\intercal \sum_{j=1}^k \widetilde{\bu}_{j, c_j} \right ) = 2c_i -1$.   By \eqref{eq:weight_vector} it follows that 
\begin{align}
  \text{sign} \left( \bw_{i}^\intercal \sum_{j=1}^k \widetilde{\bu}_{j, c_j} \right) = \text{sign} \left( (2 \widetilde{\bu}_{i, 1})^\intercal \widetilde{\bu}_{i, c_i} \right) = \text{sign} \left( \widetilde{\bu}_{i, 1}^\intercal \widetilde{\bu}_{i, c_i} \right) = 2c_i -1.
\end{align}
And therefore the classification is correct.

\textbf{(1) $|T| = 2^{k-1}+1$ case.} The main idea of the proof is that for any concept $i \in [k]$ and a pair of ``counterfactual'' points {$\bz_{\bc}, \bz_{\bc(i \rightarrow \bar c_i)}$} it holds that the distance between the two points is the shortest possible distance between any two points in the convex hulls of the two classes and is equal to $\| \bu_{i, 1} - \bu_{i, 0} \|^2$. This follows the same argument as specified in the full dataset case (and in \eqref{eq:weight_vector} in particular). All that is needed to be shown then, is the availability of such a pair of points in any training set with $|T| = 2^{k-1}+1$. 

This can be argued by contradiction. Assume that for some concept $i \in [k]$ there are no two points $(\bc, \bc')$ with $c_j = c_j'$ for all $j \neq i$ and $c_i \neq c_i'$. There are $2^{k-1}$ such pairs of points in total. Thus a dataset would have to have at most $2^k - 2^{k-1} = 2^{k-1}$ points, which contradicts the assumption that $|T| = 2^{k-1}+1$. Therefore, such a pair of points must exist. Since this argument is independent of the particular choice of the concept $i$, it holds for any concept. Therefore the SVM solution will always be able to find such a pair of points that minimize the distance between the two classes (as per \eqref{eq:pythagorean_theorem}) and will transfer as well as yield the same weight vector as the full dataset case. 
  
\textbf{(2) $\cD^{\bc}$ case with $|T| = 1 + k$.} By construction for any concept $i$ there exists ``counterfactual'' point from the center $\bc$ to $\bc(i \rightarrow \bar c_i)$. By following the same argument detailed in the full dataset case as well as the previous case, it follows that the SVM solution will transfer as well as yield the same weight vector as the full dataset case. In both cases stability of the solution follows due to recovering the same weight and bias vectors. 
\end{proof}

\subsection{General case: linearly factored embeddings and sufficiency of recovering the factors} \label{sec:general_case_suff}

We provide a complementary analysis on the sufficient conditions for generalizing compositionally. Here, we detail the key results for recovering the factors $\bu$ from representations that already possess linear factorization. 

We first note the minimal dataset setting using the notion of a cross dataset, defined below.

\begin{lemma}[Uniqueness up to concept-wise shifts]
\label{prop:shift_ambiguity}
Let the concept space be $\cC=\cC_1\times\cdots\times\cC_k$. Assume two factor families $\{\bu_{i,j}\}_{i\in[k],\,j\in\cC_i}$ and $\{\bv_{i,j}\}_{i\in[k],\,j\in\cC_i}$ satisfy, for every $\bc\in\cC$,
\begin{equation*}
  \bz_{\bc}=\sum_{i=1}^{k}\bu_{i,c_i}=\sum_{i=1}^{k}\bv_{i,c_i}.
\end{equation*}
Then there exist vectors $\bs_1,\dots,\bs_k\in\R^d$ with
\begin{equation*}
  \sum_{i=1}^{k}\bs_i=\mathbf 0
\end{equation*}
such that
\begin{equation*}
  \bv_{i,j}=\bu_{i,j}+\bs_i,
  \qquad \forall i\in[k],\; j\in\cC_i.
\end{equation*}
Hence the factors are identifiable only up to concept-wise shifts.
\end{lemma}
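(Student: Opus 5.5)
The plan is to reduce everything to the difference family. Set $\bd_{i,j}:=\bv_{i,j}-\bu_{i,j}$ for $i\in[k]$ and $j\in\cC_i$. Subtracting the two decompositions gives, for every $\bc\in\cC$,
\begin{equation*}
\sum_{i=1}^{k}\bd_{i,c_i}=\mathbf 0 .
\end{equation*}
It therefore suffices to show that each $\bd_{i,j}$ does not depend on $j$. We can then put $\bs_i:=\bd_{i,j}$ for any $j\in\cC_i$, and the displayed identity, evaluated at any single $\bc$, immediately yields $\sum_i\bs_i=\mathbf 0$.

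To prove the independence, we use single-concept interventions as in \Cref{def:intervention}. Fix $i\in[k]$, two values $j,j'\in\cC_i$, and an arbitrary $\bc\in\cC$. Apply the identity to both $\bc(i\to j)$ and $\bc(i\to j')$. The two sums agree in every coordinate $m\neq i$, so subtracting them cancels all terms except the $i$-th and gives $\bd_{i,j}-\bd_{i,j'}=\mathbf 0$. This works for every pair of values and every concept, because the full product $\cC$ contains all such counterfactual pairs.

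There is no real obstacle here; the argument is a direct cancellation. The one point to state explicitly is that the hypothesis must hold on the \emph{full} grid $\cC$, or at least on a set that contains, for each concept, a chain of one-coordinate flips linking all of its values (for example a cross-dataset $\cD^{\bc}$ together with the full identity). Otherwise the cancellation cannot be carried out. For completeness we would also note the converse: any shifts $\bs_i$ with $\sum_i\bs_i=\mathbf 0$ preserve $\bz_{\bc}$. Hence the ambiguity is exactly the concept-wise shifts.
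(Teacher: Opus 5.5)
Your proposal is correct and follows the paper's proof essentially step for step: the paper also defines the differences $\bv_{i,j}-\bu_{i,j}$ and compares a tuple with its single-concept counterfactual to show these differences do not depend on $j$. It then evaluates the identity at any $\bc$ to obtain $\sum_i \bs_i=\mathbf 0$, and checks the same converse. Your side remark is a valid, slightly more general observation than the paper makes: the hypothesis only needs to hold on a set containing one-coordinate flips that link all values of each concept, such as a cross-dataset.
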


\begin{proof}
For each concept $i\in[k]$ and value $j\in\cC_i$, define the vector difference
\begin{equation*}
  \boldsymbol{\delta}_{i,j}:=\bv_{i,j}-\bu_{i,j}.
\end{equation*}
We first show that, within each concept $i$, this difference is independent of the value.
Fix $i\in[k]$ and pick any two values $p,q\in\cC_i$. Take any $\bc\in\cC$ with $c_i=p$.
Using the counterfactual tuple $\bc(i\to q)$ and subtracting the two factorizations gives
\begin{align*}
\bz_{\bc}-\bz_{\bc(i\to q)}
&=\sum_{\ell=1}^{k}\bu_{\ell,c_\ell}-\sum_{\ell=1}^{k}\bu_{\ell,(\bc(i\to q))_\ell}\\
&=\sum_{\ell=1}^{k}\bv_{\ell,c_\ell}-\sum_{\ell=1}^{k}\bv_{\ell,(\bc(i\to q))_\ell}.
\end{align*}
All terms except concept $i$ cancel on both sides, so
\begin{equation*}
  \bu_{i,p}-\bu_{i,q}=\bv_{i,p}-\bv_{i,q},
  \qquad\Rightarrow\qquad
  \boldsymbol{\delta}_{i,p}=\boldsymbol{\delta}_{i,q}.
\end{equation*}
Hence, for each concept $i$, there is a single vector $\bs_i\in\R^d$ such that
\begin{equation*}
  \boldsymbol{\delta}_{i,j}=\bs_i,
  \qquad \forall j\in\cC_i.
\end{equation*}
Therefore $\bv_{i,j}=\bu_{i,j}+\bs_i$ for all $i,j$.

To obtain the zero-sum constraint, evaluate at any $\bc\in\cC$:
\begin{equation*}
  \mathbf 0
  = \sum_{i=1}^{k}\big(\bv_{i,c_i}-\bu_{i,c_i}\big)
  = \sum_{i=1}^{k}\boldsymbol{\delta}_{i,c_i}
  = \sum_{i=1}^{k}\bs_i.
\end{equation*}

Conversely, if $\bs_1,\dots,\bs_k\in\R^d$ satisfy $\sum_{i=1}^k\bs_i=\mathbf 0$ and we set $\bv_{i,j}:=\bu_{i,j}+\bs_i$, then for every $\bc\in\cC$,
\begin{equation*}
  \sum_{i=1}^k \bv_{i,c_i}
  = \sum_{i=1}^k \bu_{i,c_i}
  + \sum_{i=1}^k \bs_i
  = \sum_{i=1}^k \bu_{i,c_i},
\end{equation*}
so the reconstructed factors generate the same embeddings.
\end{proof}
We illustrate this lemma graphically in Figure \ref{fig:shift_ambiguity}.

\begin{figure}[H]
\centering  
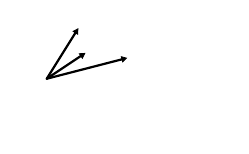
\caption{\small \textbf{Illustration of the shift ambiguity in the factorization equations.} The black factors $\{\bu_i\}$ and pink factors $\{\bv_i\}$ produce the same pairwise differences; the orange arrows show concept-wise shift vectors $\bs_i$ such that $\bv_i=\bu_i+\bs_i$.}
\label{fig:shift_ambiguity}
\end{figure}

A neat consequence is that centered factors are uniquely determined: any recovered factorization, once centered per concept, matches the true centered factors.

\begin{corollary}[Uniqueness of centered factors]
  Assume the setting of Lemma~\ref{prop:shift_ambiguity}. For each concept $i$,
  let
  \begin{equation*}
  \begin{aligned}
    \boldsymbol{\mu}_i &:= \frac{1}{|\cC_i|}\sum_{j\in\cC_i}\bu_{i,j},\\
    \boldsymbol{\nu}_i &:= \frac{1}{|\cC_i|}\sum_{j\in\cC_i}\bv_{i,j},
  \end{aligned}
  \end{equation*}
  and define the centered factors
  $\bu^\circ_{i,j}:=\bu_{i,j}-\boldsymbol{\mu}_i$ and
  $\bv^\circ_{i,j}:=\bv_{i,j}-\boldsymbol{\nu}_i$.
  Then $\bu^\circ_{i,j}=\bv^\circ_{i,j}$ for all $i\in[k]$ and $j\in\cC_i$.
  Equivalently, for every $\bc\in\cC$,
  \begin{equation*}
    \sum_{i=1}^k \bu^\circ_{i,c_i}
    = \sum_{i=1}^k \bv^\circ_{i,c_i},
  \end{equation*}
  so the centered factorization is unique.
  \end{corollary}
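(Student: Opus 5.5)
The plan is to derive the corollary directly from Lemma~\ref{prop:shift_ambiguity}, which gives vectors $\bs_1,\dots,\bs_k$ with $\sum_i \bs_i=\mathbf 0$ and $\bv_{i,j}=\bu_{i,j}+\bs_i$ for all $i\in[k]$ and $j\in\cC_i$. Everything then reduces to the observation that a concept-wise shift is constant across the values of that concept. Centering within a concept therefore removes it.

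First, fix a concept $i$ and average the relation $\bv_{i,j}=\bu_{i,j}+\bs_i$ over $j\in\cC_i$. Because $\bs_i$ does not depend on $j$, this gives $\boldsymbol{\nu}_i=\boldsymbol{\mu}_i+\bs_i$. Subtracting this from the original relation yields
\begin{equation*}
\bv^\circ_{i,j}=\bv_{i,j}-\boldsymbol{\nu}_i=(\bu_{i,j}+\bs_i)-(\boldsymbol{\mu}_i+\bs_i)=\bu^\circ_{i,j},
\end{equation*}
so the centered factors agree entrywise for every $i$ and $j$. The "equivalently" statement follows by summing over $i$ with $j=c_i$ for any $\bc\in\cC$.

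It is also worth noting how the centered factors relate to the embeddings. We have $\sum_i \bu^\circ_{i,c_i}=\bz_{\bc}-\sum_i\boldsymbol{\mu}_i$, and likewise for $\bv$. Hence $\sum_i\boldsymbol{\mu}_i=\sum_i\boldsymbol{\nu}_i$, using $\sum_i\bs_i=\mathbf 0$. So the centered factorization reproduces the embeddings up to a common global offset, which is the natural reading of uniqueness here.

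There is no real obstacle. The only point needing care is to state precisely what uniqueness means: the centered family is identical across any two valid factorizations, and it differs from the embeddings by the fixed global mean $\sum_i\boldsymbol{\mu}_i$. The zero-sum constraint on the $\bs_i$ is used only to show that this offset is itself invariant.
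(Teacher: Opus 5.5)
Your proposal is correct and follows essentially the same route as the paper's proof: invoke Lemma~\ref{prop:shift_ambiguity}, average $\bv_{i,j}=\bu_{i,j}+\bs_i$ over $j\in\cC_i$ to get $\boldsymbol{\nu}_i=\boldsymbol{\mu}_i+\bs_i$, and subtract. Your extra remark that $\sum_i\boldsymbol{\mu}_i=\sum_i\boldsymbol{\nu}_i$, so the centered sums recover the embeddings up to a common global offset, is also correct. It is a harmless addition that the paper does not state.
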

  
  \begin{proof}
  By Lemma~\ref{prop:shift_ambiguity}, there exist
  $\bs_1,\dots,\bs_k$ with $\sum_i \bs_i=\mathbf 0$ such that
  $\bv_{i,j}=\bu_{i,j}+\bs_i$ for all $i,j$.
  Averaging over $j\in\cC_i$ yields $\boldsymbol{\nu}_i=\boldsymbol{\mu}_i+\bs_i$.
  Thus,
  \begin{align*}
    \bv^\circ_{i,j}
    &= \bv_{i,j}-\boldsymbol{\nu}_i\\
    &= (\bu_{i,j}+\bs_i)-(\boldsymbol{\mu}_i+\bs_i)\\
    &= \bu_{i,j}-\boldsymbol{\mu}_i\\
    &= \bu^\circ_{i,j},
  \end{align*}
  as claimed.
  \end{proof}

First, we consider the general case where concept directions are not necessarily linearly independent. Suppose the inputs $\bz_{\bc}$ are linearly separable for any $i\in[k], j\in[n]$. If we can recover all factors $\bv_{i,j}$, we can reconstruct $\bz_{\bc}=\sum_{i=1}^{k}\bu_{i,c_i}$ and then fit linear probes for the concept values.

By Lemma~\ref{prop:shift_ambiguity} (and the centered-factor corollary above), recovery is unique up to concept-wise shifts, so the key requirement is rank of the one-hot design matrix induced by observed tuples.

\begin{proposition}[Maximum possible rank of the design matrix]
  Let $\bA\in\{0,1\}^{n^{k}\times kn}$ be the design matrix whose $kn$ columns are
  $\{\ba_{i,r}: i=1,\dots,k,\ r=1,\dots,n\}$, arranged in $k$ blocks of size $n$,
  with all $n^k$ treatment combinations as rows and each row having exactly one
  $1$ in each block. Then, 
  \begin{equation*}
  \operatorname{rank}(\bA)=1+k(n-1).
  \end{equation*}
  \end{proposition}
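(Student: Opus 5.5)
We prove $\operatorname{rank}(\bA)=1+k(n-1)$ by matching an upper bound and a lower bound.

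\textbf{Upper bound via the kernel.} Each row has exactly one $1$ in every block, so for every block $i$ we have $\sum_{r=1}^n \ba_{i,r}=\mathbf 1$, the all-ones vector in $\R^{n^k}$. Hence the vectors
\begin{equation*}
\be_{(1,\cdot)}-\be_{(i,\cdot)}, \qquad i=2,\dots,k,
\end{equation*}
lie in the kernel of $\bA$. Here $\be_{(i,\cdot)}\in\R^{kn}$ denotes the indicator of block $i$. These $k-1$ vectors have disjoint supports in blocks $2,\dots,k$, so they are linearly independent. This gives $\operatorname{rank}(\bA)\le kn-(k-1)=1+k(n-1)$.

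\textbf{Lower bound via an explicit spanning set.} We show that the column space contains $1+k(n-1)$ independent vectors. Fix a reference value, say $r=n$, in each block. Consider the vectors $\mathbf 1$ and $\ba_{i,r}$ for $i\in[k]$ and $r\in[n-1]$.

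Suppose $\alpha\mathbf 1+\sum_{i}\sum_{r<n}\beta_{i,r}\ba_{i,r}=\mathbf 0$. Evaluate this at the row $\bc=(n,\dots,n)$: every $\ba_{i,r}$ with $r<n$ vanishes there, so $\alpha=0$. Next evaluate at the row $\bc=(n,\dots,n)$ with coordinate $i$ replaced by $r<n$, that is, $\bc=(n,\dots,n)(i\to r)$. Only $\ba_{i,r}$ is nonzero there, so $\beta_{i,r}=0$.

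All these vectors lie in the column space, since $\mathbf 1=\sum_r \ba_{1,r}$. Therefore $\operatorname{rank}(\bA)\ge 1+k(n-1)$.

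\textbf{Conclusion and main difficulty.} Combining the two bounds gives equality, and in particular the kernel is exactly the span of the block-difference vectors. This matches the shift ambiguity in Lemma~\ref{prop:shift_ambiguity}. The only delicate step is the lower bound: we must choose evaluation rows so that the coefficients triangularize. The cross dataset centered at $(n,\dots,n)$ (\Cref{def:cross_dataset}) does exactly this. It has $1+k(n-1)$ rows, and its submatrix is already of full rank $1+k(n-1)$, which is a slightly stronger statement worth noting.
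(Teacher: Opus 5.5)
Your proof is correct, and the lower bound takes a genuinely different route from the paper.

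\textbf{Upper bound.} This rests on the same observation as the paper's, namely $\sum_r \ba_{i,r}=\mathbf 1$ for every block, but you phrase it dually. The paper exhibits the $1+k(n-1)$ vectors $\mathbf 1$ and $\ba_{i,r}-\ba_{i,1}$ and shows they span all columns. You instead exhibit $k-1$ independent kernel vectors and apply rank--nullity. Your version has a bonus: once equality is established, it identifies $\ker\bA$ exactly, and this kernel is precisely the concept-wise shift ambiguity of \Cref{prop:shift_ambiguity}.

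\textbf{Lower bound.} Here the two proofs diverge. The paper does not argue it directly. It notes that $\bA^\intercal$ is the on-off matrix of \Cref{lem:min-dim} with $\alpha=1,\beta=0$ and reuses that argument. That argument shows a combination of block-$i$ rows can lie in the span of the other blocks only if it is constant, so the blocks meet only in the all-ones direction. Turning this into the exact count $n+(k-1)(n-1)$ needs a small inductive dimension count, which the paper leaves implicit.

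You instead evaluate on the rows of the cross dataset centered at $(n,\dots,n)$ (\Cref{def:cross_dataset}). Restricted to those rows, your $1+k(n-1)$ vectors form a unit lower-triangular square system, so independence is immediate and fully explicit.

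\textbf{What each route buys.} The paper's route is economical: one argument covers the general on-off matrices of \Cref{lem:min-dim}, and so also yields the dimension bound $d\ge 1+k(n-1)$. Your route is self-contained. It also gives a strictly stronger fact: the $1+k(n-1)$ cross-dataset rows alone already have full rank. That is exactly what is needed to justify recovering centered factors in the minimal-dataset regime the paper alludes to just before this proposition.

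\textbf{A minor slip.} At the row $(n,\dots,n)(i\to r)$, the vector $\mathbf 1$ is also nonzero, not only $\ba_{i,r}$. Since you have already shown $\alpha=0$, the conclusion $\beta_{i,r}=0$ still holds.
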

  
  \begin{proof}[Proof]
  We first show the upper bound by spanning columns. Define
  $\mathbf{1}\in\R^{n^k}$ as the all-ones vector and, for each block $i$ and each $r=2,\dots,n$,
  \begin{equation*}
  \bd_{i,r}:=\ba_{i,r}-\ba_{i,1}.
  \end{equation*}
  Let
  \begin{equation*}
  \mathcal{B}:=\{\mathbf{1}\}\cup\{\bd_{i,r}:1\le i\le k,\ 2\le r\le n\},
  \quad\text{so}\quad |\mathcal{B}|=1+k(n-1).
  \end{equation*}
  
  For every block $i$, $\sum_{r=1}^n \ba_{i,r}=\mathbf{1}$, hence
  \begin{equation*}
  \sum_{r=2}^n \bd_{i,r}=\mathbf{1}-n\ba_{i,1}
  \ \Rightarrow\ 
  \ba_{i,1}=\tfrac1n\left(\mathbf{1}-\sum_{r=2}^n \bd_{i,r}\right),
  \quad
  \ba_{i,r}=\ba_{i,1}+\bd_{i,r}\ (r\ge2).
  \end{equation*}
  Thus every original column $\ba_{i,r}$ lies in $\operatorname{span}\mathcal{B}$, so
  $\operatorname{rank}(\bA)\le 1+k(n-1)$.

  For the matching lower bound, note that $\bA^\intercal$ is exactly the on-off matrix from \Cref{lem:min-dim} with $\alpha=1,\beta=0$ (rows indexed by $(i,r)$ and columns by tuples $\bc$). The rank argument in that proof applies verbatim and gives $\operatorname{rank}(\bA^\intercal)=1+k(n-1)$. Hence
  \begin{equation*}
  \operatorname{rank}(\bA)=\operatorname{rank}(\bA^\intercal)=1+k(n-1).
  \end{equation*}\end{proof}
When $\operatorname{rank}(\bA)=1+k(n-1)$, the linear system $\bZ=\bA\bU$ determines the centered factors uniquely (up to the shift ambiguity already characterized above). Therefore one can reconstruct the grid embeddings and, under linear separability, fit linear readouts that recover concept values on all combinations (\Cref{def:linearly_compositional_model}).

 \clearpage

  \section{Packing and minimum dimension (proof of \Cref{prop:min-dim-probes-maintext})} \label{sec:packing-min-dim2}

  In this section, we elaborate on the geometric capacity question introduced in the main text: how large must the embedding dimension $d$ be for linear probes to realize all concept combinations in $\cC$. 
  
  We use hyperplane-arrangement counting bounds to formalize this. Intuitively, each concept contributes $n$ decision boundaries, and to classify all combinations these boundaries must carve enough regions in $\R^d$ to accommodate all $n^k$ combinations. We first recall the relevant region-count results, then apply them to prove the lower bound $d\ge k$.
  
  For $d\ge 1$, we work with affine hyperplanes (the setting used by linear probes with bias):
  \begin{equation*}
  H_{\bw,b}=\{\,\bx\in\R^d:\bw^\intercal \bx + b = 0\,\},
  \qquad \bw\neq 0,\ b\in\R.
  \end{equation*}
  
  An \emph{arrangement} $\mathcal H=\{H_1,\dots,H_m\}$ is a family of hyperplanes.  
  It is in \emph{general position} when no $d+1$ hyperplanes intersect at a common point, which maximizes the number of connected regions in $\R^d$ \citep{zaslavsky1975facing,z-lop-95}. 
  
  This allows us to count the number of connected regions in $\R^d$ that an arrangement of $m$ hyperplanes carves out, a classical result that we state below.
  
  \begin{theorem}[Zaslavsky's region bounds in general position \cite{zaslavsky1975facing}] 
  \label{thm:sze-gp}
  Let $\mathcal H$ be an arrangement of $m$ hyperplanes in $\R^{d}$ that is in general position.  
  Then, the number of connected regions  $R(\mathcal H)$ is given by: 
  \begin{enumerate}[label=(\alph*), leftmargin=*, itemsep=1pt, parsep=0pt, topsep=2pt]
    \item \textit{Affine (with a bias) case.}  
          If the hyperplanes may carry arbitrary offsets $b_i$ (so $\mathcal H$ is not required to be central), then
          \begin{equation*}
            R(\mathcal H)=R_{\mathrm{aff}}(m,d)
            := 
            \sum_{r=0}^{d}\binom{m}{r}.
          \end{equation*}
    \item \textit{Central case.}  
          If every hyperplane passes through the origin,
          \begin{equation*}
            R(\mathcal H)=R_{\mathrm{lin}}(m,d)
            :=
            2\sum_{r=0}^{d-1}\binom{m-1}{r}.
          \end{equation*}
  \end{enumerate}
  \end{theorem}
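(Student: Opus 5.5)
The plan is the classical deletion--restriction induction, done first for the affine case (a); the central case (b) then reduces to (a) by slicing with an affine hyperplane. For (a), write $R_{\mathrm{aff}}(m,d)$ for the number of regions of $m$ affine hyperplanes in general position in $\R^d$. Here ``general position'' means two things: any $r\le d$ of the hyperplanes meet in an affine subspace of dimension exactly $d-r$, and no $d+1$ of them share a point. The base cases are immediate. With $m=0$ there is one region, matching $\binom{0}{0}=1$. In $d=1$, $m$ distinct points cut the line into $m+1=\binom{m}{0}+\binom{m}{1}$ pieces. With $d=0$ the convention $R=1$ holds.

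For the inductive step, remove $H_m$. The remaining $m-1$ hyperplanes are still in general position and give $R_{\mathrm{aff}}(m-1,d)$ regions. Adding $H_m$ back splits a region into two exactly when $H_m$ meets its interior, and otherwise leaves it unchanged. Moreover, the regions met by $H_m$ are in bijection with the regions of the restricted arrangement $\{H_j\cap H_m\}_{j<m}$ inside $H_m\cong\R^{d-1}$.

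The key claim is that this restriction is again an arrangement of $m-1$ distinct hyperplanes of $H_m$ in general position. Distinctness and the fact that each trace is a genuine hyperplane follow from the dimension condition on pairs and triples. The ``no $d$ common points inside $H_m$'' condition follows from ``no $d+1$ common points'' in $\R^d$. Given the claim, the recursion is
\begin{equation*}
R_{\mathrm{aff}}(m,d)=R_{\mathrm{aff}}(m-1,d)+R_{\mathrm{aff}}(m-1,d-1).
\end{equation*}
Pascal's rule $\binom{m}{r}=\binom{m-1}{r}+\binom{m-1}{r-1}$ then yields $\sum_{r=0}^{d}\binom{m}{r}$ by induction on $m+d$.

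For (b), general position has to be read as ``any $r\le d$ of the normals are linearly independent'', since every hyperplane passes through the origin. Fix $H_m=\{\bw_m^\intercal\bx=0\}$. Each region is an open cone not contained in $H_m$, so it lies strictly on one side of $H_m$. The antipodal map $\bx\mapsto-\bx$ preserves the arrangement and swaps the two sides. Hence the count is twice the number of regions in $\{\bw_m^\intercal\bx>0\}$.

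Those regions are in bijection with the regions of the affine slice $H'=\{\bw_m^\intercal\bx=1\}\cong\R^{d-1}$: every ray in the open half-space hits $H'$ exactly once, and cones correspond to their traces. On $H'$ the traces of $H_1,\dots,H_{m-1}$ form an affine arrangement of $m-1$ hyperplanes. This arrangement is in general position because linear independence of any $r+1\le d$ normals including $\bw_m$ translates into the affine dimension condition in $H'$. Applying (a) gives
\begin{equation*}
R_{\mathrm{lin}}(m,d)=2R_{\mathrm{aff}}(m-1,d-1)=2\sum_{r=0}^{d-1}\binom{m-1}{r}.
\end{equation*}

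I expect the main obstacle to be the bookkeeping showing that general position is inherited under restriction and slicing. This includes the fact that the traces are distinct and non-degenerate, which uses the full-dimensional-intersection form of general position rather than just the ``no $d+1$ concurrent'' phrasing. The bijection between regions cut by $H_m$ and regions of the restriction also needs care: I would argue it via connectedness of convex sets, since each region is an open convex polyhedron and meets the hyperplane $H_m$ in a single convex piece.
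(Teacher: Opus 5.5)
Your proof is correct. It differs from the paper only in that the paper gives no proof: Theorem~\ref{thm:sze-gp} is quoted as a classical result. Zaslavsky's general theorem counts regions through the characteristic polynomial of the intersection poset, $R(\mathcal H)=(-1)^d\chi_{\mathcal H}(-1)$, which in general position (a truncated Boolean lattice) evaluates to $\sum_{r=0}^{d}\binom{m}{r}$. Your deletion--restriction induction, together with the antipodal-symmetry and affine-slice reduction for (b), is the elementary self-contained alternative, and it gives two things the citation does not.

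First, it forces the correct hypothesis. The paper's definition of general position (``no $d+1$ hyperplanes meet at a common point'') is too weak for equality. Two parallel lines in $\R^2$ satisfy it but give $3\neq 4$ regions. In the central case that definition is violated by the origin as soon as $m\ge d+1$, and for smaller $m$ it admits degenerate configurations such as three planes through a common line in $\R^3$ ($6$ rather than $8$ regions). Your reading is the one under which the statement is true: any $r\le d$ hyperplanes meet in dimension exactly $d-r$, or in the central case any $r\le d$ normals are linearly independent.

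Second, the same recursion run as an inequality for an arbitrary arrangement yields $R(\mathcal H)\le\sum_{r=0}^{d}\binom{m}{r}$. This works because the restriction consists of at most $m-1$ distinct genuine hyperplanes of $H_m$, and $\sum_{r=0}^{d}\binom{m}{r}$ is nondecreasing in $m$. That upper bound is what the proof of Proposition~\ref{prop:min-dim-probes-maintext} actually uses, since the probe hyperplanes there are not assumed to be in general position.

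Two small points to tighten:
\begin{itemize}
\item In (b), a region is \emph{disjoint} from $H_m$; ``not contained in'' $H_m$ does not by itself put it on one side.
\item In the slice $H'$, also check that no $d$ traces are concurrent. This holds because any $d$ of $\bw_1,\dots,\bw_{m-1}$ are linearly independent, so their common zero set is $\{\mathbf 0\}$, which misses $H'$.
\end{itemize}
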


  Note that for $d<k$, one has
  \begin{equation*}
  R_{\mathrm{aff}}(k,d)=\sum_{r=0}^{d}\binom{k}{r} < 2^{k},
  \end{equation*}
  which is the key inequality we will need. We now exploit Theorem~\ref{thm:sze-gp} to prove the lower bound on probe
  dimension; first for the binary case, then for general $n$.
  
  \begin{figure}[H]
    \centering
    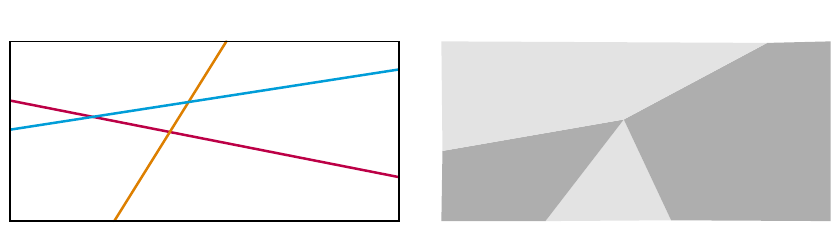
    \caption{
      \small{\textbf{Illustration of the probe dimension lower bound.} Schematic of probe hyperplane arrangements and induced regions in embedding space. \textbf{(left)} For $k=3$ in $d=2$, three affine hyperplanes partition the plane into at most 7 regions, fewer than the $2^3=8$ binary concept combinations; therefore $d\ge 3$. \textbf{(right)} For $n>2$, restricting each concept to any two values induces a binary subproblem. Hence any model that realizes all $n^k$ combinations must also realize these binary restrictions, so the binary lower bound applies to the non-binary case as well.}
    }
    \label{fig:lower_bound_probes_app}
  \end{figure}

  \MinDimProbes*

    \textit{Proof sketch.} Reduce to the binary case by fixing two values per concept and restricting to the resulting $2^k$ combinations. Each concept induces one affine separating hyperplane. To realize all binary labelings, the arrangement must carve at least $2^k$ regions. By Theorem~\ref{thm:sze-gp}, when $d<k$ we have $\sum_{r=0}^{d} \binom{k}{r} < 2^k$, so $2^k$ regions are impossible. Hence $d\ge k$. Tightness follows by a $d=k$ construction (coordinates per concept with suitable affine offsets). See \Cref{fig:lower_bound_probes_app}.
    
    \begin{proof}
      First, we ``reduce'' the problem to the binary case. 
    For each concept $i\in[k]$, take two distinct $a_i,b_i\in[n]$ and restrict the label space to
    \begin{equation*}
      \cC^{(2)}:=\prod_{i=1}^k\{a_i,b_i\},
      \qquad |\cC^{(2)}|=2^k.
    \end{equation*}
    Since every label combination in $[n]^k$ is classified correctly, all tuples in $\cC^{(2)}$ are classified as well. For each concept $i$, we can define the induced binary score on only the considered values
    \begin{equation}
        h_i^{(a_i,b_i)}(\bz)
        := h_{i,a_i}(\bz)-h_{i,b_i}(\bz)
        = (\bw_{i,a_i}-\bw_{i,b_i})^{\intercal}\bz + (b_{i,a_i}-b_{i,b_i}).
    \end{equation}
    Let $\bw_i := \bw_{i,a_i}-\bw_{i,b_i}$ and $\beta_i := b_{i,a_i}-b_{i,b_i}$.  
    Each induced binary classifier defines an affine hyperplane
    \begin{equation}
        \mathcal H_i := \{ \bz \in \R^d \mid \bw_i^{\intercal}\bz + \beta_i = 0 \}.
    \end{equation}
    Because multiclass predictions are correct on $\cC^{(2)}$, each concept $i$ is correctly separated between values $a_i$ and $b_i$ on this restricted set. Hence the $k$ affine hyperplanes $\mathcal H_1,\dots,\mathcal H_k$ must separate $\R^d$ into at least $2^k$ distinct regions.
    
    But the number of regions formed by $k$ affine hyperplanes in $\R^d$ is at most  
    \begin{equation}
        \sum_{r=0}^{d} \binom{k}{r} < 2^k
        \quad\text{whenever } d < k
        \quad\text{(by Theorem~\ref{thm:sze-gp})}.
    \end{equation}
    
    Thus, we must have $d \ge k$.
    
    Construction is simple. Let $d = k$, and assume (for $\be_i \in \R^d$ with $\be_{ij} = \delta_{ij}$) 
    \begin{equation}
        \bz_{\bc} := \sum_{i}^k \be_i c_i,
    \end{equation}
    i.e. each embedding is a sum of standard basis unit vectors for each concept scaled by the value of that concept. 
  
    Then we can define probe vectors as
    \begin{equation}
        \bw_{i,j} := 2j\,\bm e_i
        \quad\text{and}\quad
        b_{i,j} := -j^2.
    \end{equation}
    Then
    \begin{equation}
        h_{i,j}(\bz_{\bc})
        = 2j \bm e_i^{\intercal}\bz_{\bc} - j^2
        = 2 c_i - j^2
        = c_i^2 - (j-c_i)^2.
    \end{equation}
    Therefore, for each $i$, $\arg\max_{j\in[n]} h_{i,j}(\bz_{\bq}) = q_i$, so all $n^k$ combinations are correctly classified in $d=k$.
    \end{proof}
   For intuition of the constructed case, see \Cref{sec:strict-lrh}.
     \clearpage
\section{Examples of linearly compositional models and their geometries}

We give two toy geometries that make probe behavior explicit under strong linear-factorization assumptions. They can be understood as lying on two extremes of the space of linear representations: a case of ``perfect LRH'' geometry, where all the factors of concept values are co-linear (\Cref{sec:strict-lrh}), and a case where factors are implied to be orthogonal to any other factor, leading to a maximum possible dimensionality of linearly factorized spaces (\Cref{app:example-extreme-clip}). These are \emph{not} claims of full compositional generalization (they do not model the learning algorithm or stability), but provide intuition for probe geometry and dimensionality, especially in the cases where the classification behavior is intuitively desirable (\Cref{app:example-extreme-clip}).

\subsection{Case 1: Ideal ``LRH'' concept classifier} \label{sec:strict-lrh}

To gain intuition into the geometry of features and linear probes, we consider a commonly-assumed setting of the Linear Representation Hypothesis (LRH) \cite{elhage2022superposition,park2023lrh,rajendran2024concepts}, and consider the problem of finding probes capable of classifying the representations. Instead of a joint optimization, we assume the representations are already given and follow a strict linear-factorization structure under LRH.

Specifically, we make the following assumptions:
(1) The representation for any input with concept tuple $\bc = (c_1, \ldots, c_k)\in[n]^k$ is
    \begin{equation}\label{eq:lrh_representation}
    \bz_{\bc} = \sum_{i=1}^k \alpha_{i, c_i}  \bd_i \quad \bd_i \in \mathbb{R}^d.
    \end{equation}
(2) The concept direction vectors $\{\bd_i\}_{i=1}^k \subset \mathbb{R}^d$ are fixed and orthogonal, i.e., $\bd_i^\intercal\bd_\ell=0$ for $i\neq \ell$ (hence linearly independent and $d \ge k$)\footnote{We consider this for simplicity; in general one may consider the directions to be linearly independent instead}. 

(3) For each concept $i$, its $n$ values correspond to a known ordered set of scalar coefficients $\{\alpha_{i,j}\}_{j=1}^n$, with distinct values $j$ for different $i$.
Under these assumptions, orthogonality of the directions implies the representations form a regular grid in $\bz$ inside $V$ (each coordinate axis is one concept).    
 
We consider the problem of finding the linear probes $\{\bw_{i,j}\}$ that classify the representations:
\begin{align} 
\min_{\{\bw_{i,j},b_{i,j}\}} \quad
& \sum_{\bc}\sum_{i=1}^k
\left(
-\log
\frac{\exp\left(\bw_{i,c_i}^{\!\intercal}\bz_{\bc}+b_{i,c_i}\right)}
{\sum_{j=1}^n \exp\left(\bw_{i,j}^{\!\intercal}\bz_{\bc}+b_{i,j}\right)}
\right).
\end{align}
We consider the nearest-neighbor classifier. Concretely, we define the approximated concept-value prototypes by averaging representations over all tuples with concept $i$ fixed to value $j$:
\begin{equation}
  \begin{aligned}
  \widetilde{\bd}_{i,j} :=  \frac{1}{n^{k-1}}
  \sum_{\bc\in[n]^k:c_i=j}\bz_{\bc} =  \alpha_{i,j}\bd_i + \sum_{\ell\neq i}\bar\alpha_\ell \bd_\ell := \alpha_{i,j}\bd_i + \bm m_i,
  \end{aligned} 
\end{equation} 
where $\bar\alpha_\ell := \frac{1}{n}\sum_{r=1}^n \alpha_{\ell,r}$ and $\bm m_i := \sum_{\ell\neq i}\bar\alpha_\ell \bd_\ell$.
Using these prototypes, we set the corresponding affine probes as\footnote{We find these weights by considering the power diagram \cite{doi:10.1137/0216006} over the points $\widetilde{\bd}_{i,j}$, the weights are biases are then remaps from the power diagram to the joint argmax classifier.}
\begin{equation}
  \widetilde{\bw}_{i,j} = 2\widetilde{\bd}_{i,j},
  \qquad
  \widetilde{b}_{i,j} = -\|\widetilde{\bd}_{i,j}\|_2^2.
\end{equation}
We can verify that such a classifier works. The score for the $j$-th probe of concept $i$ on an input $\bz_{\bc}$ (where the true value for concept $i$ is $c_i$) is:
\begin{align*}
    h_{i,j}(\bz_{\bc})
    &:= \widetilde{\bw}_{i,j}^\intercal \bz_{\bc} + \widetilde{b}_{i,j}\\
    &= 2\widetilde{\bd}_{i,j}^\intercal \bz_{\bc} - \|\widetilde{\bd}_{i,j}\|_2^2\\
    &= 2(\alpha_{i,j}\bd_i+\bm m_i)^\intercal \bz_{\bc}
    - \|\alpha_{i,j}\bd_i+\bm m_i\|_2^2\\
    &= \left(2\alpha_{i,j}\bd_i^\intercal(\bz_{\bc}-\bm m_i)
    -\alpha_{i,j}^2\|\bd_i\|_2^2\right)
    + \left(2\bm m_i^\intercal\bz_{\bc}-\|\bm m_i\|_2^2\right).
\end{align*}
For every fixed concept $i$ and a datapoint $\bz_{\bc}$, maximizing over $j$ is therefore
\begin{align*}
    \arg\max_{j\in[n]} h_{i,j}(\bz_{\bc})
    &= \arg\max_{j\in[n]}
    \left(
    2\alpha_{i,j}\bd_i^\intercal(\bz_{\bc}-\bm m_i)
    -\alpha_{i,j}^2\|\bd_i\|_2^2
    \right)\\
    &= \arg\max_{j\in[n]}
    \left(
    2\alpha_{i,j}\bd_i^\intercal\left(\sum_{\ell=1}^k \alpha_{\ell,c_\ell}\bd_\ell-\bm m_i\right)
    -\alpha_{i,j}^2\|\bd_i\|_2^2
    \right)\\
    &= \arg\max_{j\in[n]}
    \left(
    2\alpha_{i,j}\bd_i^\intercal\left(
    \alpha_{i,c_i}\bd_i
    +\sum_{\ell\neq i}\alpha_{\ell,c_\ell}\bd_\ell
    -\bm m_i\right)
    -\alpha_{i,j}^2\|\bd_i\|_2^2
    \right)\\
    &= \arg\max_{j\in[n]}
    \left(
    2\alpha_{i,j}\alpha_{i,c_i}\|\bd_i\|_2^2
    -\alpha_{i,j}^2\|\bd_i\|_2^2
    \right)\\
    &= \arg\max_{j\in[n]}
    \left(
    -(\alpha_{i,j}-\alpha_{i,c_i})^2\|\bd_i\|_2^2
    \right)\\
    &= \arg\min_{j\in[n]}(\alpha_{i,j}-\alpha_{i,c_i})^2\\
    &= c_i.
\end{align*}
where we dropped the additive term $2\bm m_i^\intercal\bz_{\bc}-\|\bm m_i\|_2^2$, since it does not depend on $j$, and used orthogonality: $\bd_i^\intercal\bd_\ell=0$ for $\ell\neq i$ and $\bd_i^\intercal\bm m_i=0$. The last step follows since $\|\bd_i\|_2^2>0$ is constant and $\{\alpha_{i,j}\}_{j=1}^n$ are distinct.

As shown above, such a classifier is easy to construct and correctly classifies all the points in the concept space. If orthogonality did not hold, the nearest neighbor classifier is not guaranteed to be correct. We illustrate the geometry of the probes together with the decision regions in \Cref{fig:examples_lrh} in two cases, where $k=2, n=30$ and the deviation from orthogonal features varies slightly. We note that the weight vectors, indicated as arrows in the plot, are near-parallel.  

\begin{figure}[H]
  \centering 
  \includegraphics[width=0.94\textwidth]{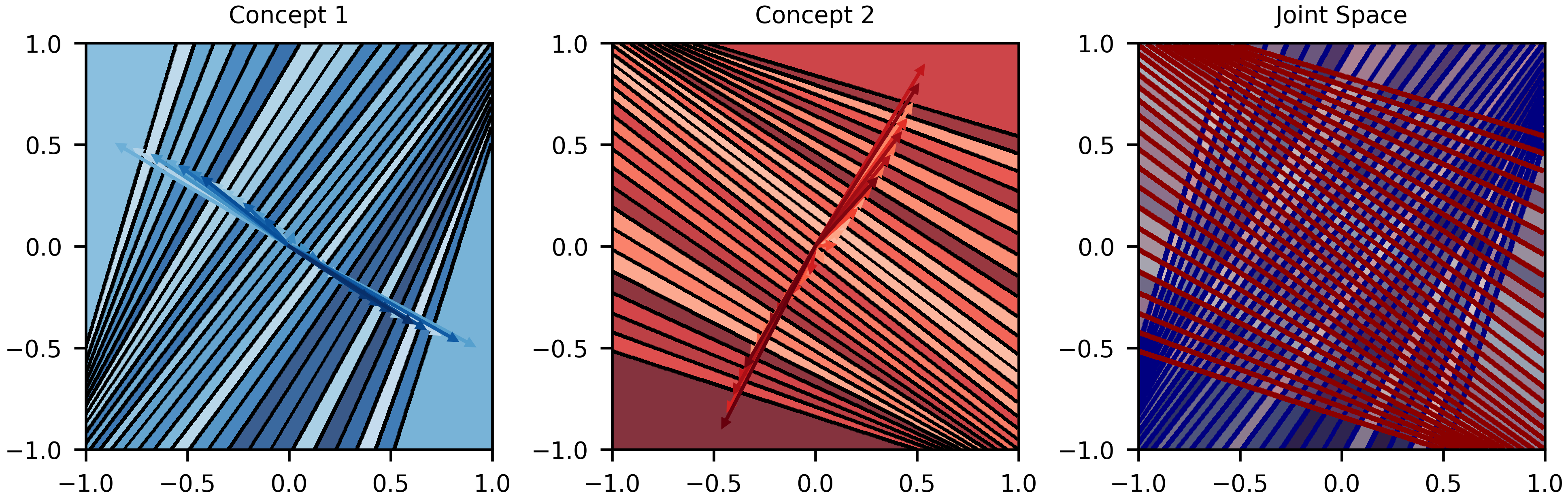}\\[2mm]
  \includegraphics[width=0.94\textwidth]{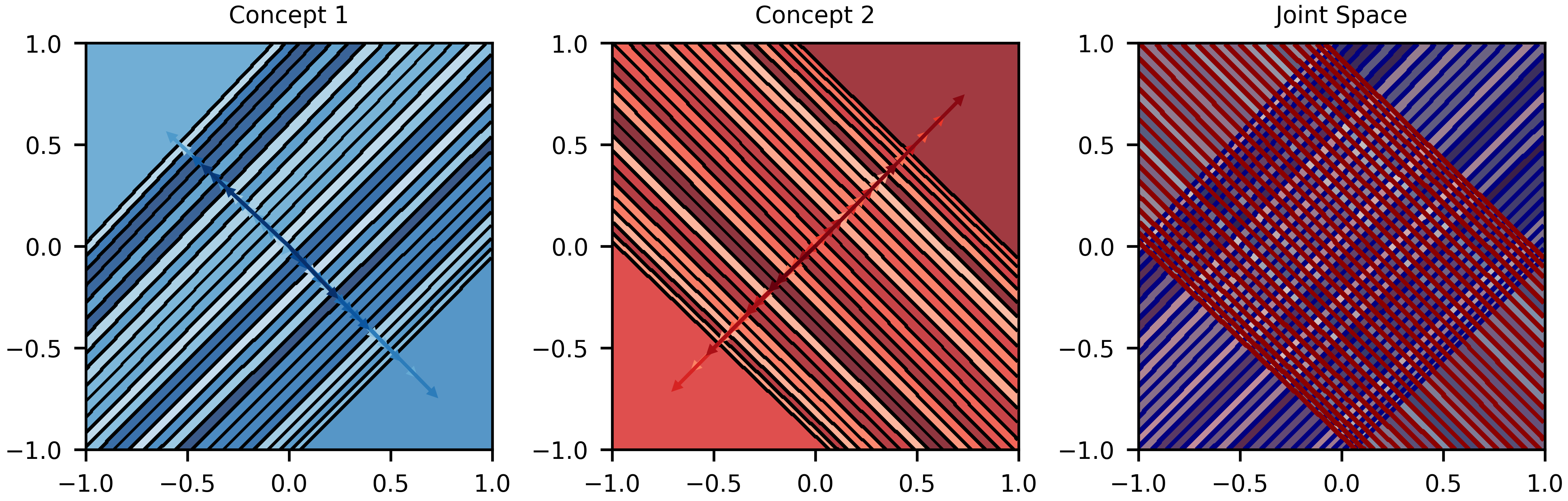}\\[2mm]
  \caption{
   \small{\textbf{Probe geometry in the LRH toy setup ($k=2$, $n=30$).}
   Each row corresponds to a different choice of concept directions (top: mildly non-orthogonal; bottom: closer to orthogonal).
   Left and middle columns show concept-wise decision regions for concept~1 (blue) and concept~2 (red), with arrows indicating probe directions.
   The right column overlays both concept families in the joint space; intersections of blue and red stripes form the cells associated with tuples $(c_1,c_2)$.
}
  }
  \label{fig:examples_lrh} 
\end{figure}

\subsection{Case 2: Ideal ``on-off'' concept classifier} \label{app:example-extreme-clip}

Here we consider a setting that intuitively could be described as exhibiting behavior that is desirable for compositional generalization. In particular, we consider a CLIP-style linearly compositional model with the constraint that the probe scores $h_{i,j}$ should be constant among the matches, and constant among the mis-matches. 
  
In CLIP-like models, this is usually viewed through cosine similarity. In the normalized view ($\|\bz\|_2=1$ and $\|\bw_{i,j}\|_2=1$), decision regions on the sphere are spherical caps rather than Euclidean half-spaces. For a cosine-similarity classifier, the decision region for class $(i, j)$ is
\begin{equation}
  \cR_{i,j} := \left\{ \bz \in \mathbb{S}^{d-1} : \bw_{i,j}^\intercal \bz > \bw_{i,k}^\intercal \bz \ \forall k \neq j \right\}.
\end{equation}

We refer to this setting as ``on-off'' concept classifier, as there are only two possible scores for each probe: $\alpha$ if the concept matches, and $\beta$ if it does not. That is, exist constants $\alpha>\beta$ in $[-1,1]$ such that for all $i,j$ and all tuples $\bc$,
\begin{equation}\label{eq:const-logit-pattern-brief}
  \bw_{i,j}^{\intercal}\bz_{\bc} =
  \begin{cases}
  \alpha & \text{if } j = c_i \\
  \beta & \text{if } j \neq c_i.
  \end{cases}
\end{equation}
We illustrate this setting in \Cref{fig:cosine_similarity}.

The key results in this section are two-fold: (1) a model exhibiting this behavior requires at least $1 + k(n-1)$ dimensions, meaning that under a large number of concepts and values per concept, even current systems will not be able to reliably distinguish between correct and incorrect matches between a sample and its corresponding concept values (\Cref{lem:min-dim}); (2) assuming that the model is able to distinguish between correct and incorrect matches, one can linearly approximate the representations in a way that preserves the ``on-off'' pattern (\Cref{prop:additive-fixed}).

\begin{figure}[H]
  \centering
  \includegraphics[width=0.94\textwidth]{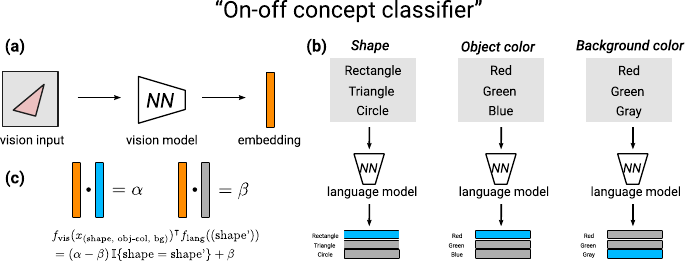}
  \caption{
    \small{\textbf{Illustration of the ``on-off concept classifier'' mechanism.} \textbf{(a)} A vision input is processed by a neural network to produce an embedding. 
    \textbf{(b)} Each concept (e.g., shape, object color, background color) is probed independently using a set of language model probes, one per possible value. 
    \textbf{(k)} The probe for a given concept yields a high score $\alpha$ if the concept matches and a lower score $\beta$ otherwise, as formalized in the logit equation at bottom.}
  }
  \label{fig:cosine_similarity}
\end{figure}
 
\begin{proposition}[Minimal dimensionality from fixed dot-products]
  \label{lem:min-dim}
  Assume $k\ge 2$ and $n\ge 2$. For each concept $i\in[k]$ and value $j\in[n]$, let the probe weight be$ 
       \bw_{i,j}\;\in\;\R^{d}$, such that for any tuple $\bc=(c_1,\dots,c_k)\in[n]^k$ there exists a corresponding $\bz_{\bc}\;\in\;\R^{d}$, and there exist $\alpha,\beta\in\R$ with $\alpha\neq\beta$ such that the ``on-off'' pattern holds (and $\alpha \neq -\beta (n-1)$):
  \begin{equation}\label{eq:const-logit-pattern} 
        \bw_{i,j}^{\!\intercal}\bz_{\bc}
        =
        \begin{cases}
            \alpha, & j=c_i,\\[4pt]
            \beta,  & j\neq c_i,
        \end{cases}
        \qquad \text{for all } i,j,\bc,  
  \end{equation}
  Then the ambient dimension $d$ must satisfy
  \begin{equation}\label{eq:dim-lower-bound}
      d \ge 1 + k(n-1).
  \end{equation}
  Moreover, this bound is tight: for any $(\alpha,\beta)$ with $\alpha\neq\beta$, there exist explicit probe/representation families that realize \eqref{eq:const-logit-pattern} in dimension $d = 1 + k(n-1)$.
\end{proposition}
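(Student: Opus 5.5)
The plan is to turn the ``on-off'' pattern into a rank statement about one fixed matrix. Stack the probes as columns of $\bW\in\R^{d\times kn}$, with columns indexed by pairs $(i,j)$. Stack the representations as columns of $\bZ\in\R^{d\times n^k}$, with columns indexed by tuples $\bc$. Then \eqref{eq:const-logit-pattern} says that
\begin{equation*}
\bm M:=\bW^\intercal\bZ=(\alpha-\beta)\bA^\intercal+\beta\,\mathbf 1_{kn}\mathbf 1_{n^k}^\intercal ,
\end{equation*}
where $\bA^\intercal\in\{0,1\}^{kn\times n^k}$ has entry $[c_i=j]$ in row $(i,j)$ and column $\bc$. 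This is exactly the transpose of the one-hot design matrix from the earlier rank proposition. Since $\operatorname{rank}\bm M\le d$, the lower bound \eqref{eq:dim-lower-bound} follows once we show $\operatorname{rank}\bm M=1+k(n-1)$.

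\textbf{Step 1: $\operatorname{rank}\bm M=\operatorname{rank}\bA^\intercal$.} Within any block $i$, the rows of $\bA^\intercal$ sum to $\mathbf 1^\intercal$, because each tuple has exactly one value for concept $i$. Hence every row of $\bm M$ is a combination of rows of $\bA^\intercal$, so the row space of $\bm M$ is contained in that of $\bA^\intercal$. Conversely, summing the rows of $\bm M$ over block $i$ gives $(\alpha-\beta+n\beta)\mathbf 1^\intercal$. This coefficient equals $\alpha+(n-1)\beta$, which is nonzero precisely under the hypothesis $\alpha\neq-\beta(n-1)$, so $\mathbf 1^\intercal$ lies in the row space of $\bm M$. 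Subtracting $\beta\mathbf 1^\intercal$ from each row and dividing by $\alpha-\beta\neq0$ then recovers every row of $\bA^\intercal$. The two row spaces therefore coincide.

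\textbf{Step 2: $\operatorname{rank}\bA^\intercal=1+k(n-1)$.} This is the step I expect to need the most care. The earlier design-matrix proposition defers its lower bound to the present proof, so citing it would be circular and the lower bound must be proved directly here. The upper bound can be reused from the spanning argument in that proposition. For the lower bound, view rows as functions on $[n]^k$ and show that the $1+k(n-1)$ functions $\bc\mapsto 1$ and $\bc\mapsto[c_i=r]$ for $i\in[k]$, $r\in\{2,\dots,n\}$, are linearly independent. All of them lie in the row space, since $[c_i=r]$ is a row and $1$ is a block sum. Suppose
\begin{equation*}
a_0+\sum_{i}\sum_{r\ge2}a_{i,r}[c_i=r]=0 \quad\text{for all } \bc .
\end{equation*}
Evaluating at $\bc=(1,\dots,1)$ gives $a_0=0$. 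Evaluating at $(1,\dots,1)(i\to r)$ then gives $a_{i,r}=0$. Hence $\operatorname{rank}\bm M=1+k(n-1)\le d$.

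\textbf{Step 3: tightness.} Set $D=1+k(n-1)$ and take the matrix $\bm M=(\alpha-\beta)\bA^\intercal+\beta\mathbf 1\mathbf 1^\intercal$ for the given $(\alpha,\beta)$. Its rank is at most $D$, since its row space lies in that of $\bA^\intercal$ regardless of the degeneracy condition. Take a rank factorization $\bm M=\bW^\intercal\bZ$ with inner dimension $D$, padding with zero rows if the rank is smaller. Reading off the columns of $\bW$ as probes $\bw_{i,j}$ and the columns of $\bZ$ as representations $\bz_{\bc}$ realizes \eqref{eq:const-logit-pattern} exactly in $\R^{D}$, which gives an explicit family in dimension $d=1+k(n-1)$.
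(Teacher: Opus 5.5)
Your proof is correct. It shares the paper's outer frame: the logit matrix factors through $\R^d$, so its rank lower-bounds $d$. The rank computation, however, takes a different route.

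The paper works directly with the $\alpha/\beta$ matrix. It bounds the rank above because all block sums coincide. It bounds it below by showing that any combination of block-$i$ rows lying in the span of the other blocks must be constant on the $n$ columns differing only in concept $i$. This forces equal coefficients, so the combination is a multiple of the shared constant row.

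You instead split off the affine part, $\bm M=(\alpha-\beta)\bA^\intercal+\beta\mathbf 1\mathbf 1^\intercal$, and prove that $\bm M$ and $\bA^\intercal$ have the same row space. You then certify $1+k(n-1)$ independent rows of the $0/1$ design matrix by evaluating on the base tuple and its single-concept flips, i.e.\ on a cross dataset, where the system is triangular.

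Your route buys three things.
\begin{enumerate}
\item The hypothesis $\alpha\neq-(n-1)\beta$ is used exactly where it is needed, namely to place $\mathbf 1$ in the row space of $\bm M$; without it the rank is only $k(n-1)$. The paper relies on it only implicitly, when it treats the shared block sum as a nonzero constant direction.
\item The design-matrix rank gets a self-contained proof. You correctly avoid the paper's forward reference from that proposition back to this one, which would otherwise be circular.
\item Your tightness step actually produces a realization in dimension $1+k(n-1)$. The paper only gestures at this with ``orthogonal directions'', and $kn$ mutually orthogonal nonzero probes would need $kn$ dimensions.
\end{enumerate}
Your factorization can even be written out explicitly: take $\bz_{\bc}=(1,([c_i=r])_{i\in[k],\,r\ge 2})$, with $\bw_{i,j}=\beta\be_0+(\alpha-\beta)\be_{i,j}$ for $j\ge2$ and $\bw_{i,1}=\alpha\be_0-(\alpha-\beta)\sum_{r\ge2}\be_{i,r}$.

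The paper's column-pair argument, for its part, avoids the affine decomposition. It is also phrased in the same counterfactual language as the rest of its appendix.
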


\begin{proof}
  We write the constraints in matrix form. First, we stack probe vectors as rows:
  \begin{equation}
     \bP =
     \begin{bmatrix}
         \bw_{1,1}^{\!\intercal}\\
         \vdots\\
         \bw_{k,n}^{\!\intercal}
     \end{bmatrix}\in\R^{kn\times d},
     \qquad
     \text{(row }(i-1)n+j = \bw_{i,j}^{\!\intercal}).
  \end{equation}
  Then stack representations as columns:
  \begin{equation}
     \bX =
     \bigl[\;\bz_{\bc_1}\;\;\cdots\;\;\bz_{\bc_{n^{k}}}\bigr]
     \in\R^{d\times n^{k}}.
  \end{equation}
  The ``on-off'' constraints \eqref{eq:const-logit-pattern} become
  \begin{equation}
        \bY = \bP\bX \in \R^{kn\times n^{k}},
  \end{equation}
  where entries of $\bY$ are
  \begin{equation}
      y_{(i, j), \bc}
       =
       \begin{cases}
           \alpha, & j=c_i,\\
           \beta,  & j\neq c_i.
        \end{cases},
  \end{equation}
  and will sometimes denote rows of $\bY$ as $\by_{(i,j)}$ instead. To understand the rank of $\bY$, first note that it is at most $k n$. Additionally, note that the rows can be divided into $k$ blocks. As we will show, each such block contains up to one dependence that is achieved by summing up the rows within a block. 
   
  For each block $i\in[k]$ and any column $\bc\in[n]^k$, we can sum the $n$ entries in that block as
  \begin{equation}
    \sum_{j=1}^{n} y_{(i, j), \bc}
    = \sum_{j=1}^{n} \bigl( \alpha\mathbb{I}_{c_i=j} + \beta\mathbb{I}_{c_i\neq j} \bigr)
    = \alpha + \beta(n-1).
  \end{equation}
  Importantly, this sum is the same regardless of $\bc$ and chosen $i$. Therefore, each block sums up to the same vector, and is therefore redundant. This implies that the rank can be at most $nk - (k-1) = 1 + k(n-1)$. Next, we show that within each block, each row is independent. 

  For that, we will show that for any concept ``block'' $i \in [k]$ every row $(i,j)$ corresponding to the value $j$ of the $i$-th concept is linearly independent of all the rows in blocks $i' \neq i$. We denote the span of the other blocks' rows as 
  \begin{equation}
    \cS := \text{span} ( \{ \by_{(i', j')} \}_{i' \neq i, j' \in [n]}).
  \end{equation}
For that, we take any two columns $\bc, \bc' \in [n]^k$ that are identical up to the $i$-th concept and satisfy $c_{m} = c'_{m}$ for $m \neq i$ and $c_i \neq c'_i$, and note that all the rows outside the $i$-th block are identical: $y_{(i',j'), \bc} = y_{(i',j'), \bc'}$ for all $i' \neq i$ and all $j'\in[n]$, and only differ in the $i$-th block rows. Because of that, any linear combination from $\cS$ with some coefficients will have matching columns:
\begin{equation}
    \left (\sum \lambda_n \bs_n \right )_{\bc} = \left (\sum \lambda_n \bs_n \right )_{\bc'},
\end{equation}
but both $\bc$ and $\bc'$ differ in exactly the $i$-th concept, and therefore their column values in the $i$-th block must be different:
\begin{equation}
  y_{(i, c_i), \bc} \neq y_{(i, c_i), \bc'},
\end{equation}
and therefore no vector in the span can be equal to $\by_{i,c_i}$. All that is needed to show now is that taking any linear combination of the rows in the $i$-th block cannot be expressed by any $\bc \in \cS$. 

We do this by applying the same reasoning but instead of taking 2 rows we take $n$ rows. Concretely, for the $i$-th concept we take all $n$ rows: $(i, 1), \dots, (i, n)$ and consider $\bc_{1}, \dots, \bc_{n}$ which are all identical up to the $i$-th concept. Again, for any $i'\neq i$ and any row index $j\in[n]$, the values are constant across these columns:
$y_{(i', j), \bc_m} = y_{(i', j), \bc_{m'}}$ for all $m,m'\in[n]$,
and only the $i$-th block rows change across $\bc_1,\dots,\bc_n$. We can thus consider the span of the $i$-th block among the $n$ rows by taking $\blambda \in \R^{n}$ and considering only the $n$ columns corresponding to $\bc_1, \dots, \bc_n$, writing the linear combination as
\begin{equation} \label{eq:linear-combination-of-rows}
  \br := \sum_{j=1}^n \lambda_j \by_{(i, j)} \in \R^{n^k}. 
\end{equation}\
We now evaluate $\br$ on the $n$ columns $\bc_1,\dots,\bc_n$. By construction, in column $\bc_m$ the only row in block $i$ that takes value $\alpha$ is $(i,m)$, while all other $(i,j)$ take value $\beta$. Therefore the $m$-th selected entry is 
\begin{equation}
  r_m = \lambda_m \alpha + \sum_{j \neq m} \lambda_j \beta = \lambda_m \alpha + (S - \lambda_m) \beta,
\end{equation}
where $S := \sum_{j=1}^n \lambda_j$ is the sum of the coefficients. Finally, for $\br$ to be within $\cS$ it has to hold that for any columns $m, m'$ the entries are equal:
\begin{nalign}
  \quad & \lambda_m \alpha + (S - \lambda_m) \beta = \lambda_{m'} \alpha + (S - \lambda_{m'}) \beta \\
  \Rightarrow & \quad (\lambda_m - \lambda_{m'}) (\alpha - \beta) = 0. \\ 
\end{nalign}
But by assumption $\alpha \neq \beta$, and therefore $\lambda_m = \lambda_{m'}$ for all $m, m' \in [n]$. Clearly, if $\lambda_m \neq 0$, then this results in the constant direction produced by any concept block, which is the only direction that lies in $\cS$. Therefore each concept block produces $n-1$ linearly independent vectors. 

Combined with the redundancy within a concept block, this implies that $\text{rank}(\bY) = 1 + k(n-1)$. 

Finally, since $\bY=\bP\bX$,
  \begin{equation}
       1 + k(n-1) = \operatorname{rank}(\bY)
          \le  \operatorname{rank}(\bP)
          \le d.
  \end{equation}
  This proves \eqref{eq:dim-lower-bound} and shows that such a model requires at least $1 + k(n-1)$ dimensions.
  
  Tightness follows from an explicit construction by placing probes and representations in, for example, orthogonal directions.
\end{proof}

Below, we provide a numerical example to illustrate the form of the logit matrix $Y$ for the case of two concepts, three values each.
    \begin{example}[Two concepts, three values each: $c=2,\;n=3$]%
    \label{ex:c2n3}
    Set $(\alpha,\beta)=(1,0.2)$.  The row indices are
    $(i,j)\in\{1,2\}\times\{1,2,3\}$, the column
    indices are the $3^2=9$ tuples $(v_1,v_2)\in\{1,2,3\}^2$:
    
    {\small
    \begin{equation}
    \bY=
    \begin{blockarray}{cccccccccc}
          & 11 & 12 & 13 & 21 & 22 & 23 & 31 & 32 & 33 \\
    \begin{block}{c(ccccccccc)}
    (1,1) & 1   & 1   & 1   & 0.2 & 0.2 & 0.2 & 0.2 & 0.2 & 0.2 \\[2pt]
    (1,2) & 0.2 & 0.2 & 0.2 & 1   & 1   & 1   & 0.2 & 0.2 & 0.2 \\[2pt]
    (1,3) & 0.2 & 0.2 & 0.2 & 0.2 & 0.2 & 0.2 & 1   & 1   & 1   \\[6pt]
    (2,1) & 1   & 0.2 & 0.2 & 1   & 0.2 & 0.2 & 1   & 0.2 & 0.2 \\[2pt]
    (2,2) & 0.2 & 1   & 0.2 & 0.2 & 1   & 0.2 & 0.2 & 1   & 0.2 \\[2pt]
    (2,3) & 0.2 & 0.2 & 1   & 0.2 & 0.2 & 1   & 0.2 & 0.2 & 1   \\
    \end{block}
    \end{blockarray}
    \end{equation}
    }

    Note that each block corresponding to either the first or the second concept sum up to the same vector. Within each block there are 2 linearly independent vectors. Hence, $\mathrm{rank}(Y)=5=1+c(n-1)$.
    \end{example}

Under such a design, linear factorization holds immediately -- at least up to projection onto the span of the probe vectors; or exactly if the dimensionality of the embeddings is exactly $1 + k(n-1)$. To make the setup explicit, we can also impose a fixed relation between $\alpha$ and $\beta$. We avoid the condition of the global mean being zero, as this implies that $\alpha+(n-1)\beta=0$, i.e., $\alpha=-(n-1)\beta$ (similar to \Cref{lem:min-dim}). Related normalization choices are discussed in \cite{lee2024analysisusingsigmoidloss}.

\begin{proposition}[Additive factorization from the on-off pattern]
  \label{prop:additive-fixed}
  Assume $k\ge 2$ and $n\ge 2$. For each $i\in[k]$ and $j\in[n]$, let $\bw_{i,j}\in\R^d$ be a probe vector. For each tuple $\bc=(c_1,\dots,c_k)\in[n]^k$, let $\bz_{\bc}\in\R^d$ be its corresponding representation. Assume there exist $\alpha>\beta$, with $\alpha,\beta\in\R$, such that
  \begin{equation}\label{eq:on-off-pattern}
      \bw_{i,j}^{\intercal}\bz_{\bc}
      =
      \begin{cases}
         \alpha & \text{if } j=c_i,\\
         \beta  & \text{if } j\neq c_i,
      \end{cases}
      \qquad \forall i,j,\bc .
  \end{equation}
  Then there exist vectors $\bar{\bz}\in\R^d$, $\bu_{i,j}\in\R^d$, and reconstructed representations $\widetilde{\bz}_{\bc}\in\R^d$ of the additive form
  \begin{equation}\label{eq:factored-rep}
      \widetilde{\bz}_{\bc} = \bar{\bz} + \sum_{i=1}^k \bu_{i,c_i},\qquad \forall \bc\in[n]^k,
  \end{equation}
  such that for every $i,j,\bc$,
  \begin{equation}\label{eq:on-off-pattern-reconstruction}
      \bw_{i,j}^{\intercal}\widetilde{\bz}_{\bc}
      =
      \bw_{i,j}^{\intercal}\bz_{\bc}
      =
      \begin{cases}
          \alpha & \text{if } j=c_i,\\
          \beta  & \text{if } j\neq c_i.
      \end{cases}
  \end{equation}
  So $\widetilde{\bz}_{\bc}$ and $\bz_{\bc}$ are indistinguishable to the probes.
\end{proposition}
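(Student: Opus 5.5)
The plan is to give an explicit construction. We never try to factor $\bz_{\bc}$ itself. We only need $\widetilde{\bz}_{\bc}$ to agree with $\bz_{\bc}$ after projection onto the span of the probes. So we build $\widetilde{\bz}_{\bc}$ from averages of the given representations.

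\textbf{Step 1: marginal averages.} Define the global mean and the per-value marginal means
\begin{equation*}
\bm\mu := \frac{1}{n^k}\sum_{\bc\in[n]^k}\bz_{\bc},
\qquad
\bm\mu_{i,j} := \frac{1}{n^{k-1}}\sum_{\bc:\,c_i=j}\bz_{\bc}.
\end{equation*}
Set $\bu_{i,j} := \bm\mu_{i,j}-\bm\mu$ and $\bar{\bz} := \bm\mu$. With these choices, \eqref{eq:factored-rep} holds by definition. This is the same factor-recovery-by-averaging used in the empirical section.

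\textbf{Step 2: probe scores of the averages.} By linearity of $\bz\mapsto\bw_{i',j'}^\intercal\bz$, each score of an average is the average of the on-off scores.
\begin{itemize}
\item For the global mean, $\bw_{i',j'}^\intercal\bm\mu = \frac{\alpha+(n-1)\beta}{n} =: \gamma$, because a fraction $1/n$ of all tuples has $c_{i'}=j'$.
\item For a marginal mean of a different concept, $i'\neq i$, fixing $c_i=j$ leaves $c_{i'}$ uniform. Hence $\bw_{i',j'}^\intercal\bm\mu_{i,j}=\gamma$, and therefore $\bw_{i',j'}^\intercal\bu_{i,j}=0$.
\item For a marginal mean of the same concept, $i'=i$, we get $\bw_{i,j'}^\intercal\bm\mu_{i,j}$ equal to $\alpha$ if $j'=j$ and $\beta$ otherwise. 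Hence $\bw_{i,j'}^\intercal\bu_{i,j} = \mathbb{I}_{j=j'}\alpha+\mathbb{I}_{j\neq j'}\beta-\gamma$.
\end{itemize}

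\textbf{Step 3: sum the contributions.} Fix $i',j',\bc$. Summing the pieces gives
\begin{equation*}
\bw_{i',j'}^\intercal\widetilde{\bz}_{\bc} = \gamma + \bigl(\mathbb{I}_{c_{i'}=j'}\alpha+\mathbb{I}_{c_{i'}\neq j'}\beta-\gamma\bigr) + 0,
\end{equation*}
which is exactly the on-off value, as required by \eqref{eq:on-off-pattern-reconstruction}.

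The one step needing care is the cross-concept cancellation $\bw_{i',j'}^\intercal\bu_{i,j}=0$ for $i'\neq i$. It relies on the full grid being balanced, with uniform marginals as in Remark~\ref{lem:marginal_counts}. It does not rely on any orthogonality of the representations.

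The hypothesis $\alpha>\beta$ plays no role in the construction; it only serves the interpretation of the result. It is also worth remarking that $\widetilde{\bz}_{\bc}-\bz_{\bc}$ is orthogonal to every probe. So $\widetilde{\bz}_{\bc}=\bz_{\bc}$ whenever the probes span $\R^d$, for example when $d=1+k(n-1)$ and Proposition~\ref{lem:min-dim} forces full rank.
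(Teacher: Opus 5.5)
Your proof is correct and follows the paper's argument: the paper also sets $\bar{\bz}$ to the global mean and $\bu_{i,j}$ to the centered marginal means. It then shows each probe scores $\delta=\frac{\alpha+(n-1)\beta}{n}$ on the global mean and on marginal means of other concepts, so the cross-concept terms vanish, and sums the contributions. One caveat, which affects only your closing remark: deducing $\widetilde{\bz}_{\bc}=\bz_{\bc}$ from $d=1+k(n-1)$ via Proposition~\ref{lem:min-dim} also needs that proposition's hypothesis $\alpha\neq-(n-1)\beta$, because when $\alpha+(n-1)\beta=0$ the on-off matrix has rank only $k(n-1)$ and the probes need not span $\R^d$.
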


\begin{proof}
We again show that computing the factors as averages satisfies the condition. For that, we define the global mean $\bar{\bz}$, the concept factors $\ba_{i,j}$, and the centered factors $\bu_{i,j}$:
\begin{equation*}
    \bar{\bz} := \frac{1}{n^k}\sum_{\bc'\in[n]^k}\bz_{\bc'},\qquad
    \ba_{i,j} := \frac{1}{n^{k-1}}\sum_{\bc':c_i'=j}\bz_{\bc'},\qquad
    \bu_{i,j} := \ba_{i,j}-\bar{\bz},\qquad
    \widetilde{\bz}_{\bc} := \bar{\bz} + \sum_{i=1}^k \bu_{i,c_i}.
\end{equation*}

We will confirm this choice works through a simple calculation. First, for any $(i,j)$ the dot product with the average representation is
\begin{equation}
    \bw_{i,j}^{\intercal}\bar{\bz}
    =\frac{1}{n^k}\!\left(n^{k-1}\alpha+(n^k-n^{k-1})\beta\right)
    =\frac{\alpha+(n-1)\beta}{n}
    =:\delta ,
\end{equation}

Second, for any $i',r$, we expand the dot product with the average non-centered factor $\ba_{i,j}$:
\begin{equation}\label{eq:inner-product-with-conditional-mean}
    \bw_{i',r}^{\intercal}\ba_{i,j}
    = \frac{1}{n^{k-1}}\sum_{\bc':c_i'=j}\bw_{i',r}^{\intercal}\bz_{\bc'} .
\end{equation}
and decompose it into two cases:

\noindent\textbf{Case 1: $i'=i$ (probe and conditioning on the same concept).}
Here the condition $c_i'=j$ fixes whether the target concept is probed:
\begin{itemize} [nosep]
    \item If $r=j$, every term in the sum equals $\alpha$, so $\bw_{i',r}^{\intercal}\ba_{i,j}=\alpha$.
    \item If $r\neq j$, every term in the sum equals $\beta$, so $\bw_{i',r}^{\intercal}\ba_{i,j}=\beta$.
\end{itemize}
\noindent\textbf{Case 2: $i'\neq i$ (probe and conditioning on different concepts).}
Now $c_i'=j$ does not constrain $c_{i'}'$, so:
\begin{itemize} [nosep]
    \item exactly $n^{k-2}$ tuples satisfy $c_{i'}'=r$ and contribute $\alpha$,
    \item exactly $(n-1)n^{k-2}$ tuples satisfy $c_{i'}'\neq r$ and contribute $\beta$.
\end{itemize}
Therefore
\begin{equation*}
\bw_{i',r}^{\intercal}\ba_{i,j}
=\frac{n^{k-2}\alpha+(n-1)n^{k-2}\beta}{n^{k-1}}
=\delta.
\end{equation*}
Putting it all together: 
\begin{equation*}
    \bw_{i',r}^{\intercal}\ba_{i,j}
    =
    \begin{cases}
        \alpha, & i'=i,\ r=j,\\
        \beta,  & i'=i,\ r\neq j,\\
        \delta, & i'\neq i.
    \end{cases}
\end{equation*}
By linearity, subtracting the global mean term gives
\begin{equation}\label{eq:inner-product-with-shift-vector}
    \bw_{i',r}^{\intercal}\bu_{i,j}
    = \bw_{i',r}^{\intercal}\ba_{i,j}-\bw_{i',r}^{\intercal}\bar{\bz}
    =
    \begin{cases}
        \alpha-\delta, & i'=i,\ r=j,\\
        \beta-\delta,  & i'=i,\ r\neq j,\\
        0,             & i'\neq i.
    \end{cases}
\end{equation}
We now evaluate the reconstructed representation:
\begin{align*}
    \bw_{i,j}^{\intercal}\widetilde{\bz}_{\bc}
    &= \bw_{i,j}^{\intercal}\bar{\bz} + \sum_{t=1}^k \bw_{i,j}^{\intercal}\bu_{t,c_t}\\
    &= \delta + \bw_{i,j}^{\intercal}\bu_{i,c_i}
    + \sum_{t\neq i}\underbrace{\bw_{i,j}^{\intercal}\bu_{t,c_t}}_{=0\ \text{by \eqref{eq:inner-product-with-shift-vector}}}\\
    &= \delta + \left(\bw_{i,j}^{\intercal}\ba_{i,c_i}-\delta\right)
    = \bw_{i,j}^{\intercal}\ba_{i,c_i}\\
    &=
    \begin{cases}
        \alpha, & j=c_i,\\
        \beta,  & j\neq c_i.
    \end{cases}
\end{align*}
Hence $\widetilde{\bz}_{\bc}$ reproduces exactly the same on-off probe responses as $\bz_{\bc}$ for every concept and value, which proves \eqref{eq:on-off-pattern-reconstruction}.
\end{proof}

\clearpage

\section{Discussion on stability} \label{sec:on-stability}

Here we discuss the role of Stability and what can go wrong without it. Without Stability (\Cref{des:axiom_of_stability}), the only requirement on the readout is that it correctly classifies the observed grid. This places no constraint on {how} the decision regions are shaped: they can vary arbitrarily across training supports, as long as every grid point ``lands'' in the correct region. In practice, this means that decision boundaries may pass arbitrarily close to data points, that some concept-value regions may be infinitesimally thin, and that retraining on a different valid support can produce a completely different partition of the space. In short, no robustness holds. As we show below, this pathology does not disappear with scale: even as $n\to\infty$, the decision regions can remain arbitrarily complex, which is never a desirable property. Linear separability alone therefore does not force linear factorization, nor does it force projected $R^2$ to be close to $1$ (see also \Cref{sec:non-triviality-of-linear-factorization,sec:whitening-in-measuring-linear-factorization}).

Stability prevents these pathologies by requiring that the predicted probability vector over concept values agrees at every input, regardless of which valid support was used for training. In the linear-softmax setting, this implies that probe weights are the same across supports (or, up to a shift in non-binary case), pinning down the decision boundaries and forcing the representational structure established in our main results.

\textbf{Counterexamples to linear factorization even as $n\to\infty$.} Suppose we only assume linear separability on the full grid: for concept tuples $\bq=(q_1,\dots,q_k)\in[n]^k$ with embeddings $\bz_{\bq}\in\R^d$, there exist probes $\{(\bw_{i,j},b_{i,j})\}_{j=1}^n$ such that
\begin{equation}
  \arg\max_{j\in[n]} \big(\bw_{i,j}^\intercal \bz_{\bq} + b_{i,j}\big)=q_i,\qquad \forall i\in[k],\ \forall \bq\in[n]^k.
\end{equation}
This guarantees perfect classification but does not enforce Stability across supports.

Does this imply additive structure? In general, no. Even in the simple case of $d=2$, $k=2$, and even as $n\to\infty$, one can construct point clouds $\{\bz_{q_1,q_2}\}\subset\R^2$ with perfect linear separability that do \emph{not} admit a decomposition
\begin{equation}
  \bz_{q_1,q_2}=\bu_{1,q_1}+\bu_{2,q_2}.
\end{equation}
Linear separability is therefore compatible with non-factorized geometry. \Cref{fig:linear_separability_counterexamples} illustrates this failure mode, which is consistent with the low-$R^2$ counterexamples in \Cref{sec:non-triviality-of-linear-factorization}.

\begin{figure}[H]
    \centering
    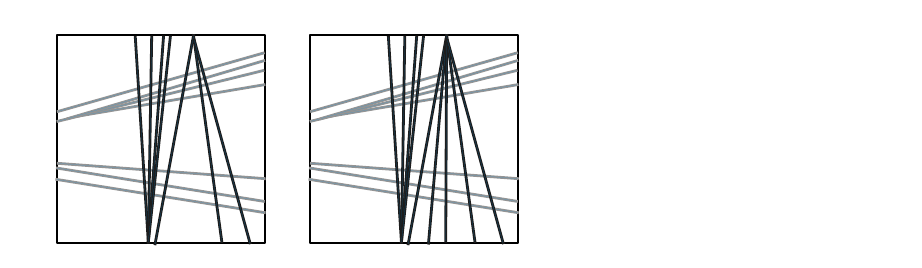
    \caption{\small \textbf{Linear separability without linear factorization.} Two families of affine decision boundaries in $\R^2$ (black for concept~1, gray for concept~2) divide the plane into regions, one per pair of concept values. \textbf{(a,b)}: with $n=8$ and $n=14$ levels per concept, the arrangement yields $n^2$ regions (64 and 196). By inserting additional nearly-parallel boundaries, existing regions can be split into arbitrarily thin pieces while maintaining perfect linear separability. \textbf{(c)}: No linear factorization can be achieved: whichever factors we pick, the separability of some datapoints is violated.}
    \label{fig:linear_separability_counterexamples}
\end{figure}

Concretely, in \Cref{fig:linear_separability_counterexamples}, panels \textbf{(a)} and \textbf{(b)} show two non-parallel families of decision boundaries whose intersections produce $n^2$ convex cells, one per pair $(q_1,q_2)$. Since perfect classification only requires each grid point to fall in the correct cell, the geometry of the cells themselves is unconstrained. By adding $\varepsilon$-perturbed boundaries, one can subdivide regions so that some concept values occupy arbitrarily small areas as $n$ grows, while perfect separability is preserved. Without Stability, there is nothing to prevent such degenerate configurations, and linear readout constraints alone do not pin down factorized structure. Handling such cases, e.g., by imposing minimum-area constraints on decision regions, is possible in principle but becomes technical; our framework avoids these pathologies by design through the Stability desideratum. 

More broadly, any model aiming for robust compositional generalization will need to prevent such degenerate configurations, whether through exact Stability or approximate forms of it.

\clearpage
    
\section{{Additional information}}
   
{
  In this section we expand on linear factorization, make a note on the non-triviality of linear factorization, and expand on the reasoning of using whitening in measuring linear factorization. In \Cref{sec:testing-linear-factorization} we summarize the overall procedure of measuring linear factorization. In \Cref{sec:details-and-intuition-of-linear-factorization} we provide an intuition of linear factorization through a simple example. In \Cref{sec:non-triviality-of-linear-factorization} we show that linear factorization is not a trivial property of linearly compositional models, and illustrate a few cases where the representation cannot be decomposed into a sum of per-concept components even under perfect classification.
}

\subsection{Testing linear factorization} \label{sec:testing-linear-factorization}

Large pre-trained models may encode information beyond the specific concepts in our dataset. To isolate the conceptual structure, we train per-concept linear probes. For each concept $i \in [k]$ and value $j$, we learn a linear probe $\bw_{i,j}$, form the probe matrix $\bW \in \R^{m \times d}$, where $m$ is the number of values across all concepts, and project embeddings onto the joint probe span. We do this by first computing the projection matrix $P_{\bW}$ and then projecting the embeddings onto the joint probe span.

We report \emph{Projected $R^2$} after projecting embeddings onto the probe span. To prevent trivial high scores from dominant directions, we whiten the embeddings by applying PCA and normalizing to unit covariance. We compute metrics on {$P_{\bW}\bz$} after PCA-whitening, applying the same transform to data and reconstructions. We elaborate on this below.

\subsection{{Whitening in measuring linear factorization}}  \label{sec:whitening-in-measuring-linear-factorization}

We need to be cautious when assessing the degree of linearity in the representations, otherwise, we may mistake high $R^2$ scores for linear factorization when in fact the representation is not linearly factored. For example, if certain concept values dominate the variance in the representation, the $R^2$ may be inflated. To address this, in the main experiments in \Cref{sec:linear-factorization-in-pretrained-models} we whiten the representations by applying PCA and normalizing to unit covariance. This ensures that a few dominant directions do not dominate the variance in the representation. If the representations are already linearly factored, this will not affect the $R^2$ score.

We illustrate this through three examples in a hypothetical two-dimensional representation space with two concepts in \Cref{fig:whiten_example}. In the first case (\textbf{(a)}) the representation is already linearly factored: each embedding is written as a sum of two concept components without noise. This yields an $R^2$ score of 1; whitening does not change the score.

\begin{figure}[H]    
    \centering 
    \includegraphics{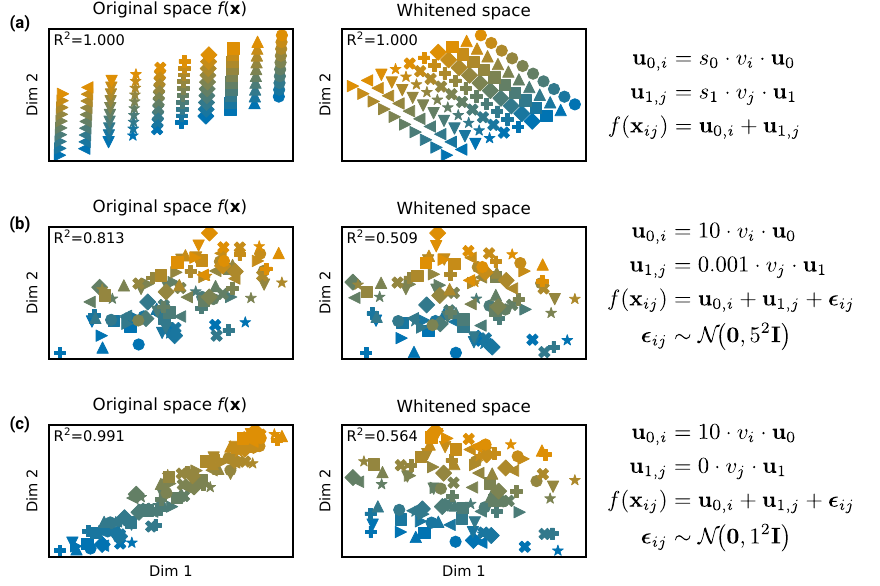}
    \caption{\small { \textbf{Whitening in measuring linear factorization}. The representation is not linearly factored, but the $R^2$ score is high due to the dominance of the dominant direction.}}
    \label{fig:whiten_example}
  \end{figure}

In the second case (\textbf{(b)}) the representation is partly linear, but the noise $\bepsilon_{ij}$, independent of the concept values, dominates the overall variance. Since the scale of the noise is generally lower than the scale of the first concept component, the $R^2$ score is high at $0.813$. Whitening, however, removes the dominant direction, and the $R^2$ score drops to $0.509$.

Lastly, in the third case (\textbf{(k)}) the representation does not express any information about the second concept, yet the $R^2$ score is still high at $0.991$. Again, whitening reveals the underlying issue and changes the score to $0.564$ due to the noise in the embeddings.

\subsection{{Intuition of linear factorization}} \label{sec:details-and-intuition-of-linear-factorization}

We measure the extent of linearity present in the embeddings through the $R^2$ score. Intuitively, the score quantifies how well the representation can be decomposed into a sum of per-concept components. Recall from \Cref{def:concept_space} that we assume a presence of $k$ concepts, each of which can take any of the $n$ values. A value of $R^2 = 1$ indicates that the representation can be perfectly decomposed into a sum of per-concept components.

We illustrate a few examples to give intuition. We consider a two-dimensional representation space with two concepts ($k = 2$). In the first case, we consider a case of 24 values per concept ($n = 24$). In the second case, we consider a case of 6 values per concept ($n = 6$). In both cases the reported $R^2$ are w.r.t. the whitened space.

The first case (\Cref{fig:intuition_of_linear_factorization}, \textbf{(a)}) exhibits perfect linearity in the embeddings with $R^2 = 1$. In this case, the $n^2 = 24^2 = 576$ can be perfectly generated using only $2 \cdot 24 = 48$ vectors in $\R^2$. The second and third columns of the plot show the approximations of the underlying factors $\bu_{0, i}, \bu_{1, j}, i, j \in [n]$. As expected, using these approximate factors allows us to perfectly reconstruct the representation, shown in the fourth column.

The second case (\Cref{fig:intuition_of_linear_factorization}, \textbf{(b)}) exhibits lower degree of linearity with $R^2 = 0.53$. As such, we cannot perfectly reconstruct the representation using only the approximate factors, as shown in the last column of the plot.

\begin{figure}[h]  
  \centering 
  \includegraphics{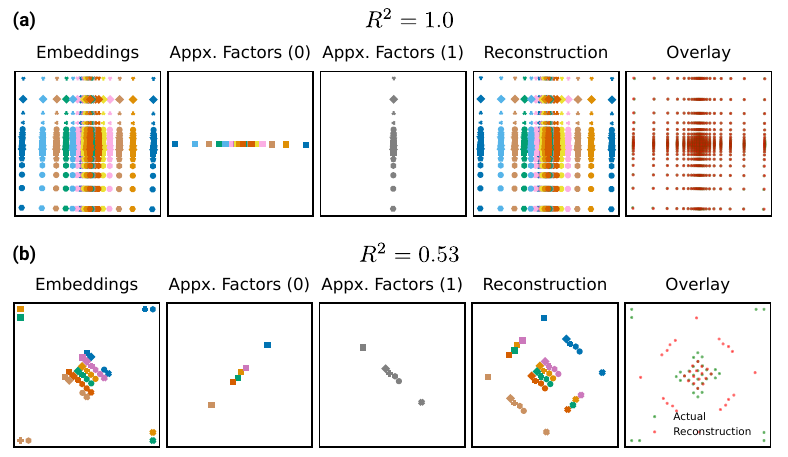}
  \caption{\small \textbf{Intuition of linear factorization}. In \textbf{(a)} the representations can be perfectly reconstructed by a set of per-concept components, while in \textbf{(b)} they are insufficient to reconstruct the representation. Refer to the text for more details.}
  \label{fig:intuition_of_linear_factorization}
\end{figure}

\subsection{{Non-triviality of linear factorization of linearly compositional models}}  \label{sec:non-triviality-of-linear-factorization}

Recall that linearly compositional models (though not necessarily generalizable ones), as defined in \Cref{def:linearly_compositional_model}, admit a set of probes that can perfectly classify all inputs in the grid $\mathcal{C}$. \Cref{prop:binary_factorization} shows that linearly compositional models must exhibit linear factorization. This naturally raises the converse question: does the mere existence of a set of perfect linear classifiers imply linear factorization? We answer in the negative.

The intuition is as follows. As per \Cref{des:axiom_of_divisibility}, linearly compositional models need to divide the representation space into all possible combinations of concept values, $n^k$ of them. Each region within the $n^k$ partitions must contain the corresponding combination of concept values. Under linear factorization, the degrees of freedom of the embeddings within each cell are low, yielding an $R^2$ score of 1. However, even if linear factorization initially holds, the embeddings can generally be perturbed to violate the linear factorization constraint while still being contained within the correct cell.

To illustrate this point, we consider two general cases: (i) the number of concepts is equal to the dimension of the embeddings ($k = d$), and (ii) the number of concepts is less than the dimension of the embeddings ($k < d$). As detailed in \Cref{sec:packing-min-dim}, case (i) is tight (the dimension cannot be further reduced), while case (ii) is not. In both cases we assume two concepts and an embedding space that admits two linear probes, one for each concept. Additionally, in both cases we illustrate separately the argmax regions where a certain concept value is predicted ($\cR_{i, j}, i \in [2], j \in [n]$), and the region where a certain combination of concept values is predicted ($\cR_{0, j} \cap \cR_{1, k}, j, k \in [n]$, as per \Cref{des:axiom_of_divisibility}).

The first concept values' regions in the embedding space are shown in blue, while the second concept values' regions are shown in orange.

\textbf{Case (i): $k = d$.} In \Cref{fig:non_trivial_r2}, \textbf{(a), (b)} we show two cases that exhibit perfect linear classification. In \textbf{(a)} a few outliers violate the linearity of the representation, which is also reflected in the $R^2 = 0.53 < 1$. In \textbf{(b)} the argmax regions are highly irregular, and the majority of the embeddings are almost intersecting the decision boundaries, resembling an extremely brittle embedding space susceptible to adversarial attacks, though the classification accuracy is still 100\%.

\textbf{Case (ii): $k < d$.} In \Cref{fig:non_trivial_r2}, \textbf{(k), (d), (e)} we show three cases that exhibit perfect linear classification, but with linearity scores ranging from $R^2 = 0.32$ to $R^2 = 0.83$. Because of the higher degrees of freedom, the embeddings enjoy even more space to be perturbed while still exhibiting perfect linear classification.

Overall, these points illustrate that linear factorization is not a trivial property of linearly compositional models, even when perfect classification holds.

\begin{figure}[H]   
  \centering       
  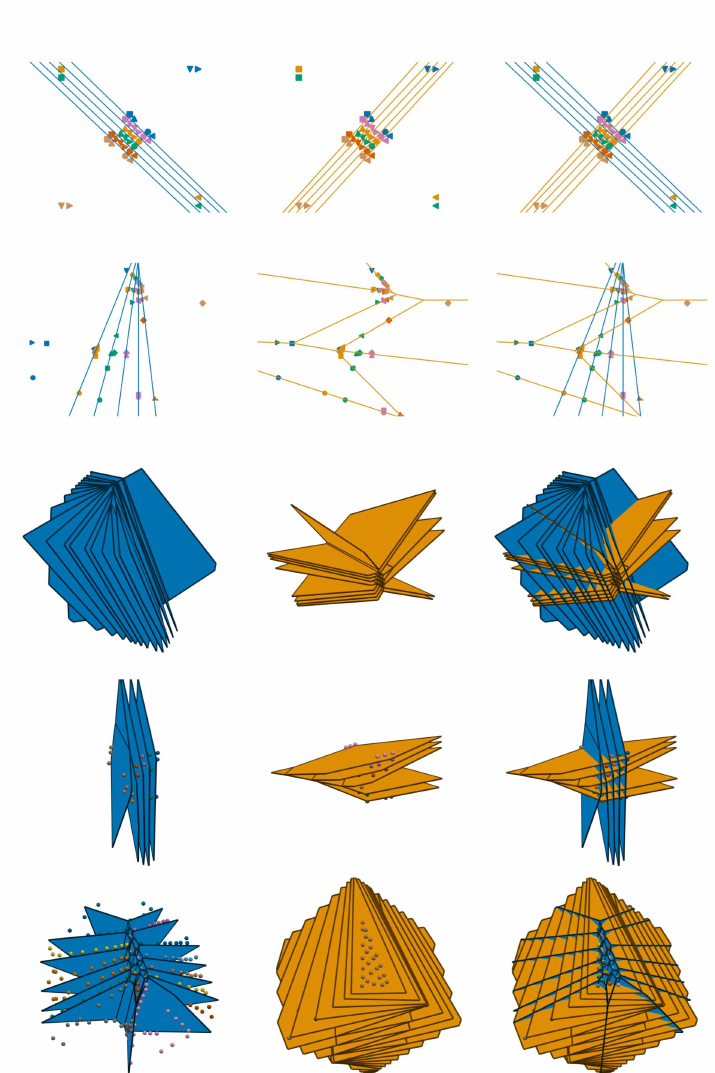 
  \caption{\small \textbf{Counterexamples of linear factorization under perfect classification}. The two concepts are linearly separable, but the representation cannot be decomposed into a sum of two concept components. Each subfigure shows the embedding space overlaid with three columns of argmax regions: the first column shows $\cR_{0, i}, i \in [n]$ (shown in blue), the regions where the first concept values are predicted; the second column shows $\cR_{1, j}, j \in [n]$ (shown in orange), the regions where the second concept values are predicted; and the third column shows $\cR_{0, i} \cap \cR_{1, j}, i, j \in [n]$, the joint argmax regions where specific combinations of concept values are predicted. \textbf{(a), (b)} show embeddings for two concepts (color and shape) in $\mathbb{R}^2$ ($k = d = 2$). \textbf{(d), (e)} show embedding points colored by the first concept value, all for two concepts in $\mathbb{R}^3$ ($k = 2, d = 3$). See text for details.}
  \label{fig:non_trivial_r2} 
\end{figure} 
    
\clearpage
\subsection{Models and probing} \label{app:models-and-probing}

We evaluate a broad model set spanning contrastive vision-language and self-supervised vision encoders: OpenAI CLIP \citep{radford2021clip}, OpenCLIP, MetaCLIP \cite{xu2023demystifying}, and MetaCLIP2 \cite{chuang2025meta} checkpoints from the LAION ecosystem \citep{schuhmann2021laion}, SigLIP and SigLIP~2 \citep{zhai2023sigmoidlosslanguageimage,tschannen2025siglip2multilingualvisionlanguage}, and DINOv1--v3 \citep{caron2021emergingpropertiesselfsupervisedvision,simeoni2025dinov3}. We evaluate on three compositional datasets: \texttt{PUG-Animal} \citep{bordes2023pugphotorealisticsemanticallycontrollable}, \texttt{dSprites} \citep{dsprites17}, and \texttt{MPI3D} \citep{NEURIPS2019_d97d404b}, plus ImageNet-AO \citep{abbasi2024decipheringrolerepresentationdisentanglement} (\Cref{sec:imagenet_ao}).

\begin{table}
  \centering \small
  \caption{\small \textbf{Datasets used in our main experiments.} Here $k$ is the number of concepts, $n_i$ is the number of values for concept $i$, and $N$ is the number of samples after the preprocessing described in the Notes column.} 
  \label{tab:datasets-summary}
  \renewcommand{\arraystretch}{1.08} 
  \begin{tabularx}{\textwidth}{@{}l c X r X@{}}
  \toprule
  \textbf{Dataset} & $k$ & \textbf{Values per concept ($n_i$)} & $N$ & \textbf{Notes (main-experiment preprocessing)} \\
  \midrule
  PUG-Animal & 4 &
  \texttt{background(64), character(68), scale(3), texture(3)} &
  39{,}168 &
  We fix \texttt{camera-yaw=0}, drop the default character texture, and remove the \texttt{Goldfish} character. \\
  dSprites & 6 &
  \texttt{color(10), shape(3), scale(6), orient(10), posX(10), posY(10)} &
  180{,}000 &
  We keep orientations in $[0^\circ,90^\circ)$ (10 bins), downsample \texttt{posX/posY} to 10 bins each, and add 10 colors via on-the-fly coloring. \\
  MPI3D & 7 &
  \texttt{obj-color(6), obj-shape(6), obj-size(2), cam-height(3), bg-color(3), horiz(10), vert(10)} &
  64{,}800 &
  We use the real-world complex variant (original factor sizes \texttt{[6,6,2,3,3,40,40]}) and downsample the horizontal/vertical axes to 10 bins each.  \\
  \bottomrule
  \end{tabularx}
  \end{table}

\begin{table}[p] 
  \small
\centering
\caption{\small \textbf{Vision models used in our experiments.}} 
\label{tab:models-reproducibility}
\renewcommand{\arraystretch}{1.05}
\begin{tabularx}{\textwidth}{@{}l X l r r@{}}
\toprule
\textbf{Family} & \textbf{Model identifier} & \textbf{Pretraining tag} & \textbf{Params (M)} & \textbf{Hidden} \\
\midrule
\multicolumn{5}{@{}l}{\hspace*{1em}\texttt{openaiclip}} \\
clip & ViT-B/32 & openai & 151.3M & 512 \\
clip & ViT-L/14 & openai & 427.6M & 768 \\
\addlinespace[0.3ex]
\multicolumn{5}{@{}l}{\textit{openclip}} \\
\multicolumn{5}{@{}l}{\hspace*{1em}\texttt{metaclip2}} \\
openclip & ViT-H/14 (MetaCLIP2 Worldwide, 378px) & metaclip2\_worldwide & 1859.4M & 1024 \\
openclip & ViT-H/14 (MetaCLIP2 Worldwide, QuickGELU) & metaclip2\_worldwide & 1858.8M & 1024 \\
openclip & ViT-G/14 (bigG, MetaCLIP2 Worldwide) & metaclip2\_worldwide & 3630.4M & 1280 \\
openclip & ViT-G/14 (bigG, MetaCLIP2 Worldwide, 378px) & metaclip2\_worldwide & 3631.2M & 1280 \\
\multicolumn{5}{@{}l}{\hspace*{1em}\texttt{siglip2}} \\
openclip & SigLIP2-B/16 (224) & webli & 375.2M & 768 \\
openclip & SigLIP2-B/16 (256) & webli & 375.2M & 768 \\
openclip & SigLIP2-L/16 (256) & webli & 881.5M & 1024 \\
openclip & SigLIP2-L/16 (384) & webli & 881.9M & 1024 \\
openclip & SigLIP2-SO400M/16 (256) & webli & 1135.7M & 1152 \\
\multicolumn{5}{@{}l}{\hspace*{1em}\texttt{siglip}} \\
openclip & SigLIP-B/16 (224) & webli & 203.2M & 768 \\
openclip & SigLIP-B/16 (256) & webli & 203.2M & 768 \\
openclip & SigLIP-L/16 (256) & webli & 652.2M & 1024 \\
\multicolumn{5}{@{}l}{\hspace*{1em}\texttt{openclip}} \\
openclip & ViT-B/16 & laion400m\_e32 & 149.6M & 512 \\
openclip & ViT-B/32 & laion400m\_e32 & 151.3M & 512 \\
openclip & ViT-L/14 & laion2b\_s32b\_b82k & 427.6M & 768 \\
\addlinespace[0.3ex]
\multicolumn{5}{@{}l}{\textit{dino}} \\
\multicolumn{5}{@{}l}{\hspace*{1em}\texttt{dinov3}} \\
dino & DINOv3 ViT-S/16 & timm\_default & 21.6M & 384 \\
dino & DINOv3 ViT-B/16 & timm\_default & 85.6M & 768 \\
dino & DINOv3 ViT-L/16 & timm\_default & 303.1M & 1024 \\
\multicolumn{5}{@{}l}{\hspace*{1em}\texttt{dinov2}} \\
dino & DINOv2 ViT-S/14 & timm\_default & 22.1M & 384 \\
dino & DINOv2 ViT-B/14 & timm\_default & 86.6M & 768 \\
dino & DINOv2 ViT-L/14 & timm\_default & 304.4M & 1024 \\
\multicolumn{5}{@{}l}{\hspace*{1em}\texttt{dinov1}} \\
dino & DINO ViT-S/16, 224px & timm\_default & 21.7M & 384 \\
dino & DINO ViT-B/16, 224px & timm\_default & 85.8M & 768 \\
\addlinespace[0.3ex]
\multicolumn{5}{@{}l}{\textit{timm}} \\
\multicolumn{5}{@{}l}{\hspace*{1em}\texttt{metaclip}} \\
timm & ViT-B/16 CLIP (MetaCLIP 2.5B, 224px) & timm\_default & 86.2M & 768 \\
timm & ViT-B/32 CLIP (MetaCLIP 2.5B, 224px) & timm\_default & 87.8M & 768 \\
timm & ViT-L/14 CLIP (MetaCLIP 2.5B, 224px) & timm\_default & 304.0M & 1024 \\
\bottomrule
\end{tabularx}
\end{table}

\begin{figure}
  \centering 
  \vspace{-1em}
  \includegraphics[width=1\textwidth]{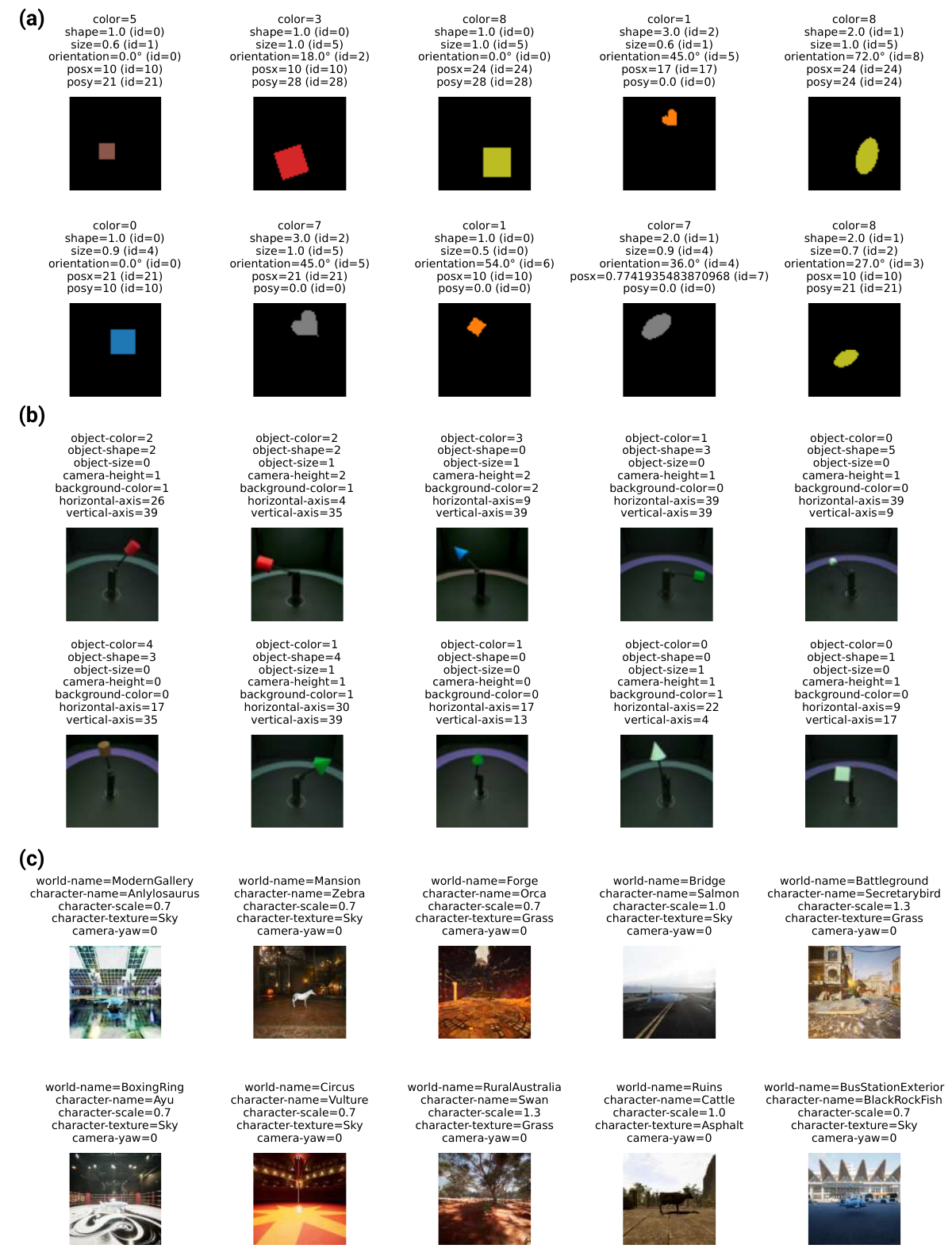}
  \vspace{-1em} 
  \caption{\small \textbf{Example samples from the main datasets used in our experiments.} \textbf{(a)} dSprites, \textbf{(b)} MPI3D, \textbf{(c)} PUG-Animal.}
  \label{fig:dataset-samples}
\end{figure} 

\section{Additional experimental results} 
 
In this section we provide additional experimental results discussed in the main text. We used \cite{stnd2025experiment} for conducting the experiments.

\subsection{Linearity and compositional generalization} \label{app:linearity-and-compgen} 

We provide an extended view of the relationship between projected linearity ($R^2$) and compositional generalization under different train/test splits. We follow the same pipeline as in \Cref{sec:compositional-generalization-and-linear-factorization}: for each dataset/model, we train one linear probe per concept and evaluate average compositional accuracy on held-out concept combinations. The full split-ablation results are shown in \Cref{fig:joint-linearity-compgen}: across held-out fractions, higher projected $R^2$ is still associated with higher held-out accuracy, while the trend flattens in near-ceiling regimes (especially on dSprites and MPI3D). 

For each split level, we set a test fraction $\rho_{\text{test}}\in\{0.95,0.90,0.50,0.30,0.10\}$ and train on the remaining $1-\rho_{\text{test}}$ fraction. For every model/split setting, probes are trained with Adam for $1000$ epochs using cosine decay to zero, and we select the best result over learning rates $\{10^{-3},10^{-2}\}$.

\begin{figure}[H] 
  \centering    
  \includegraphics[]{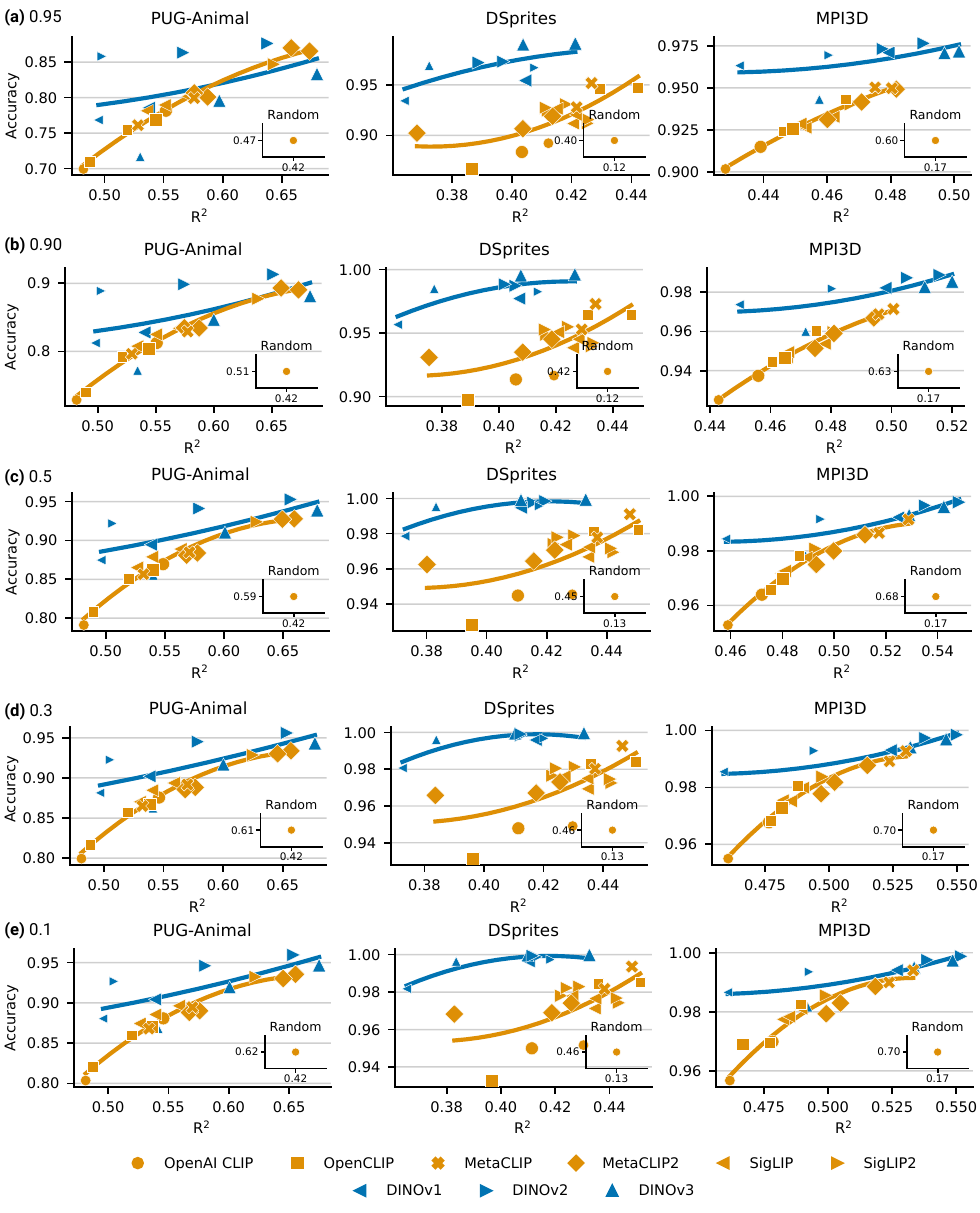}
  \caption{\small \textbf{Projected $R^2$ vs compositional accuracy across split regimes.} Columns correspond to datasets (PUG-Animal, dSprites, MPI3D). Rows \textbf{(a)--(e)} correspond to different held-out test fractions $\rho_{\text{test}}$ (shown next to each row). The x-axis is projected $R^2$ and the y-axis is compositional accuracy on held-out combinations. Each marker is a model checkpoint (legend); the inset in each panel reports the randomly initialized OpenCLIP baseline for the same dataset/split. Across split regimes, higher projected $R^2$ is consistently associated with higher held-out compositional accuracy.}
  \label{fig:joint-linearity-compgen}

\end{figure}

\subsection{Orthogonality of factors} \label{app:orho_exps}

\textbf{Setup.}
\noindent For each dataset/model, we extract image embeddings $\bz_{\bc}$ and recover concept factors $\{\bu_{i,v}\}$ as in \Cref{sec:linear-factorization-in-pretrained-models}. For orthogonality, we compute all quantities in the original embedding space (without projection by $P_{\bW}$). For each concept $i$, we define centered and normalized factor directions:
\begin{align}
\bar{\bu}_i &:= \frac{1}{|\cC_i|}\sum_{v\in\cC_i}\bu_{i,v},\\
\bd_{i,v} &:= \bu_{i,v} - \bar{\bu}_i,\qquad
\tilde\bd_{i,v} := \frac{\bd_{i,v}}{\|\bd_{i,v}\|}.
\end{align}
\vspace{-0.1em}
\textbf{Metrics.}
\noindent We measure orthogonality via absolute cosine between direction vectors (lower $|\cos|$ $\Rightarrow$ greater orthogonality). For any concepts $i \neq j$, we define
{\small
\begin{equation*}
\begin{aligned}
\mathrm{Orth}(i,j)
&:= \frac{1}{|\cC_i||\cC_j|}\sum_{a\in\cC_i}\sum_{b\in\cC_j}\big|\tilde\bd_{i,a}^{\intercal}\tilde\bd_{j,b}\big|,\\
\mathrm{Orth}(i,i)
&:= \frac{1}{|\cC_i|(|\cC_i|-1)}\sum_{\substack{a,b\in\cC_i \\ a \neq b}}\big|\tilde\bd_{i,a}^{\intercal}\tilde\bd_{i,b}\big|.
\end{aligned}
\end{equation*}
}
\noindent We report $\mathrm{Orth}(i,i)$ as \emph{within-concept direction similarity} and $\mathrm{Orth}(i,j)$ (for $i\neq j$) as \emph{across-concept direction similarity}. In this parameterization, stronger cross-concept orthogonality corresponds to lower $\mathrm{Orth}(i,j)$.

\textbf{Results.}
\noindent In \Cref{fig:orthogonality_factors_full}, we provide the full per-model/per-dataset orthogonality matrices (including a randomly-initialized baseline) across the three datasets used in the main text.

\begin{figure}[H] 
  \centering
  \includegraphics[width=1\textwidth]{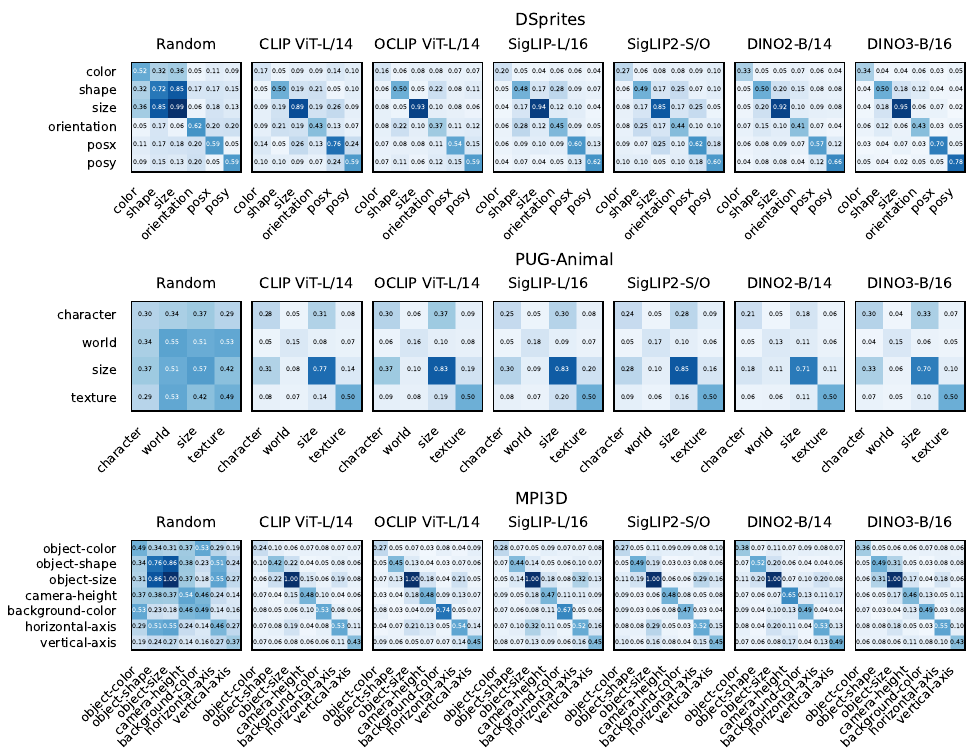}
  \vspace{-1em}
  \caption{\small \textbf{Full orthogonality matrices across models and datasets.} Each entry reports mean absolute cosine between centered factor directions. Diagonal blocks correspond to within-concept similarity; off-diagonal blocks correspond to across-concept similarity. Lower off-diagonal values indicate stronger cross-concept orthogonality.}
  \label{fig:orthogonality_factors_full}
\end{figure}

\noindent We summarize the same trend in aggregate form in \Cref{fig:orthogonality_factors_bars}, comparing within-concept and across-concept values directly.

\begin{figure}[H]
  \centering
  \includegraphics[width=1\textwidth]{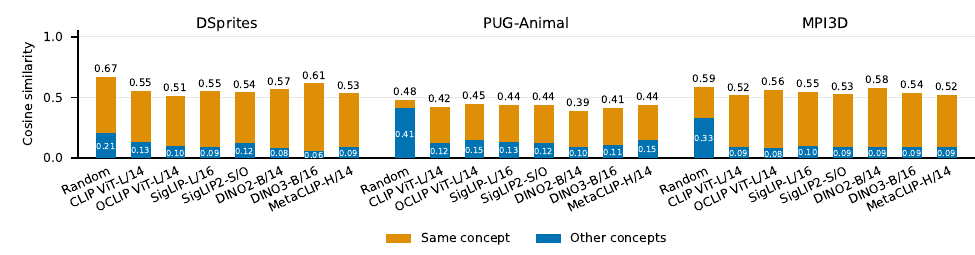}
  \caption{\small \textbf{Aggregated orthogonality summary.} Bars compare within-concept direction similarity and across-concept similarity; across-concept values are consistently lower, matching the orthogonality pattern predicted by the theory.}
  \label{fig:orthogonality_factors_bars}
\end{figure}

\subsection{Dimensionality of factors} \label{sec:geometry-of-factors-and-datapoints}

Our theory predicts that generalizing linear compositional models require linear factorization of embeddings into per-concept components. When many concepts must coexist in a fixed embedding dimension, each concept's subspace should be low-rank to enable efficient packing (\Cref{sec:linear-factorization-in-pretrained-models}). Here, we investigate to which extent concept factors in pretrained models are low-dimensional.

\textbf{Metrics and setup.}
We study factor geometry (with factor recovery following \Cref{sec:linear-factorization-in-pretrained-models}).
For each concept $i\in[k]$ with value set $\cC_i$ ($n_i=|\cC_i|$), we aggregate the per-concept factors $\bu_{i,j}$ for $j\in\cC_i$ into a matrix $\bU_i\in\R^{n_i\times d}$.
We then analyze (1) the dimensionality of each concept and (2) how this dimensionality compares across models.
To do so, we examine the spectrum of $\bU_i$ (PCA on its rows) and report the number of principal components required to explain $95\%$ of the variance across values $j$.

\begin{figure}[H]
  \centering
  \includegraphics[width=\textwidth]{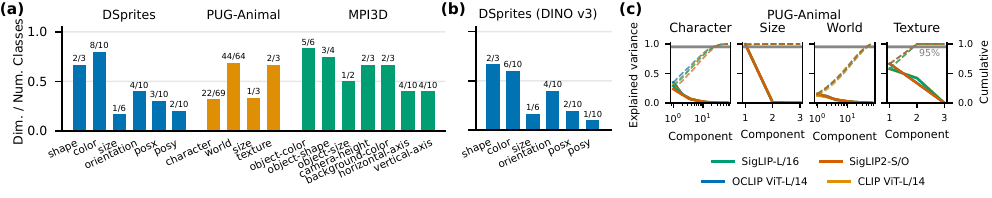}
  \vspace{-0.3em}
  \caption{\small \textbf{Dimensionality of factors.} \textbf{(a)} Normalized effective rank across datasets and concepts under OpenCLIP ViT-L/14; text above each bar reports ``effective dimension / number of values'' for that concept. \textbf{(b)} Same analysis as in (a), shown for DINOv3 on dSprites; the recovered factors are typically lower-rank (variance concentrates in fewer PCs). \textbf{(c)} Cumulative explained variance of recovered PUG-Animal factors across model families (CLIP, OpenCLIP, SigLIP, SigLIP2); for each concept, the curves are nearly overlapping, indicating strong cross-model similarity in factor geometry.}
  \label{fig:dim_factors_aa}
\end{figure}

\textbf{Results.}
Fig.~\ref{fig:dim_factors_aa} shows that most ordinal factors lie in low-dimensional subspaces relative to their cardinality (e.g., dSprites size $1/6$, MPI3D vertical-axis $4/10$).
Panel (b) shows the same tendency in a DINOv3-specific view on dSprites: factor variance typically concentrates in a small number of principal components. Panel (c) shows that this structure is highly consistent across architectures: for each PUG-Animal concept, explained-variance curves from different model families are nearly indistinguishable. This cross-model similarity is consistent with the Platonic Representation Hypothesis \citep{huh2024platonic} and with recent evidence that independently trained multimodal contrastive models can often be aligned by near-orthogonal transforms \citep{gupta2026canonicalizingmultimodalcontrastiverepresentation}. Across datasets and models, $\geq 95\%$ of variance is typically captured by one or two PCs, indicating that spectra align closely by concept. Discrete concepts show higher rank, potentially due to being composed of more atomic attributes. Overall, semantic factors are low-rank and geometrically similar across models, while discrete concepts are not strictly low-rank.

We also visualize dSprites factors (orientation, size, $y$-position) in \Cref{fig:qual_figss}. Each subspace is effectively $<\!3$D ($\geq95\%$ variance in $\leq2$ PCs). Size and $y$-position trace near-1D path, while orientation forms a smooth 2D curve with small curvature, matching the effective dimensions in Fig.~\ref{fig:dim_factors_aa}.

\begin{figure}[H]
  \centering
  \includegraphics[width=0.5\textwidth]{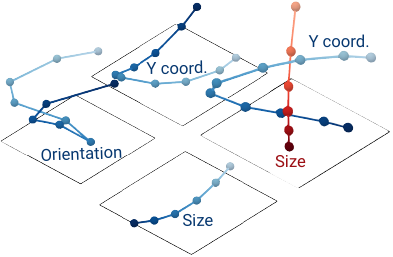}
  \caption{\small \textbf{Geometry of \textit{factors} $\{ \bu_{i,j} \}$ in OpenCLIP ViT-L/14.} The factors are often low dimensional and near co-linear within a concept. Across concepts, the factors are near-orthogonal.}
  \label{fig:qual_figss}
\end{figure}

\subsubsection{Full per-model results}

For each model and dataset, we estimate per-concept effective dimensionality as the minimum number of PCA components needed to explain $95\%$ variance in recovered factors, and report it relative to concept cardinality. The complete per-model plots are shown in \Cref{fig:dimensionality-95p-svd}.

\begin{figure}[H]
  \centering
  \vspace{-0.75em} 

  \begin{subfigure}[t]{0.48\linewidth}
    \centering
    \includegraphics[width=\linewidth]{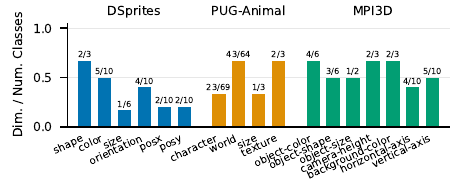}
    \caption{DINOv1 ViT-S/16}
    \label{fig:dimratio-dinov1-vits16}
  \end{subfigure}\hfill 
  \begin{subfigure}[t]{0.48\linewidth}
    \centering
    \includegraphics[width=\linewidth]{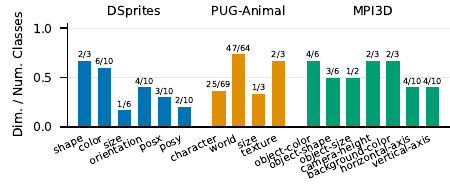}
    \caption{DINOv1 ViT-B/16}
    \label{fig:dimratio-dinov1-vitb16}
  \end{subfigure}

  \vspace{0.35em}

  \begin{subfigure}[t]{0.48\linewidth}
    \centering
    \includegraphics[width=\linewidth]{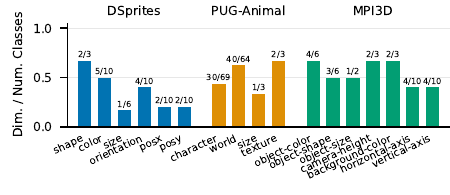}
    \caption{DINOv2 ViT-S/14}
    \label{fig:dimratio-dinov2-vits14}
  \end{subfigure}\hfill
  \begin{subfigure}[t]{0.48\linewidth}
    \centering
    \includegraphics[width=\linewidth]{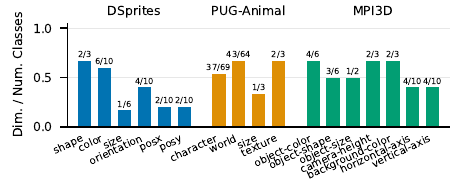}
    \caption{DINOv2 ViT-L/14}
    \label{fig:dimratio-dinov2-vitl14}
  \end{subfigure}

  \vspace{0.35em}

  \begin{subfigure}[t]{0.48\linewidth}
    \centering
    \includegraphics[width=\linewidth]{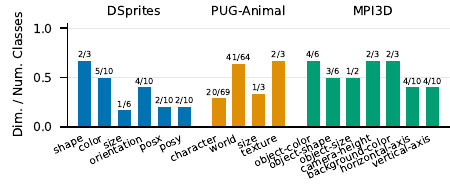}
    \caption{DINOv3 ViT-S/16}
    \label{fig:dimratio-dinov3-vits16}
  \end{subfigure}\hfill
  \begin{subfigure}[t]{0.48\linewidth}
    \centering
    \includegraphics[width=\linewidth]{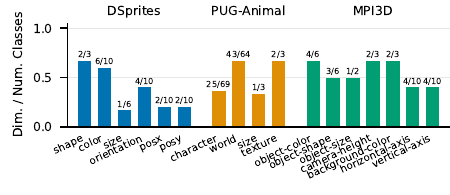}
    \caption{DINOv3 ViT-L/16}
    \label{fig:dimratio-dinov3-vitl16}
  \end{subfigure}

  \vspace{0.35em}

  \begin{subfigure}[t]{0.48\linewidth}
    \centering
    \includegraphics[width=\linewidth]{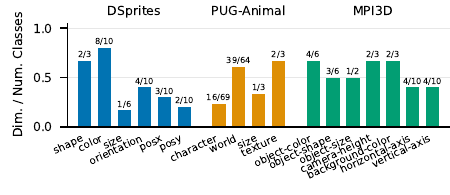}
    \caption{OpenAI CLIP RN50}
    \label{fig:dimratio-openai-clip-rn50}
  \end{subfigure}\hfill
  \begin{subfigure}[t]{0.48\linewidth}
    \centering
    \includegraphics[width=\linewidth]{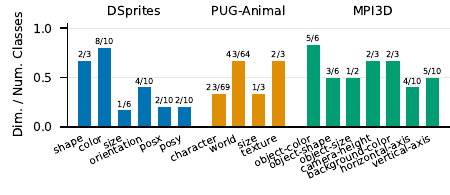}
    \caption{OpenAI CLIP ViT-L/14}
    \label{fig:dimratio-openai-clip-vitl14}
  \end{subfigure}

  \caption{\small Dimensionality results computed as the number of SVD factors required to reach 95\% explained variance, per dataset, across representative vision(-language) backbones.}
  \label{fig:dimensionality-95p-svd}
  \vspace{-0.75em}
\end{figure}

\begin{figure}[H]
  \ContinuedFloat
  \centering
  \vspace{-0.75em}

  \begin{subfigure}[t]{0.48\linewidth}
    \centering
    \includegraphics[width=\linewidth]{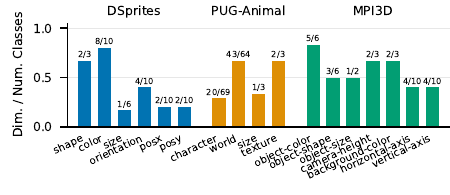}
    \caption{OpenCLIP ViT-B/16 (LAION-400M)}
    \label{fig:dimratio-openclip-vitb16-laion400m}
  \end{subfigure}\hfill
  \begin{subfigure}[t]{0.48\linewidth}
    \centering
    \includegraphics[width=\linewidth]{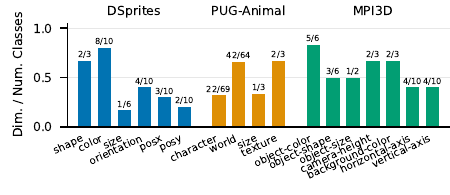}
    \caption{OpenCLIP ViT-L/14 (LAION-2B)}
    \label{fig:dimratio-openclip-vitl14-laion2b}
  \end{subfigure}

  \vspace{0.35em}

  \begin{subfigure}[t]{0.48\linewidth}
    \centering
    \includegraphics[width=\linewidth]{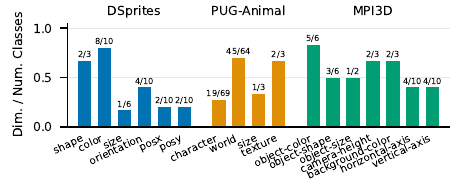}
    \caption{MetaCLIP 2.5B ViT-B/32}
    \label{fig:dimratio-metaclip25b-vitb32}
  \end{subfigure}\hfill
  \begin{subfigure}[t]{0.48\linewidth}
    \centering
    \includegraphics[width=\linewidth]{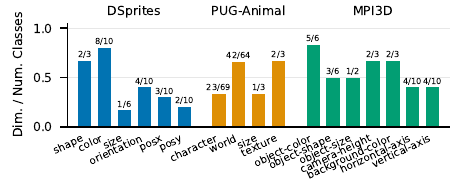}
    \caption{MetaCLIP 2.5B ViT-L/14}
    \label{fig:dimratio-metaclip25b-vitl14}
  \end{subfigure}

  \vspace{0.35em}

  \begin{subfigure}[t]{0.48\linewidth}
    \centering
    \includegraphics[width=\linewidth]{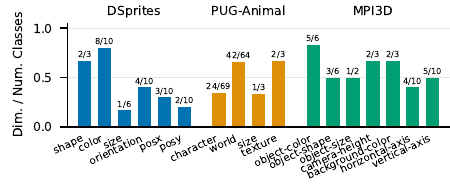}
    \caption{MetaCLIP2 ViT-H/14 (QuickGELU)}
    \label{fig:dimratio-metaclip2-vith14-qgelu}
  \end{subfigure}\hfill
  \begin{subfigure}[t]{0.48\linewidth}
    \centering
    \includegraphics[width=\linewidth]{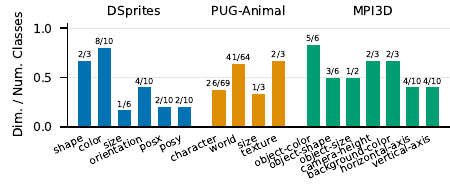}
    \caption{MetaCLIP2 ViT-bigG/14}
    \label{fig:dimratio-metaclip2-vitbigg14}
  \end{subfigure}

  \vspace{0.35em}

  \begin{subfigure}[t]{0.48\linewidth}
    \centering
    \includegraphics[width=\linewidth]{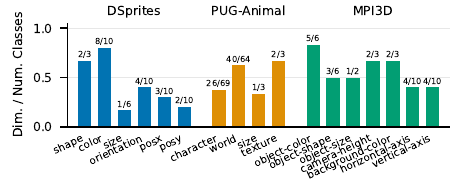}
    \caption{SigLIP-L/16}
    \label{fig:dimratio-siglip-l16}
  \end{subfigure}\hfill
  \begin{subfigure}[t]{0.48\linewidth}
    \centering
    \includegraphics[width=\linewidth]{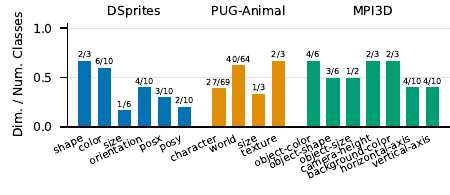}
    \caption{SigLIP2 SO400M/16 (256)}
    \label{fig:dimratio-siglip2-so400m-16-256}
  \end{subfigure}

  \caption[]{\small Dimensionality results (continued from Fig.~\ref{fig:dimensionality-95p-svd}).}
  \vspace{-0.75em}
\end{figure}

\begin{figure}[H]
  \centering
  \includegraphics[width=0.5\textwidth]{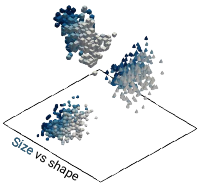}
  \caption{\small \textbf{Geometry of \textit{datapoints} in OpenCLIP ViT-L/14.} We show the span of the joint features of OpenCLIP ViT-L/14.}
  \label{fig:rank_analysis}
\end{figure}

\subsection{{Experiments using text encoders as probes (zero-shot results)}} \label{sec:experiments-using-text-encoders-as-probes}

In the main text (\Cref{sec:linear-factorization-in-pretrained-models}), we analyzed the factors of the models by training linear probes on the image embeddings using gradient descent with cross-entropy. This was done for two reasons: (1) to handle concepts that are difficult to express as text prompts (e.g., visually complex backgrounds or continuous attributes like size or orientation), and (2) to avoid potential misalignment between the text and vision modalities, where the text encoder must accommodate many visual categories, potentially leading to suboptimal performance for certain domains. Here, we ask what happens when we do not take into account these problems and instead rely on the linear probes that the text encoder already produces.

In this section, we provide analogous analyses to those in the main text, but using the text encoder as probes instead of external linear probes for two datasets: PUG-Animal and ImageNet-AO. We use these datasets for two reasons: (1) their concepts and values map naturally to text prompts, and (2) the datasets were released after the CLIP models and exhibit many unnatural concept combinations unlikely to have appeared in text captions during pre-training, and not present in the visual training data.

\subsubsection{{Experiments on PUG-Animal}}

\textbf{Setup.} Four concepts are exposed: character, background, scale, and texture. For each character we parse the character name into a set of words and use prompts of the form ``A picture of a <character>''. For each background, we use prompts of the form ``A picture of a <background>'' (detailed in \Cref{tab:name_mapping_pug_bg}).

\begin{table*}
\caption{\small \textbf{Mapping from class names to clean prompt names for PUG-Animal experiments.}}
\label{tab:name_mapping_pug_bg}
\renewcommand{\arraystretch}{1.06}
\begin{tabular*}{\textwidth}{@{\extracolsep{\fill}}ll}
\toprule
\textbf{Original Name}          & \textbf{Prompt Name} \\
\midrule
Desert                  & a desert \\
Tableland               & a tableland \\
EuropeanStreet & a European street \\
OceanFloor              & the ocean floor \\
Racetrack               & a racetrack \\
Ruins                   & ancient ruins \\
TrainStation            & a train station \\
BusStationInterior      & the interior of a bus station \\
BusStationExterior      & the exterior of a bus station \\
IndoorStairs            & indoor stairs \\
Circus                  & a circus \\
BoxingRing              & a boxing ring \\
Mansion                 & a mansion \\
ShoppingMall            & a shopping mall \\
ConferenceRoom          & a conference room \\
VillageOutskirt         & a village outskirt \\
VillageSquare           & a village square \\
Courtyard               & a courtyard \\
Forge                   & a forge \\
Library                 & a library \\
Museum                  & a museum \\
Gallery                 & an art gallery \\
Opera                   & an opera house \\
Restaurant              & a restaurant \\
RuralAustralia          & rural Australia \\
AustraliaRoad  & a road in Australia \\
ShadyRoad               & a shady road \\
SaltFlats               & salt flats \\
Castle                  & a castle \\
Temple                  & a temple \\
Snow                    & a snowy landscape \\
Grass                   & a grassy field \\
DryGrass                & a dry grassland \\
Forest                  & a forest \\
\bottomrule
\end{tabular*}
\end{table*}

We map numeric scale values and texture labels to descriptive prompt templates for evaluating the models. Specifically, for scale, we use:

\begin{itemize}
    \item \texttt{0.7} $\rightarrow$ ``A picture of a small object''
    \item \texttt{1.0} $\rightarrow$ ``A picture of a medium-sized object''
    \item \texttt{1.3} $\rightarrow$ ``A picture of a large object''
\end{itemize} 

For textures, we use the following mappings:
\begin{itemize}
    \item ``Sky'' $\rightarrow$ ``A picture of an object in sky texture''
    \item ``Grass'' $\rightarrow$ ``A picture of an object in grass texture''
    \item ``Asphalt'' $\rightarrow$ ``A picture of an object in asphalt texture''
\end{itemize}

These prompt templates are used to generate the corresponding text embeddings for each concept, matching exactly with the setup of the experiments in the main text.

Concretely, for each concept value $j\in[n]$, we pass the prompt template through the text encoder $g$ to obtain a ($\ell_2$-normalized) probe vector $\bw_{i,j} = g(p_{i,j})\ \in\ \cZ$, as detailed in \Cref{sec:instantiating_the_framework}.

\begin{figure}[H]
  \centering
  \includegraphics[]{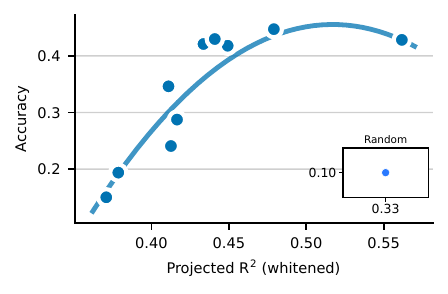}
  \caption{\small \textbf{Projected $R^2$ vs accuracy on PUG-Animal across models.} Higher projected $R^2$ coincides with higher accuracy on the full dataset. The probes are extracted from the text encoder. }
  \label{fig:projected_r2_pug_animal}
\end{figure}
\textbf{Linearity of factors and generalization.} We show the projected $R^2$ and average accuracy on all concept combinations on PUG-Animal across models in \Cref{fig:projected_r2_pug_animal} when using the text encoder as probes. Models exhibiting higher linearity of representations generally exhibit higher accuracy on the full dataset. This coincides with the observations in the main text (\Cref{sec:linear-factorization-in-pretrained-models}); random baseline achieves low projected $R^2$ and accuracy.

\textbf{Orthogonality of the factors.} For each of the concepts, we compute the linear factors as detailed in the main text (\Cref{sec:linear-factorization-in-pretrained-models}) with the text encoder as probes. We compute the within- and across-concept orthogonality as detailed in \Cref{app:orho_exps} and illustrate the results in \Cref{fig:orho_plots_pug_animal} for each of the models. 

\begin{figure}[H]
  \vspace{-1em}
  \centering
  \includegraphics[width=1.0\textwidth]{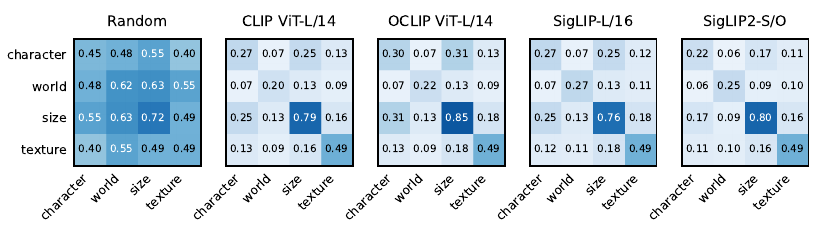}
  \caption{\small \textbf{Orthogonality of the factors on PUG-Animal.} Heatmaps show pairwise cosine similarity between factors for the four PUG-Animal concepts (character, world, size, texture) across multiple models. The factors are more orthogonal across concepts (off-diagonal) than within concepts (diagonal). The random baseline does not generally show this pattern.}
  \label{fig:orho_plots_pug_animal}
\end{figure}

For all evaluated models, we observe the same orthogonality pattern: the factors are more orthogonal across concepts (off-diagonal) than within concepts (diagonal). The average cosine similarity for the random baseline is higher (around 0.5) both within and across concepts.

We also note the qualitative similarity between the factors to the case when probes were trained on $90\%$ of the concept combinations (\Cref{fig:orthogonality_factors_full}, second row).

\textbf{Qualitative examples.} We illustrate some of the highest- and lowest-scoring samples in terms of $R^2$ for the SigLIP2 model in \Cref{fig:pug_results}. We note that high-scoring samples generally depict clean scenes where the character and its size and texture are easier to discern compared to the lower-scoring samples.

\begin{figure}[H]
  \centering
  \begin{subfigure}{\textwidth}
      \centering
      \includegraphics[]{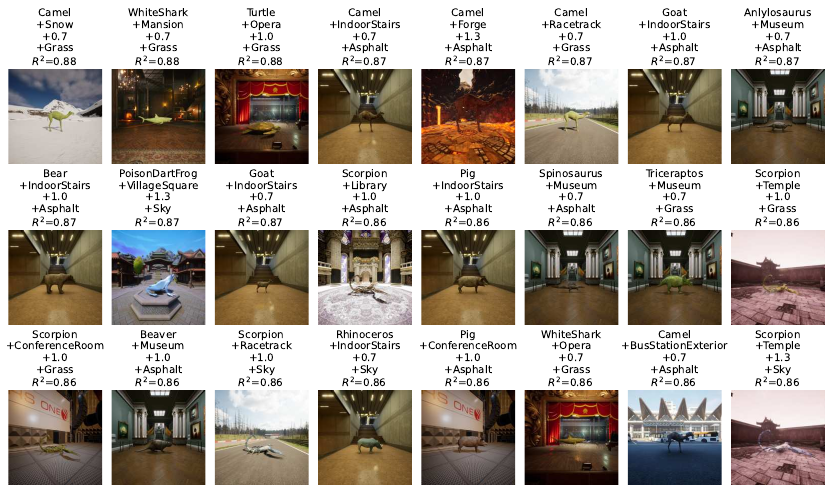}
      \caption{Top-scoring samples in terms of $R^2$.}
      \vspace{1em}
  \end{subfigure} 
  \begin{subfigure}{\textwidth}
      \centering
      \includegraphics[]{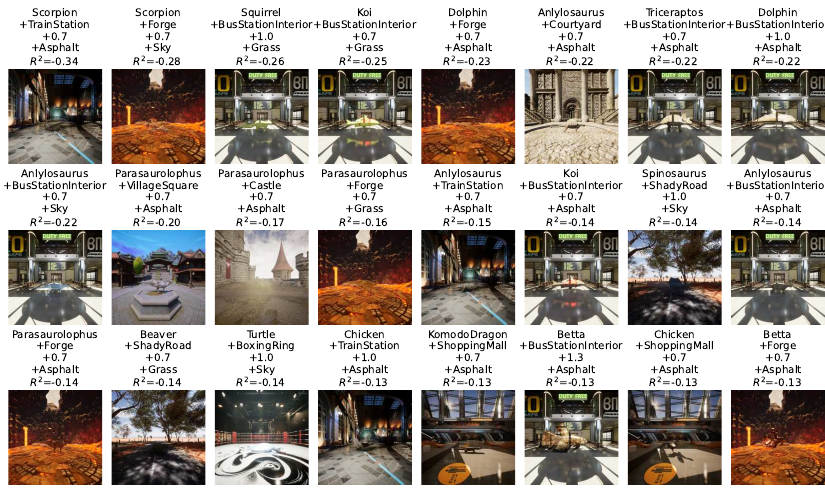}
      \caption{Bottom-scoring samples in terms of $R^2$.}
      \vspace{1em}
  \end{subfigure}
  \vspace{-1.5em}
  \caption{\small \textbf{Qualitative examples of the top- and lowest-scoring samples in PUG-Animal for the SigLIP2 model.} Each sample shows its character name, world name, size value (0.7 corresponds to ``small'', 1.0 corresponds to ``medium'', 1.3 corresponds to ``large''), texture name, and its $R^2$ score.}
  \label{fig:pug_results}
\end{figure}

\subsubsection{{Experiments on ImageNet-AO}} \label{sec:imagenet_ao}

We additionally perform experiments on a coarse-captioned dataset ImageNet-AO \citet{abbasi2024decipheringrolerepresentationdisentanglement}, where each image sample has an associated caption composed of an adjective and a noun.

The experiments here are slightly dissimilar from the main experiments in \Cref{sec:linear-factorization-in-pretrained-models}, for a few reasons: (1) scarcity of per-combination data, (2) inability to train linear probes, (3) noisy/ambiguous data, and (4) coarse categories.  Regardless, our framework still applies. 
  
\textbf{Dataset description.} The dataset contains images described by an adjective and a noun. There are around 80 unique adjectives and over 600 unique nouns. To make the analysis balanced, we work with the dataset restricted to the most common 80 nouns and adjectives. Each potential combination of adjective and noun may have between 0 and 6 images. The dataset is thus sparse, and many of the potential combinations are not observed in the dataset. This results in a total of 3243 datapoints. We illustrate the sparsity and the pairs we work with in \Cref{fig:adj_noun_mat_imagenetao}.
 
\begin{figure}[H]    
  \centering 
  \includegraphics{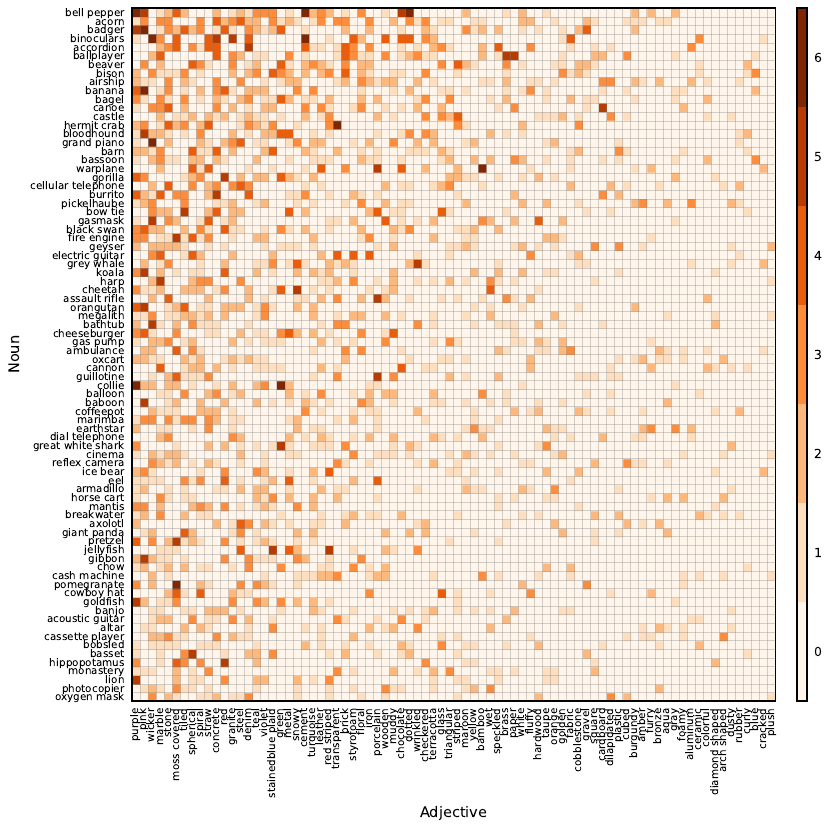}
  \caption{\small \textbf{Adjective-noun count matrix for ImageNet-AO \citep{abbasi2024decipheringrolerepresentationdisentanglement} of the top 80 adjectives and nouns.} The adjective-noun pairs are sparse, and many of them are not observed in the dataset.}
  \label{fig:adj_noun_mat_imagenetao}
\end{figure}

\textbf{General setup.} Due to limited availability of the data samples, we \textit{do not} train linear probes. Because each sample is associated with a (noun, adjective) combination, we instead use the probes from the text encoder to assess the performance of the models (as detailed in the main text in \Cref{sec:instantiating_the_framework}). Concretely, we pass captions in the style of ``A picture of <noun>'' in the case of noun, and ``A picture showing <adjective>'' in the case of adjective, through the text encoder.

Because of imbalance and sparsity, we cannot rely on averaging to extract the factors as done in \Cref{sec:linear-factorization-in-pretrained-models}. Instead, we follow \cite{uselis2025scaling} and solve a linear system of equations to recover the factors. Concretely, we construct a design matrix $\bA \in \{0,1\}^{3243 \times 80 \cdot 2}$ where each row corresponds to a sample, and each column corresponds to either the presence of a noun (if the column index $< 80$) or the presence of an adjective (if the column index $\geq 80$). The matrix was of full rank $2 \cdot 80 - 1$. Then, we solve the linear system $\bA \begin{bmatrix} \bu_{\text{noun}} \\ \bu_{\text{adj}} \end{bmatrix} = \bX$ to recover the factors $\bu_{\text{noun}} \in \R^{80 \times d}$ and $\bu_{\text{adj}} \in \R^{80 \times d}$, where $d$ is the dimension of the representation space, and $\bX \in \R^{3243 \times d}$ is the centered image embeddings. We show the whitened $R^2$ scores. The remaining procedure in the analysis follows  \Cref{sec:linear-factorization-in-pretrained-models}.

\textbf{Linearity of factors and generalization.} We show the projected $R^2$ vs accuracy on ImageNet-AO across models in \Cref{fig:projected_r2_imagenetao}. As seen in the main text (\Cref{sec:linear-factorization-in-pretrained-models}), higher projected $R^2$ coincides with higher accuracy on the full dataset. Importantly, the random baseline achieves substantially lower projected $R^2$ (less than 0.1) compared to the other models.

\begin{figure}[H]
  \centering
  \includegraphics[width=0.48\textwidth]{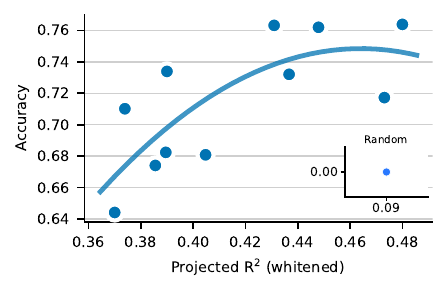}
  \caption{\small \textbf{Projected $R^2$ vs accuracy on ImageNet-AO across models.} Higher projected $R^2$ coincides with higher accuracy on the full dataset. Linear probes were not trained here, and the results are computed using the text encoder. }
  \label{fig:projected_r2_imagenetao}
\end{figure}

\textbf{Orthogonality of the factors.} To substantiate the claims of orthogonality of factors across concepts, we extract the factors for all the models as detailed in the setup above. Concretely, for each of the attribute factor $\bu_{i}, i\in[80]$ and noun factor $\bu_{j}, j\in[80]$, within- and across-concept orthogonality as detailed in \Cref{sec:linear-factorization-in-pretrained-models}.

We illustrate the results in \Cref{fig:orho_plots_imagenetao}. For all of the evaluated models the same pattern of orthogonality is observed: the factors are more orthogonal across concepts than they are within concepts.  For example, for the CLIP ViT-L/14 model, the within-concept similarity on average is 0.10 between nouns, and 0.14 between adjectives, while the average cosine similarity across concepts is 0.07. The random baseline on average yields 0.49 cosine similarity both across and within concepts.

Interestingly, all of the non-random models exhibit surprising degree of similarity  in terms of the cosine similarities. For example, CLIP ViT-L/14 and OpenCLIP ViT-L/14 on average exhibit almost the same cosine similarity within and across concepts, differing only in the noun-noun cosine similarity (0.10 vs 0.11, respectively). These results support the notions of universality between models as argued by the Platonic Representation Hypothesis \citep{huh2024platonic}, and empirically observed in Universal Sparse Autoencoders \citep{thasarathan2025universalsparseautoencodersinterpretable}.
  
\begin{figure}[H]    
  \centering 
  \includegraphics{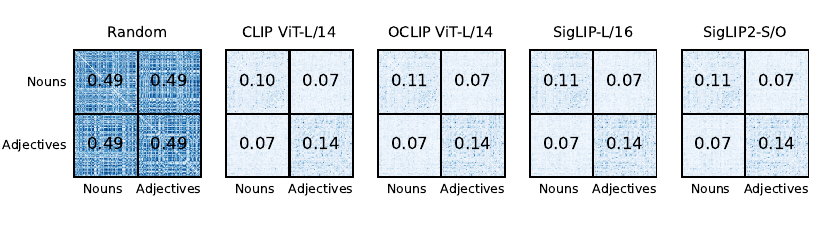}  
  \vspace{-1em}
  \caption{\small \textbf{Orthogonality of the factors on ImageNet-AO.} We show the cosine similarity of the factors for the SigLIP2 model on ImageNet-AO; we separate the first concept (nouns) from the second concept (adjectives) and show average similarity across each $2\times 2$ block. The factors are more orthogonal across concepts than they are within concepts. The random baseline does not show this pattern.}
  \label{fig:orho_plots_imagenetao}
\end{figure}

\textbf{Qualitative examples.} To understand the results deeper, we show the qualitative examples of the top- and lowest-scoring samples in ImageNet-AO for the SigLIP2 model in \Cref{fig:imagenetao_results}. The top-scoring samples show high degree of projected $R^2$ scores (generally $> 0.75$), and correctly depict the adjective and noun of the sample. Even there, however, some samples are incorrectly predicted by the model, suggesting a potential lack of alignment between the image and text encoders\footnote{This was less of an issue in the main experiments because the image embeddings were analyzed using linear probes.}.

The lowest-scoring samples show low degree of projected $R^2$ scores (generally $< 0.10$), and are often incorrectly predicted by the model. Few of the samples appear to be incorrectly labeled (e.g. first image depicting a orangutan as a gorilla), while some are correctly classified by the model but show a lack of factorization.

\begin{figure}[H]
    \centering
    \begin{subfigure}{\textwidth}
        \centering
        \includegraphics[]{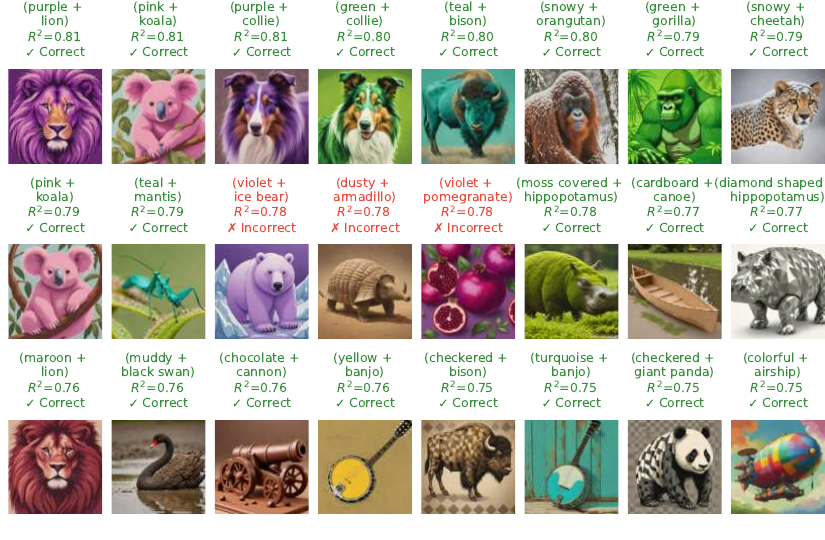}
        \caption{Top-scoring samples in terms of $R^2$.}
        \vspace{1em}
        \label{fig:imagenetao_r2_accuracy}
    \end{subfigure} 
    \begin{subfigure}{\textwidth}
        \centering
        \includegraphics[]{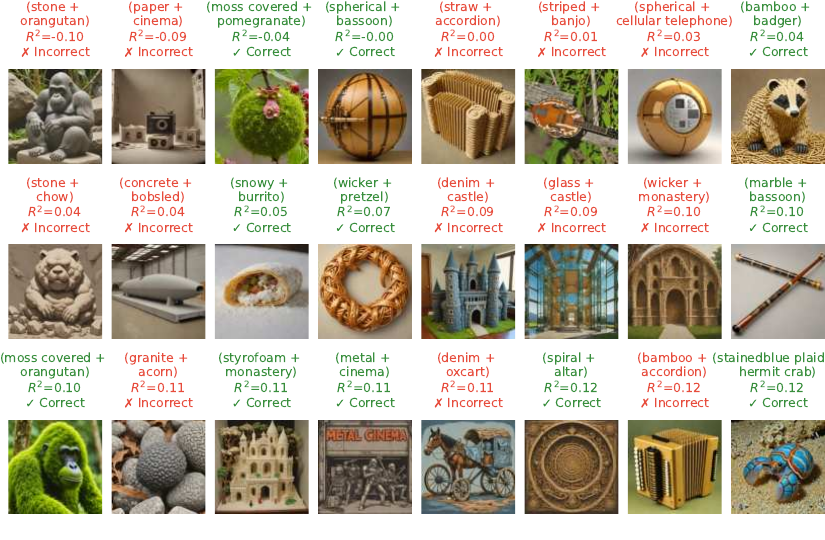}
        \caption{Bottom-scoring samples in terms of $R^2$.}
        \vspace{1em}
        \label{fig:imagenetao_adj_noun_matrix}
    \end{subfigure}
    \vspace{-1.5em}

    \caption{\small \textbf{Qualitative examples of the top- and lowest-scoring samples in ImageNet-AO for the SigLIP2 model.} Each sample shows its adjective and noun, its $R^2$ score, and whether it was correctly classified by the model. Note that both top- and lowest-scoring samples may be either correctly or incorrectly classified by the model.}

    \label{fig:imagenetao_results} 
\end{figure}    

\subsection{Robustness across train fractions, error bars, and residual structure} \label{sec:rebuttal-robustness}

This subsection collects additional analyses that complement the main-text experiments: (i) the projected $R^2$ vs.\ compositional accuracy correlation across a wide range of train fractions, (ii) the behavior of projected $R^2$ as a function of train fraction itself, (iii) orthogonality metrics with error bars and on the full $32{\times}32$ dSprites grid, and (iv) a spectral analysis of the residuals after subtracting the recovered linear factorization.
 
\subsubsection{Correlation across train fractions ($|S|/N \in \{0.001,\ldots,0.9\}$)} \label{sec:train-frac-sweep-rebuttal}

We replicate the projected $R^2$ vs.\ compositional accuracy analysis of \Cref{sec:compositional-generalization-and-linear-factorization} across train fractions $|S|/N \in \{0.001, 0.003, 0.007, 0.01, 0.03, 0.07, 0.1, 0.3, 0.7, 0.9, 0.95, 0.99, 0.999\}$ over 3 seeds. \Cref{fig:train_ratio_sweep,fig:train_ratio_sweep_cont} show that the positive correlation between projected $R^2$ and compositional accuracy persists across \emph{all} train fractions, including the extreme low-data regime. The rightmost column in each row reports results on dSprites with the full $32{\times}32$ position grid.

\begin{figure}[H]
  \centering
  \includegraphics[width=\textwidth]{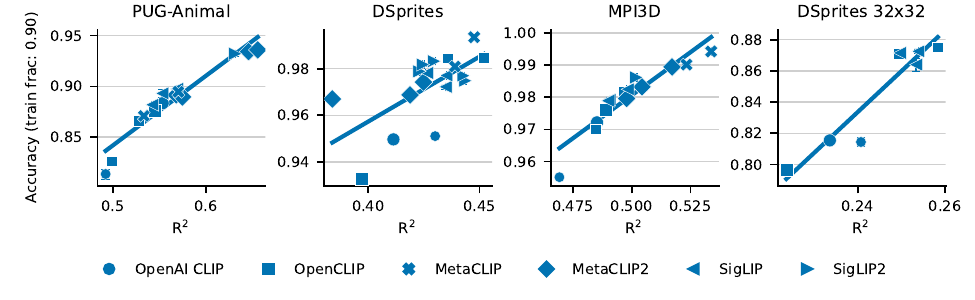}\\[-2pt]
  \includegraphics[width=\textwidth]{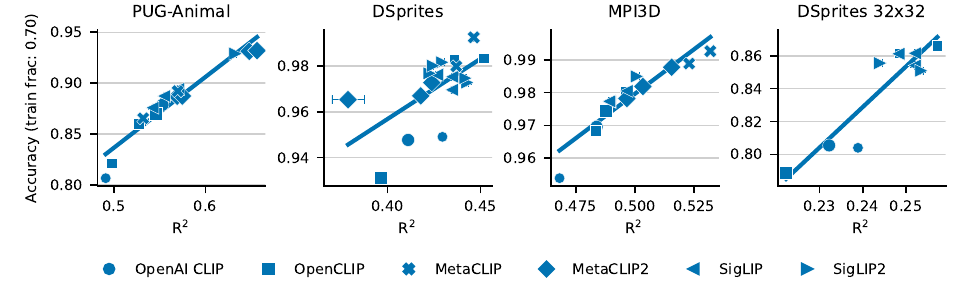}\\[-2pt]
  \includegraphics[width=\textwidth]{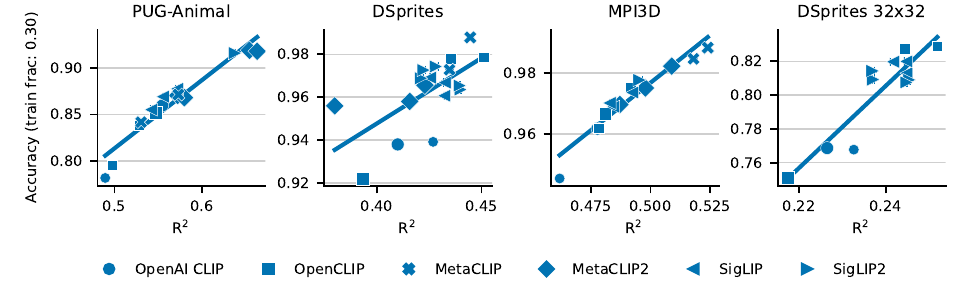}
  \caption{\small \textbf{Correlation between projected $R^2$ and compositional accuracy across train fractions (part 1/2).} Each row uses a different train fraction (shown on the $y$-axis). The positive correlation persists across all regimes. The rightmost column shows results on dSprites with the full $32{\times}32$ position grid.}
  \label{fig:train_ratio_sweep}
\end{figure}

\begin{figure}[H]
  \centering
  \includegraphics[width=\textwidth]{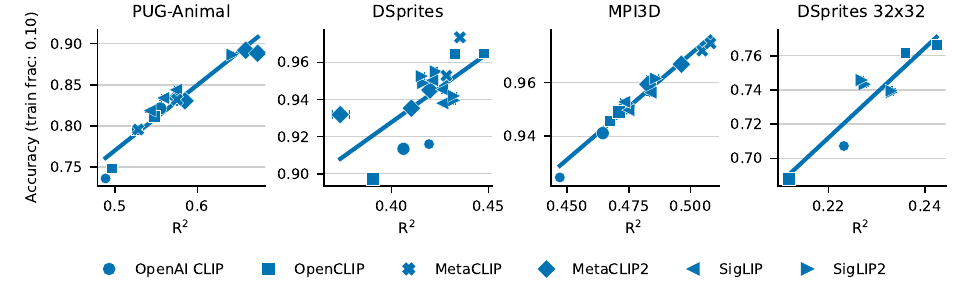}\\[-2pt]
  \includegraphics[width=\textwidth]{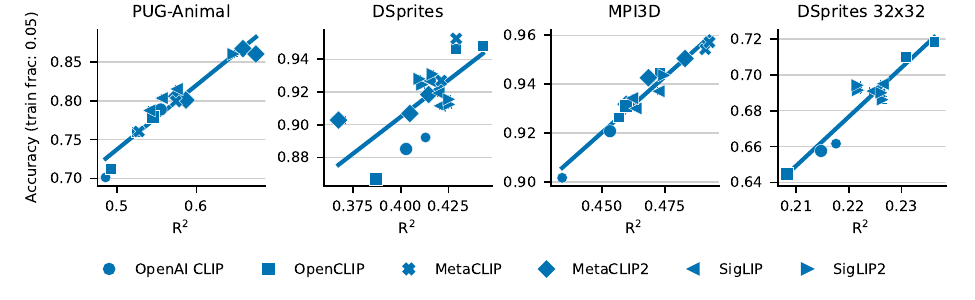}\\[-2pt]
  \includegraphics[width=\textwidth]{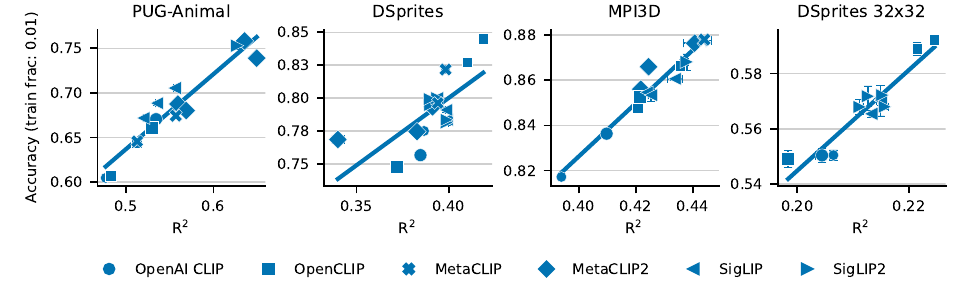}\\[-2pt]
  \includegraphics[width=\textwidth]{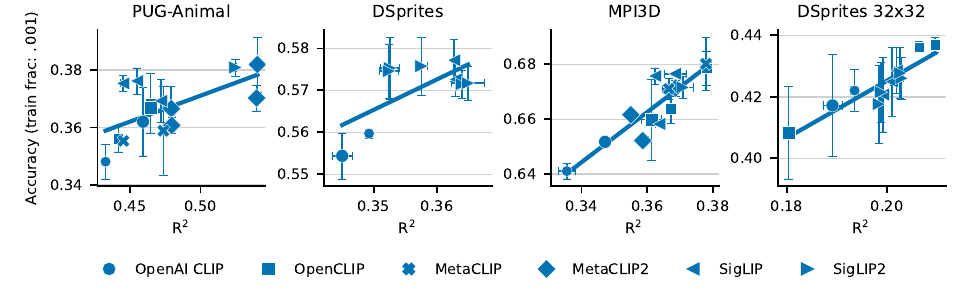}
  \caption{\small \textbf{Correlation between projected $R^2$ and compositional accuracy across train fractions (part 2/2).} Continuation of \Cref{fig:train_ratio_sweep}. The positive correlation persists throughout. The rightmost column shows results on dSprites with the full $32{\times}32$ position grid.}
  \label{fig:train_ratio_sweep_cont}
\end{figure}

\subsubsection{Projected $R^2$ as a function of train fraction} \label{sec:r2-vs-train-frac-rebuttal}

\Cref{fig:r2_vs_train_frac} shows projected $R^2$ as a function of train fraction $|S|/N$ across models and datasets. The $R^2$ score increases with train fraction but plateaus, and the relative ranking across models is preserved across regimes. In particular, the metric is not trivially inflated at high $|S|$: the major jump in $R^2$ occurs only as $|S|$ approaches $N$, indicating that the values reported in the main text are not an artifact of the chosen split.

\begin{figure}[H]
  \centering
  \includegraphics[width=\textwidth]{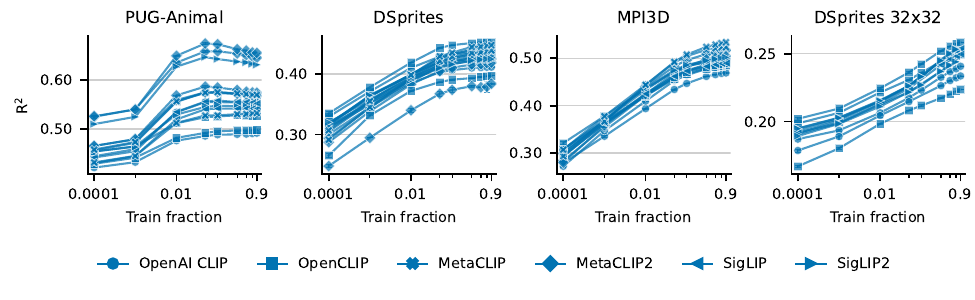}
  \caption{\small \textbf{Projected $R^2$ as a function of train fraction $|S|/N$.} $R^2$ increases with train fraction but plateaus; relative model ranking is preserved across regimes. The rightmost panel reports results on dSprites with the full $32{\times}32$ position grid.}
  \label{fig:r2_vs_train_frac}
\end{figure}

\subsubsection{Orthogonality with error bars and full dSprites grid} \label{sec:orthogonality-error-bars-rebuttal}

\Cref{fig:orthogonality_rebuttal} complements \Cref{fig:orthogonality_factors} with (i) standard deviations across concept pairs, providing uncertainty quantification for the within- vs.\ across-concept comparison, and (ii) a re-run on dSprites with the full $32{\times}32$ position grid (denoted dSprites32). The within-concept vs.\ cross-concept gap is consistent across all models and datasets, and the orthogonality pattern is preserved on the complete dSprites grid.

\begin{figure}[H]
  \centering
  \includegraphics[width=\textwidth]{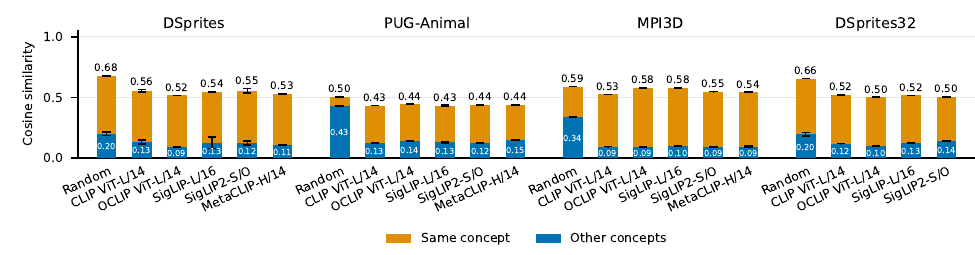}
  \caption{\small \textbf{Within-concept vs.\ across-concept cosine similarity with uncertainty.} Error bars are standard deviations across concept pairs. Pretrained models exhibit consistently higher within-concept similarity (orange) than across-concept similarity (blue), indicating partial orthogonality. The rightmost panel reports results on dSprites with the full $32{\times}32$ position grid, confirming that the orthogonality pattern holds on the complete dataset.}
  \label{fig:orthogonality_rebuttal}
\end{figure}

\subsubsection{Spectral analysis of residuals} \label{sec:svd-residual-rebuttal}

To characterize the structure of the variance not explained by linear factorization, we compute the SVD of the residuals after subtracting the recovered factorization $\sum_i \bu_{i,c_i}$ (in the projected and whitened space, consistent with how projected $R^2$ is computed). \Cref{fig:svd_residuals} shows that the residual variance is spread across many components with no dominant singular values, indicating that the unexplained variance is not concentrated in a low-dimensional structured subspace. This holds across both models (CLIP ViT-L/14 and SigLIP2-SO/400M) and all three datasets (dSprites, MPI3D, PUG-Animal).

\begin{figure}[H]
  \centering
  \begin{minipage}{0.48\textwidth}
    \centering
    \includegraphics[width=\textwidth]{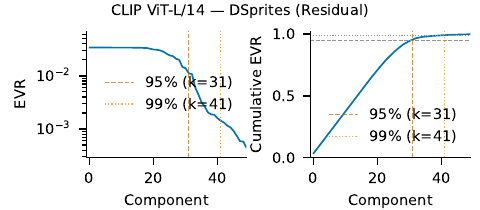}
  \end{minipage}\hfill
  \begin{minipage}{0.48\textwidth}
    \centering
    \includegraphics[width=\textwidth]{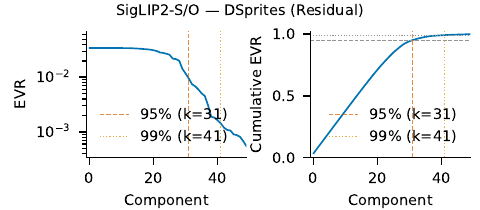}
  \end{minipage}\\[4pt]
  \begin{minipage}{0.48\textwidth}
    \centering
    \includegraphics[width=\textwidth]{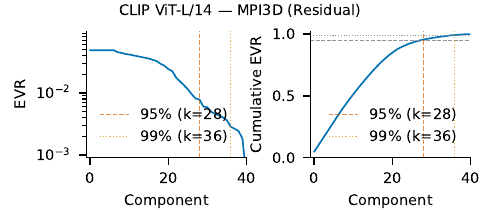}
  \end{minipage}\hfill
  \begin{minipage}{0.48\textwidth}
    \centering
    \includegraphics[width=\textwidth]{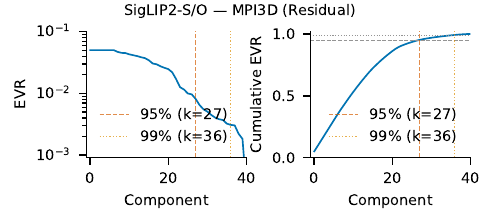}
  \end{minipage}\\[4pt]
  \begin{minipage}{0.48\textwidth}
    \centering
    \includegraphics[width=\textwidth]{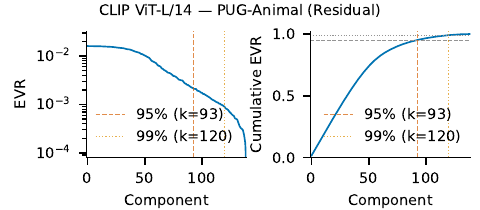}
  \end{minipage}\hfill
  \begin{minipage}{0.48\textwidth}
    \centering
    \includegraphics[width=\textwidth]{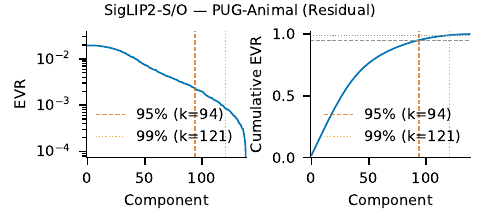}
  \end{minipage}
  \caption{\small \textbf{Spectral analysis of residuals after subtracting the linear factorization} (projected + whitened space). Left column: CLIP ViT-L/14. Right column: SigLIP2-SO/400M. Rows: dSprites, MPI3D, PUG-Animal. The residual variance is spread across many components rather than concentrated in a few, indicating that the unexplained variance is not structured.}
  \label{fig:svd_residuals}
\end{figure}

\clearpage
\subsection{Qualitative results} \label{sec:qualitative_results_factors}

We visualize the recovered factor geometry across three models (DINOv3, OpenCLIP, MetaCLIP-G) and three datasets (dSprites, MPI3D, PUG-Animal). For dSprites (\Cref{fig:geom_dsprites}) and MPI3D (\Cref{fig:geom_mpi3d}), we show both the per-concept factors and the pairwise projections (analogous to \Cref{fig:dino_summary}c,d in the main text), illustrating low-rank factor structure and near-orthogonality across concepts. For PUG-Animal (\Cref{fig:geom_pug}), we show only the per-concept factors projected onto their top 3 principal components, since most of its concepts (e.g., character, background) are discrete with many values, resulting in higher-rank factor sets where pairwise projections are less informative.

\begin{figure}[H]
  \centering    
  \includegraphics[]{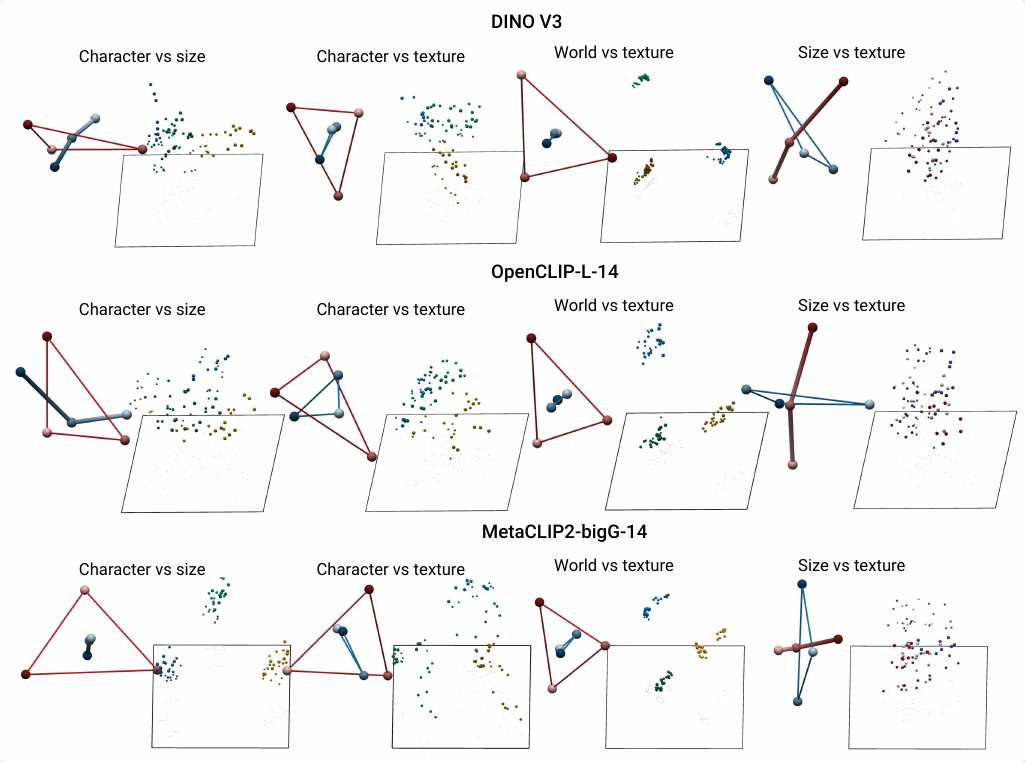}  
  \caption{\small \textbf{Factor geometry on PUG-Animal across DINOv3, OpenCLIP, and MetaCLIP2-bigG.} Only per-concept factors (projected onto their top 3 PCs) are shown, since most PUG-Animal concepts (character, background) are discrete with many values, yielding higher-rank factor sets where pairwise projections are less informative.} 
  \label{fig:geom_pug}
\end{figure}

\begin{figure}[H]
  \centering    
  \vspace{-3em}
  \includegraphics[]{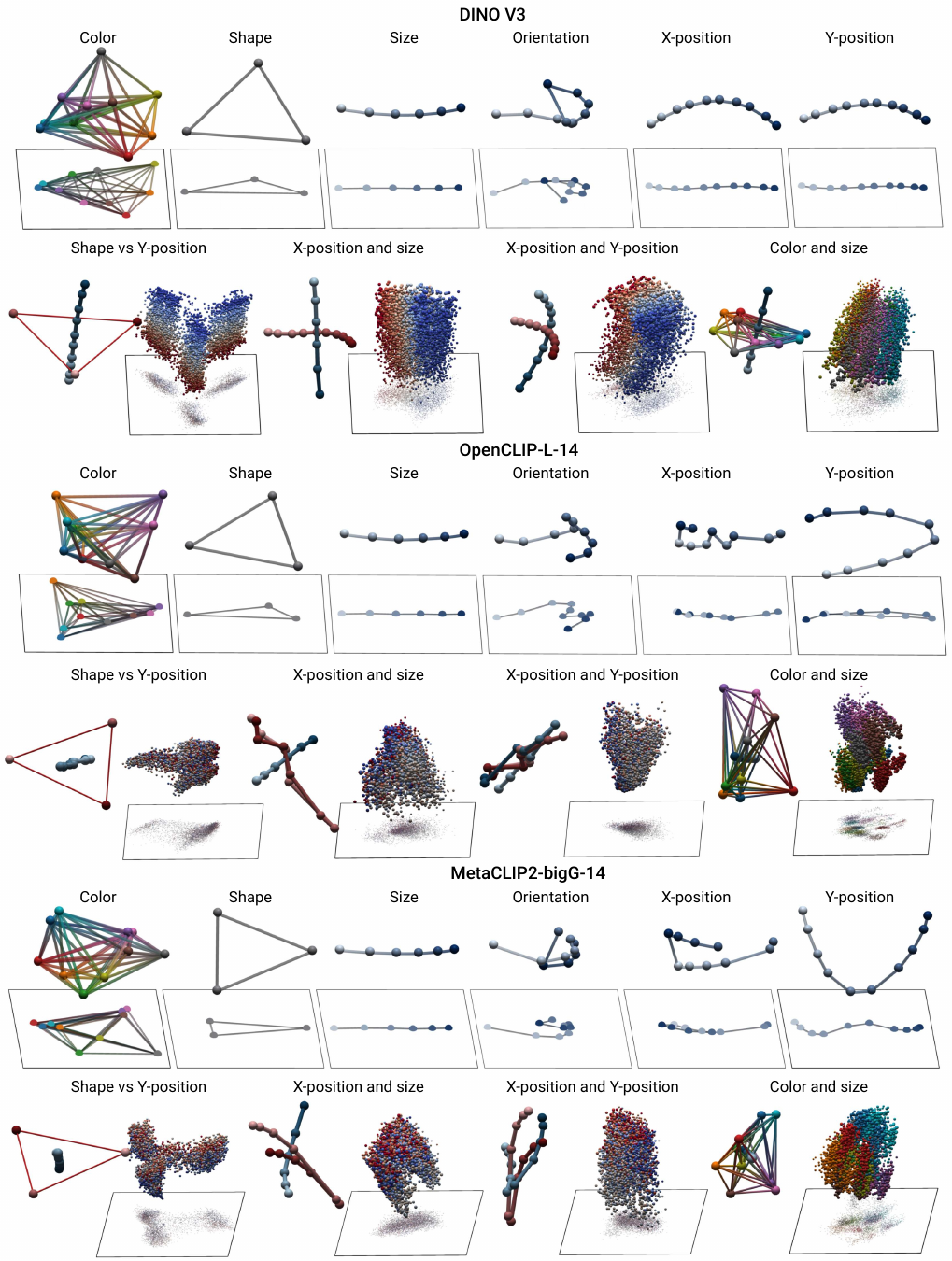} 
  \caption{\small \textbf{Factor geometry on dSprites across DINOv3, OpenCLIP, and MetaCLIP2-bigG.} For each model: the first row shows the recovered per-concept factors (one 3D plot per concept); the second row shows the pairwise factor geometry for selected concept pairs; and the third row shows all datapoints projected onto the span of those concept pairs. } 
  \label{fig:geom_dsprites}
\end{figure}     

\begin{figure}[H]
  \centering    
  \includegraphics[]{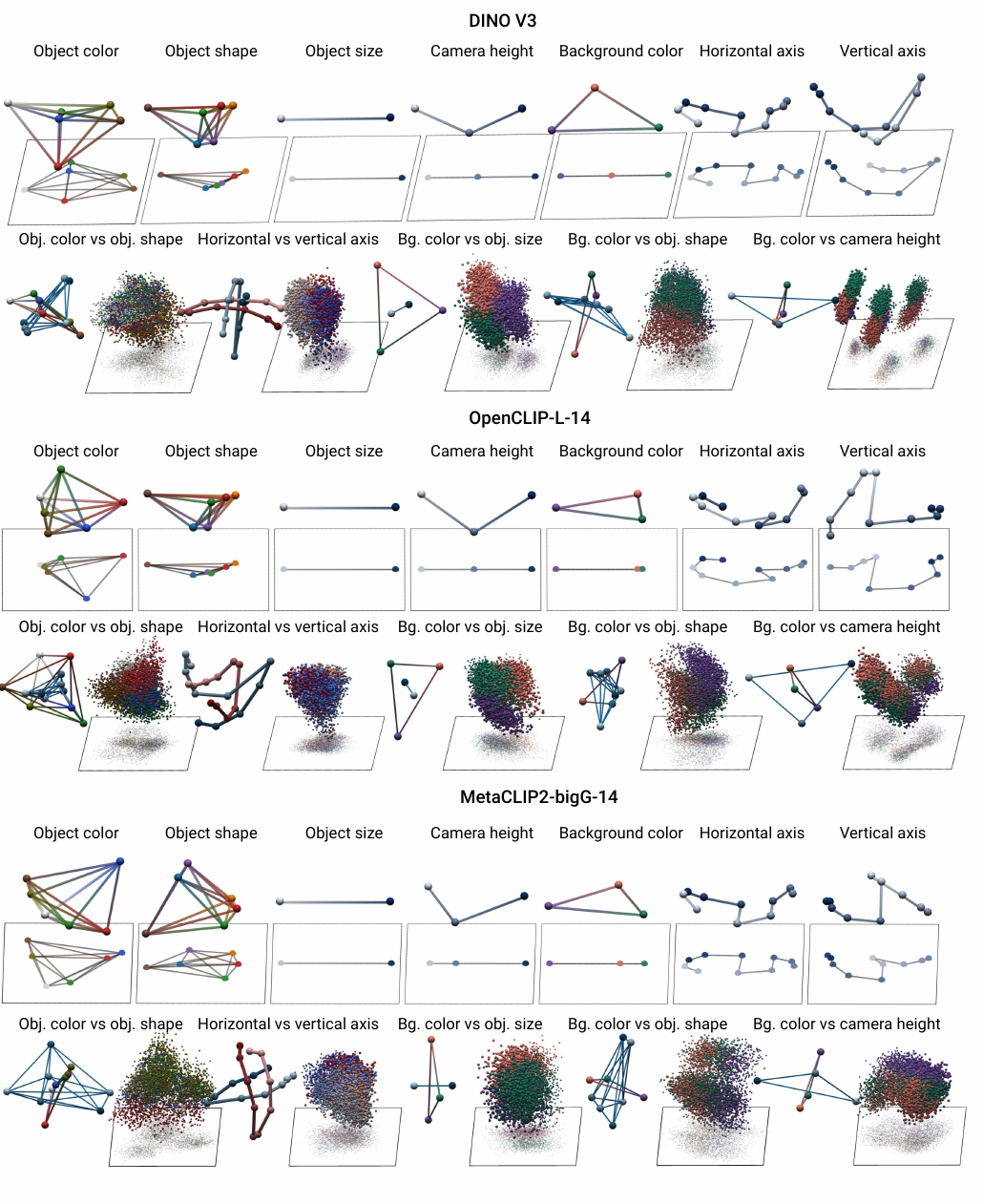}  
  \caption{\small \textbf{Factor geometry on MPI3D across DINOv3, OpenCLIP, and MetaCLIP2-bigG.} Layout follows \Cref{fig:geom_dsprites}: for each model, per-concept factors (first row), pairwise factor geometry (second row), and pairwise datapoint projections (third row).} 
  \label{fig:geom_mpi3d}
\end{figure}

\subsection{Empirical evaluation on synthetic data} \label{sec:synthetic_data}
 
Results in \Cref{sec:implications_compgen} establish the geometry of compositionally generalizing models. A natural question is what geometry emerges in models trained with standard classification losses, without explicit pressure to generalize compositionally. In addition, the core theory is derived for binary concepts, whereas the experiments below also include multi-valued concepts. We study this empirically in this section.
 
Standard pretraining regimes (e.g., CLIP/SigLIP) do not explicitly optimize for \Cref{des:axiom_of_transferability}. This controlled setup therefore lets us isolate which geometric structure emerges by common objectives. The results in this subsection should be read as properties of compositional models trained with standard losses, and not as guarantees of compositional generalization. 

We vary two representation geometries and two losses, independently, to cover common training settings and model families, such as CLIP and SigLIP models. 
We remain architecture-agnostic here, and we optimize embeddings corresponding to each concept combination directly (similarly to \cite{weller2025theoreticallimitationsembeddingbasedretrieval}).  

Because the number of combinations grows as $n^k$, we use at most $100{,}000$ combinations per setting: if $n^k \le 100{,}000$, we use all of the combinations; otherwise, we sample $100{,}000$ combinations. As a result, reported $R^2$ and factor-orthogonality values in those regimes are approximate estimates and should be interpreted with caution. For each setting, we report three metrics, following \Cref{sec:implications_compgen}: linear-factorization $R^2$, factor orthogonality, and dimension needed to support linearly compositional models (\Cref{def:linearly_compositional_model}). 

We use concept spaces $\cC_{k,n}:=\cC_1\times\cdots\times\cC_k\subset[n]^k$, with embeddings $\bz_{\bc}\in\R^d$ for each $\bc\in\cC_{k,n}$ and scores $h_{i,j}(\bz):=\tau\,\bw_{i,j}^{\intercal}\bz+b_{i,j}$. Each $\bc$ is initialized randomly: $\bz_{\bc} \sim \cN(\mathbf{0},\mathbf{I})$. Unless stated otherwise, optimization uses Adam for $50{,}000$ epochs with initial learning rate $0.1$ and cosine annealing ($T_{\max}=50{,}000$, $\eta_{\min}=0$). We fit the probes as well as the embeddings jointly.

We vary the following factors independently.
\begin{description} 
  \item[(1) Representation geometry.] We consider (1) Euclidean ($\bz_{\bc}\in\R^d$), where $\tau$ is absorbed into probe scale (so $\tau=1$), and (2) spherical ($\bz_{\bc}\in\cS^{d-1}$ with $\|\bz_{\bc}\|=\|\bw_{i,j}\|=1$) geometries, where CLIP/SigLIP-style normalization uses explicit temperature $\tau$. 
  \item[(2) Loss type.] We compare two per-concept losses (one object per scene). First, per-concept softmax cross-entropy (CE):
  \begin{equation}\label{eq:ce-general}
    \ell_{\text{CE}}(\bz_{\bc})
    =
    \sum_{i=1}^{k}
    \left(
    -\log\frac{\exp h_{i,c_i}(\bz_{\bc})}{\sum_{v=1}^{n}\exp h_{i,v}(\bz_{\bc})}
    \right).
  \end{equation}
  Second, per-concept one-vs-rest binary cross-entropy with sigmoid outputs (BCE):
  \begin{equation}\label{eq:bce}
    \ell_{\text{BCE}}(\bz_{\bc})
    =
    \sum_{i=1}^{k}\frac{1}{n}
    \left[
    -\log\sigma\!\left(h_{i,c_i}(\bz_{\bc})\right)
    -\sum_{v\neq c_i}\log\sigma\!\left(-h_{i,v}(\bz_{\bc})\right)
    \right].
  \end{equation}
  \item[(3) Concept space and dimension.] For each geometry/loss pair, we retrain with $k\in[10]$, $n\in\{2,6,12,24,48,96\}$, and $d\in\{3,\dots,32\}$.
\end{description}

\subsubsection{Results: Linearity.}  We illustrate the linearity of the embeddings in the from-scratch setting in \Cref{fig:from_scratch_r2}. Notably, the majority of the cases exhibit $R^2 \geq 0.7$. 
\begin{figure}[H] 
  \centering  
  \includegraphics{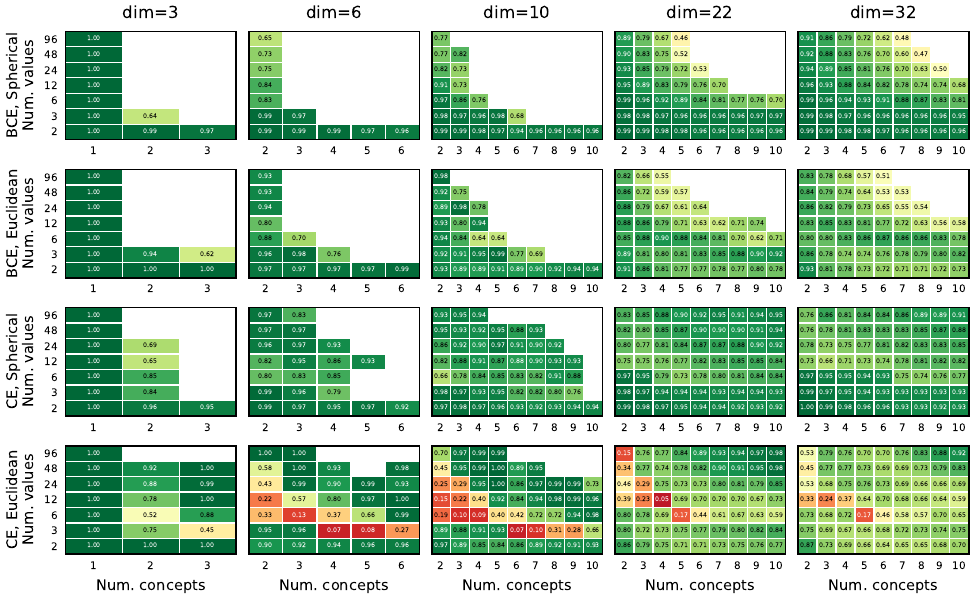}  
  \caption{\textbf{Linear factorization $R^2$ results in from-scratch setting.} We show the linear-factorization $R^2$ when representation space varies and loss type varies. In the majority of the cases, $R^2 \geq 0.7$.} 
  \label{fig:from_scratch_r2}
\end{figure}

\subsubsection{Results: Orthogonality.}
The theory in \Cref{sec:gdce-geometry} predicts orthogonality of \emph{cross-concept} factor differences, while not requiring within-concept directions to be orthogonal. We measure this exactly as in \Cref{app:orho_exps}: factors are recovered by averaging (centered) embeddings per concept value, then we compute absolute-cosine summaries for within-concept similarity ($\mathrm{Orth}(i,i)$) and cross-concept orthogonality ($\mathrm{Orth}(i,j), i\neq j$). In the from-scratch experiments, we observe this pattern consistently across CE/BCE losses and Euclidean/spherical geometries: cross-concept cosine similarity is lower than within-concept cosine similarity, and it decreases as $k$ grows.  This behavior is therefore aligned with the geometric story in the main theory, though still approximate.
  
\begin{figure}[H]    
  \centering 
  \includegraphics{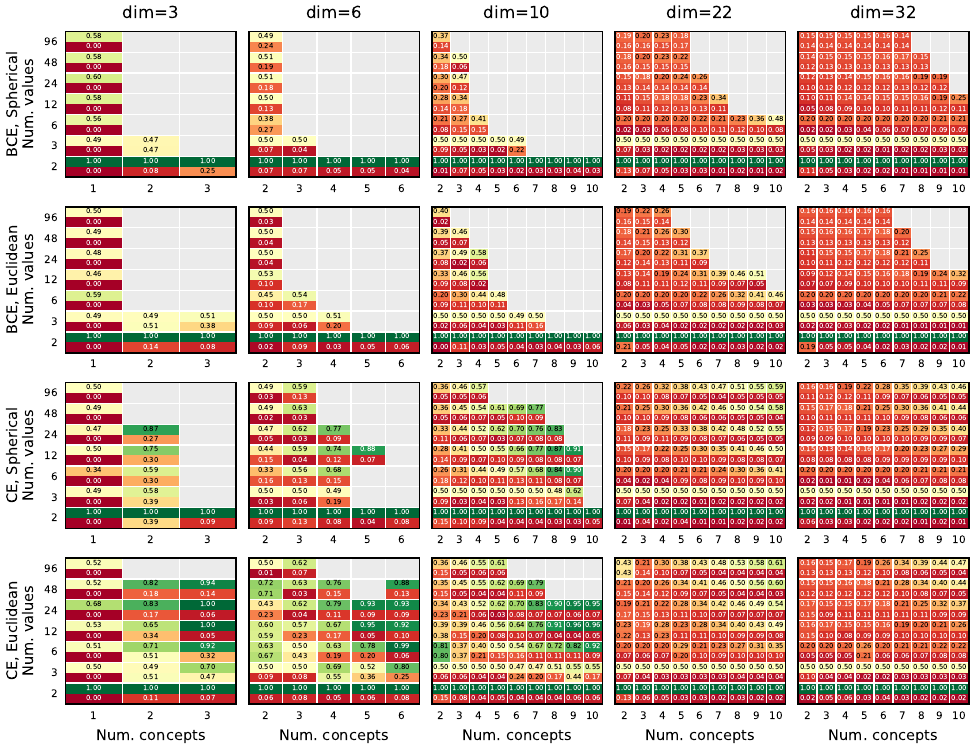}  
  \caption{\textbf{Within-concept and cross-concept cosine similarity of factors in the from-scratch setting.} Metrics are computed as in \Cref{app:orho_exps}. For each $(k,n)$ setting, each cell reports two values: within-concept cosine similarity (top) and cross-concept cosine similarity (bottom). As the number of concepts increases, cross-concept cosine similarity decreases, indicating stronger cross-concept orthogonality.}
  \label{fig:from_scratch_orthogonality}
\end{figure}

\subsubsection{Results: Dimensionality.}
In \Cref{sec:packing-min-dim}, we proved the lower bound $d\ge k$ for linear readout, independent of the number of values $n$. That result is a capacity bound: it states what is necessary in principle, but does not guarantee that a particular training objective reaches the bound. 
 
Here, we estimate the {minimum} dimension needed to reach $\ge 0.99$ per-concept accuracy across concept spaces varying in $k$ and $n$, under CE (CLIP-like training) and BCE (SigLIP-like) losses, and under Euclidean (dot-product) and spherical (cosine-normalized) geometries.
   
The trends in \Cref{fig:from_scratch_dims} are consistent with the theory. CE (CLIP-like) in Euclidean space is typically close to the theoretical minimum ($d\approx k$), whereas BCE (SigLIP-like) generally requires larger dimension (often around $2k$), perhaps intuitively: in BCE each class is trained as an independent one-vs-rest binary problem, so each positive set must be separated from all negatives by its own hyperplane; CE instead uses relative margins across classes and can thus separate more flexibly \cite{10.5555/944790.944813, bangachev2025globalminimizerssigmoidcontrastive}, matching the inference rule with the optimization objective. Spherical geometry shifts required dimensionality upward by roughly one dimension relative to Euclidean settings.  Overall, this supports the theoretical claim that concept count is the primary dimensional driver, while showing that objective affect the attainability of the bound.
   
\begin{figure}[H]    
  \centering  
  \includegraphics{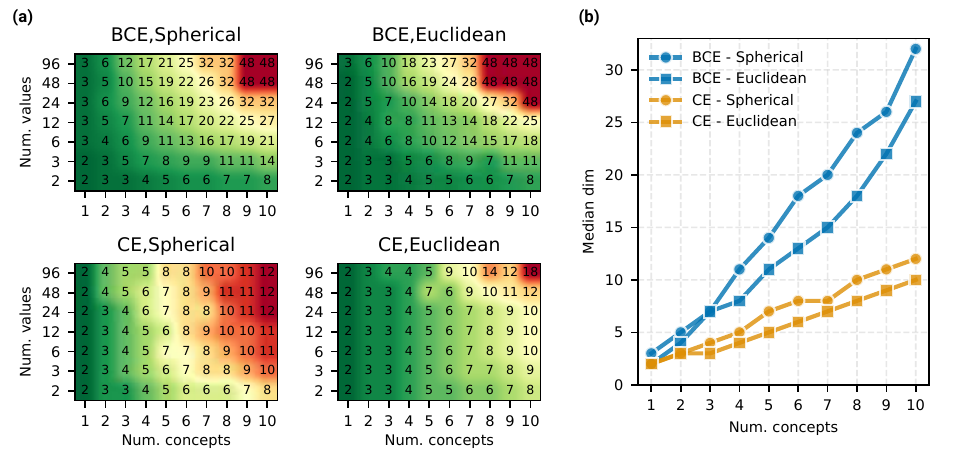}   
  \caption{\small \textbf{Minimum embedding dimensionality across concept spaces, losses, and representation geometries.} We report the smallest dimension $d$ needed to reach $\geq 0.99$ per-concept classification accuracy for concept spaces with $k$ concepts and $n$ values per concept. \textbf{(a)} Required $d$ as both $k$ and $n$ vary. \textbf{(b)} Median required $d$ versus $k$: CE is typically near $d\!\approx\!k$, BCE needs roughly $2k$, and spherical variants require about one additional dimension compared to Euclidean.}
  \label{fig:from_scratch_dims} 
\end{figure}
\clearpage

\fi

\end{document}